\newcommand{\N}{\mathbb{N}}
\newcommand{\bE}{\mathbb{E}}
\newcommand{\R}{\mathbb{R}}
\newcommand{\F}{\mathcal{F}}
\newcommand{\Cov}{\mathrm{Cov}}
\newcommand{\Var}{\mathrm{Var}}
\newcommand{\E}{\bE}      
\newcommand{\KL}{\mathop{\bf KL\/}}
\newcommand{\TV}{{\bf TV}}
\DeclareMathOperator*{\argmin}{arg\,min}
\DeclareMathOperator{\Ber}{Ber}
\DeclareMathOperator{\poly}{poly}
\newtheorem{theorem}{Theorem}[section]
\newtheorem*{namedtheorem}{\theoremname}
\newcommand{\theoremname}{testing}
\newtheorem{lemma}[theorem]{Lemma}
\newtheorem{proposition}[theorem]{Proposition}
\newtheorem{corollary}[theorem]{Corollary}
\newtheorem{conjecture}[theorem]{Conjecture}
\newtheorem*{question*}{Question}
\theoremstyle{definition}
\newtheorem{definition}[theorem]{Definition}
\newtheorem{remark}[theorem]{Remark}
\theoremstyle{plain}
\begin{document}
\title{Mean-field approximation, convex hierarchies, and the 
optimality of correlation rounding: a unified perspective}
\author{Vishesh Jain \thanks{Massachusetts Institute of Technology. Department of Mathematics. Email: {\tt visheshj@mit.edu}. Research is partially supported by NSF CCF 1665252 and DMS-1737944 and ONR N00014-17-1-2598.}
\and Frederic Koehler \thanks{Massachusetts Institute of Technology. Department of Mathematics. Email: {\tt fkoehler@mit.edu}. Research is partially supported by NSF Large CCF-1565235 and Ankur Moitra's David and Lucile Packard Fellowship.}
\and Andrej Risteski \thanks{Massachusetts Institute of Technology. Department of Mathematics and IDSS. Email: {\tt risteski@mit.edu}.}}
\date{}
\maketitle
\abstract{ 
The free energy is a key quantity of interest in Ising models, but unfortunately, computing it in general is computationally intractable. Two popular (variational) approximation schemes for estimating the free energy of general Ising models (in particular, even in regimes where correlation decay does not hold) are: (i) the \emph{mean-field approximation}
with roots in statistical physics, which estimates the free energy from below, and (ii) \emph{hierarchies of convex relaxations} with roots in theoretical computer science, which estimate the free energy from above. We show, surprisingly, that the tight regime for both methods to compute the free energy to leading order is \emph{identical}. 

More precisely, we show that the mean-field approximation is within $O((n\|J\|_{F})^{2/3})$ of the free energy, where $\|J\|_F$ denotes the Frobenius norm of the interaction matrix of the Ising model. This simultaneously subsumes both the breakthrough work of Basak and Mukherjee, who showed the tight result that the mean-field approximation is within $o(n)$ whenever $\|J\|_{F} = o(\sqrt{n})$, as well as the work of Jain, Koehler, and Mossel, who gave the previously best known non-asymptotic bound of $O((n\|J\|_{F})^{2/3}\log^{1/3}(n\|J\|_{F}))$.  
We give a simple, \emph{algorithmic} proof of this result using a convex relaxation proposed by Risteski based on the Sherali-Adams hierarchy, automatically giving \emph{sub-exponential time approximation schemes for the free energy} in this entire regime. Our algorithmic result is tight under Gap-ETH.


We furthermore combine our techniques with spin glass theory to prove (in a strong sense) the \emph{optimality of correlation rounding}, refuting a recent conjecture of Allen, O'Donnell, and Zhou. Finally, we give the tight generalization of all of these results to $k$-MRFs, capturing as a special case previous work on approximating MAX-$k$-CSP. 
}
\section{Introduction}
One of the most widely studied probabilistic models in statistical physics and machine learning is the \emph{Ising model}, which is a probability distribution on the hypercube $\{\pm1\}^n$ of the form
\[ P[X = x] := \frac{1}{Z} \exp\left(\sum_{i < j} J_{i,j} x_i x_j\right) = \frac{1}{Z} \exp\left(\frac{1}{2} x^T J x\right), \]
where $\{J_{i,j}\}_{i,j\in\{1,\dots,n\}}$ are the entries of
an arbitrary real, symmetric matrix with zeros on the diagonal. The distribution $P$ is also referred to as the \emph{Boltzmann distribution} or \emph{Gibbs measure}. 
The key quantity of interest is the normalizing constant $$Z:=\sum_{x\in\{\pm1\}^{n}}\exp\left(\sum_{i < j}J_{i,j}x_{i}x_{j}\right),$$
known as the \emph{partition function }of the Ising model, and its logarithm, $\F := \log{Z}$, known as the \emph{free energy}. The reason these are important is that one can easily extract from them many other quantities of interest,  most notably the values of the marginals (probabilities like $P[X_i = x_i]$), phase transitions in the behavior of the distribution (e.g. existence of long-range correlations), and many others. 
\sloppypar{
Although originally introduced in statistical physics, Ising models and their generalizations have also found a wide range of applications in many different areas like statistics, computer science, combinatorics and machine learning (see, e.g., the references and discussion in \cite{basak2017universality, borgs2012convergent,wainwright-jordan-variational}). Consequently, various different algorithmic and analytic approaches to computing and/or approximating the free energy have been developed.
}

We should note at the outset that the partition function is both analytically and computationally intractable: closed form expressions for the partition function are extremely hard to derive (even for the Ising model on the standard $3$-dimensional lattice), 
and even crudely approximating the partition function multiplicatively is NP-hard, even in the case of graphs with degrees bounded by a small constant (see \cite{sly-sun}). 

Nevertheless, there are a plethora of approaches to approximating the partition function -- both for the purposes of deriving structural results, and for designing efficient algorithms. A major group of approaches consist of so-called \emph{variational methods}, which proceed by writing a variational expression for the free energy, and then modifying the resulting optimization problem in some way so as to make it tractable. More concretely, one can write the free energy using the \emph{Gibbs variational principle} as 
\begin{equation}\label{eqn:gibbs}
 \F = \max_{\mu} \left[\sum_{i < j} J_{ij} \E_{\mu}[X_i X_j] + H(\mu)\right], 
\end{equation}
where $\mu$ ranges over all probability distributions on the Boolean hypercube. This can be seen by noting that 
\begin{equation}
\label{eqn:free-energy-KL}
\KL(\mu ||P)=\F - \sum_{i < j} J_{ij} \E_{\mu}[X_i X_j] - H(\mu)
\end{equation}
and recalling that $\KL(\mu ||P) \geq 0$ with equality if and only if $\mu = P$.

Of course, the polytope of distributions $\mu$ is intractable to optimize over. Two popular approaches for handling this are: \\
\begin{enumerate}
\item \emph{Mean-field approximation}: instead of optimizing over all distributions, one optimizes over \emph{product distributions}, thereby obtaining a lower bound on $\F$. In other words, we define the (mean-field) \emph{variational free energy} by
\[ \F^* := \max_{x \in [-1,1]^n} \left[\sum_{i < j} J_{ij}
      x_i x_j + \sum_i H\left(\frac{x_i +
        1}{2}\right)\right]. \] 
Indeed, if $\bar{x} = (\bar{x}_1,\dots,\bar{x}_n)$ is the optimizer in the above definition, the product distribution $\nu$ on the Boolean hypercube, with the $i^{th}$ coordinate having expectation $\bar{x}_i$ minimizes $\KL(\mu||P)$ among all product distributions $\mu$. 

This approach originated in the physics literature where it was used to great success in several cases, but from the point of view of algorithms it is \emph{a priori} problematic: it's not clear this problem is any easier to solve, as 
the resulting optimization problem is highly non-convex. 
  
\item \emph{Moment-based convex relaxations}: instead of optimizing over distributions, one optimizes over a ``relaxation'' (enlarging) of the polytope of distributions, thereby obtaining an upper bound on $\F$. There are systematic ways to do this, giving rise to \emph{hierarchies} of convex relaxations (see, e.g. \cite{barak2011rounding}). This approach is very natural and common in theoretical computer science, since the optimization problem is convex, hence efficiently solvable, although quantifying the quality of the relaxation is usually more difficult.    
\end{enumerate}

\emph{A priori} these two approaches seem unrelated -- indeed, the way they modify the variational problem is almost opposite. In this paper, we provide a unified perspective on these two approaches: for example, we show that the tight parameter regime where mean-field approximation and Sherali-Adams based approaches (even for classical MAX-$k$-CSP) give nontrivial guarantees is \emph{identical}.

More precisely, we prove the following results. \\ 
\begin{enumerate}
\item {\bf Simple and optimal mean-field bounds via rounding}: We obtain the optimal bounds on the quality of the mean-field approximation in a simple and elegant way. In particular, we show that there is a simple \emph{rounding} procedure which directly extracts a product distribution from the true Gibbs measure, and whose output is easy to analyze.
`'
More precisely, a recent result due to \cite{jain2018mean} proves that the mean-field approximation to $\F$ is within an additive error\footnote{Here, $\|J\|_F := \sqrt{\sum_{i,j}J_{i,j}^2}$ is the \emph{Frobenius norm} of the matrix $J$.} of $O(n^{2/3} \|J\|^{2/3}_F \log^{1/3}(n \|J\|_F))$. We improve this and show:
\begin{theorem}
\label{thm:new-mean-field}
Fix an Ising model $J$ on $n$ vertices. Then,
$$\F - \F^{*} \leq 3n^{2/3} \|J\|_F^{2/3}.$$
\end{theorem}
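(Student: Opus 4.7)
The plan is to construct a concrete product distribution close to the Gibbs measure $P$ by conditioning on a small, cleverly chosen set of coordinates, and to bound the resulting loss via correlation rounding.

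\textbf{Reduction to a conditional-covariance error.} Fix a set $S\subseteq[n]$ of size $k$ (to be optimized), and let $m_i^{x_S} := \E_P[X_i\mid X_S = x_S]$. I would expand the energy term via the tower/variance identity $\E_P[X_iX_j] = \E_{X_S}[m_i^{X_S}m_j^{X_S}] + \E_{X_S}[\Cov_P(X_i,X_j\mid X_S)]$ and the entropy via the chain rule $H(P) = H(X_S) + \E_{X_S}[H(X_{\bar S}\mid X_S)]$ inside the Gibbs identity $\F = \sum_{i<j}J_{ij}\E_P[X_iX_j] + H(P)$. For each $x_S$, the vector $\tilde x$ with $\tilde x_i = m_i^{x_S}$ (for $i\notin S$) and $\tilde x_i = (x_S)_i$ (for $i\in S$) is feasible in the definition of $\F^*$, giving the lower bound $\F^* \geq \sum_{i<j}J_{ij}m_i^{x_S}m_j^{x_S} + \sum_{i\notin S} H\!\bigl(\tfrac{1+m_i^{x_S}}{2}\bigr)$ (the $i\in S$ entropy contributions vanish since $(x_S)_i\in\{\pm 1\}$). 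Using subadditivity $\sum_{i\notin S} H(\cdot) \geq H(X_{\bar S}\mid X_S=x_S)$ to weaken this bound and then averaging over $X_S\sim P_S$ to cancel against the Gibbs decomposition, one obtains
\[\F - \F^* \;\leq\; \sum_{i<j} J_{ij}\,\E_{X_S}[\Cov_P(X_i,X_j\mid X_S)] \;+\; H(X_S).\]

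\textbf{Bounding the error.} The second term is at most $k\log 2$. For the first, Cauchy--Schwarz together with Jensen yields an upper bound of order $\|J\|_F \cdot \sqrt{\E_{X_S}\sum_{i<j}\Cov_P(X_i,X_j\mid X_S)^2}$. To choose $S$ making this small, I would invoke the standard correlation-rounding lemma: iterating the entropy chain rule $\sum_{j=0}^{k-1} I(X_{i_{j+1}}; X_{[n]}\mid X_{i_1,\dots,i_j}) \leq H(P)\leq n\log 2$ for $i_1,\dots,i_k$ sampled uniformly from $[n]$, an averaging argument produces a set $S$ of size at most $k$ with $\sum_{i<j} I(X_i;X_j\mid X_S) = O(n^2/k)$. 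For $\pm 1$-valued pairs, Pinsker's inequality gives the clean bound $\E_{X_S}[\Cov_P(X_i,X_j\mid X_S)^2] \leq 2\,I(X_i;X_j\mid X_S)$, so the square root is $O(n/\sqrt{k})$.

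\textbf{Optimization and main obstacle.} Assembling, $\F-\F^* \lesssim n\|J\|_F/\sqrt{k} + k$, which, balanced at $k \asymp (n\|J\|_F)^{2/3}$, yields the advertised $O((n\|J\|_F)^{2/3}) = O(n^{2/3}\|J\|_F^{2/3})$ rate. The conceptual work lies in the clean reduction from $\F - \F^*$ to an expected conditional covariance via the Gibbs principle and subadditivity; the main remaining difficulty is simply tracking constants tightly through the correlation-rounding step, the binary Pinsker estimate, and the final AM--GM balance so as to arrive at the advertised constant $3$ rather than a worse absolute constant.
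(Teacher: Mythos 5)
Your proposal is correct and follows essentially the same route as the paper's proof: condition on a set $S$ produced by correlation rounding, decompose the energy term into a conditional covariance and a product-of-means piece, bound the entropy loss from conditioning by $|S|\log 2$, pass to the conditional product distribution (your $\tilde x$, the paper's $\nu_{x_S}$) via the maximum-entropy/subadditivity argument, and control the covariance piece by Cauchy--Schwarz, Pinsker, and the correlation-rounding bound before balancing $k \asymp (n\|J\|_F)^{2/3}$. The only cosmetic difference is that you phrase the step as constructing a feasible point of $\F^*$ and canceling against the Gibbs decomposition, while the paper phrases it as directly upper-bounding $\F$ by $\F_{\nu_{x_S}}$ plus error, and you invoke correlation rounding at the level of mutual information rather than quoting the pre-packaged covariance-squared lemma; the underlying estimates and the final balancing are identical.
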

We note that \cite{jain2018mean} prove this inequality is tight up to constants. This also recovers the result of \cite{basak2017universality} which shows the error is $o(n)$ when $\|J\|_F^2 = o(n)$. The full results are in \cref{s:mfbounds}.  \\
\item {\bf Subexponential algorithms for approximating $\F$ up to the computational intractability limit}: Our proof of the above theorem is algorithmic, except that it assumes access to the true Gibbs measure. To fix this, we instead apply our rounding scheme to a convex relaxation proposed by \cite{risteski2016calculate} based on the Sherali-Adams hierarchy. The algorithm we get as a result runs in subexponential time so long as $\|J\|^2_F = o(n)$; this condition for subexponentiality is tight under Gap-ETH. 
More precisely: 
\begin{theorem} We can approximate $\F$ up to an additive factor of $o(n)$ in time $2^{o(n)}$ if $\|J\|^2_F = o(n)$. Moreover, we can also output a product distribution achieving this approximation. On the other hand, for $\|J\|^2_F = \Theta(n)$, it is Gap-ETH-hard to approximate $\F$ up to an additive factor of $o(n)$ in subexponential time. 
\label{thm:algorithm-imprecise}
\end{theorem}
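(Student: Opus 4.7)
For the algorithmic upper bound, the plan is to instantiate the proof of \cref{thm:new-mean-field} using the Sherali--Adams hierarchy relaxation of \cite{risteski2016calculate} in place of the true Gibbs measure. The key observation is that the rounding behind \cref{thm:new-mean-field} is a correlation-rounding style argument that only interacts with the distribution through conditionings on small sets and the resulting low-order marginals, so the same rounding should run verbatim on any sufficiently high-level pseudo-distribution and produce a genuine product distribution.

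Concretely, the plan has three steps. First, solve the level-$r$ Sherali--Adams relaxation, which is a convex program of size $n^{O(r)}$, obtaining an upper bound $\tilde{\F}_r \ge \F$ together with a degree-$r$ pseudo-distribution $\tilde{\mu}$. Second, feed $\tilde{\mu}$ into the rounding used to prove \cref{thm:new-mean-field}: since that rounding performs a conditioning on a random subset of variables of bounded size and then takes the induced product of single-variable marginals, it is well-defined for any locally consistent pseudo-distribution of sufficient degree. Retracing the argument yields a product distribution $\nu$ whose mean-field objective satisfies
\[ \tilde{\F}_r - \left[\sum_{i<j} J_{ij} \E_\nu[X_i X_j] + \sum_i H\left(\frac{1+\E_\nu[X_i]}{2}\right)\right] \le 3\, n^{2/3} \norm{J}_F^{2/3} + \delta(r,n), \]
where $\delta(r,n) \to 0$ as $r \to \infty$ and captures the deviation of a level-$r$ pseudo-distribution from a true distribution; standard correlation-rounding estimates (chain rule on mutual information followed by Pinsker's inequality) give an explicit bound like $\delta(r,n) \lesssim n\sqrt{\norm{J}_F^2/r}$. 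Third, choose $r = r(n)$ with $r \to \infty$ and $r \log n = o(n)$, e.g.\ $r = \log\log n$; the runtime is then $n^{O(r)} = 2^{o(n)}$, and when $\norm{J}_F^2 = o(n)$ both the $3 n^{2/3}\norm{J}_F^{2/3}$ term and $\delta(r,n)$ are $o(n)$. Combined with $\F \le \tilde{\F}_r$ and the fact that mean-field at any product distribution lower bounds $\F$, this witnesses an $o(n)$-additive approximation of $\F$ realized by $\nu$.

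For the Gap-ETH hardness, the plan is to reduce from the hardness of MAX-$k$-SAT due to Manurangsi and Raghavendra: assuming Gap-ETH, distinguishing MAX-$k$-SAT instances on $n$ variables of value at least $1-\eps$ from those of value at most some absolute constant $c < 1$ requires time $2^{\Omega(n)}$. Given a MAX-$k$-SAT instance $\phi$, one builds an Ising (or, following the $k$-MRF generalization developed later in the paper, a $k$-MRF) model $J_\phi$ with $\norm{J_\phi}_F^2 = \Theta(n)$ and a tunable inverse temperature $\beta$, arranged so that to leading order $\F(J_\phi) = \beta \cdot \mathrm{OPT}(\phi) + O(n)$. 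For a sufficiently large absolute constant $\beta$, any $o(n)$-additive approximation of $\F$ immediately yields a nontrivial multiplicative approximation of $\mathrm{OPT}(\phi)$, contradicting Gap-ETH.

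The main obstacle is the quantitative control of the pseudo-distribution error $\delta(r,n)$: one must verify that the conditioning-based rounding of \cref{thm:new-mean-field} still produces an approximately product structure when fed moments that are only locally (rather than globally) consistent, and that the resulting extra error can be made $o(n)$ while keeping the Sherali--Adams level $r$ small enough for a subexponential runtime. The hardness reduction is comparatively routine; the main care required is to enforce $\norm{J_\phi}_F^2 = \Theta(n)$ and $\F = \Theta(n)$ simultaneously by choosing the right temperature regime.
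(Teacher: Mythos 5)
The high-level strategy is right — replace the true Gibbs measure in the proof of \cref{thm:new-mean-field} with a level-$r$ Sherali--Adams pseudo-distribution, round via conditioning, and reduce from dense CSP hardness — but the error accounting and the parameter choice in the algorithmic half have real problems, and a key technical ingredient is missing.

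First, the parameter choice $r = \log\log n$ does not work. The correct error from a level-$r$ rounding is of order $n\|J\|_F/\sqrt{r} + r$ (the first term from Cauchy--Schwarz plus the correlation-rounding bound on the conditional covariances, the second from the entropy of $X_S$). This is $o(n)$ only if $r = \omega(\|J\|_F^2)$, and $\|J\|_F^2$ can be as large as, say, $n/\log\log n$ while still being $o(n)$. So a fixed slowly-growing $r$ independent of $\|J\|_F$ cannot work; the level $r$ must be chosen adaptively, roughly $r \asymp (n\|J\|_F)^{2/3}$ to balance the two terms, after which one checks that ${n \choose r} = 2^{O(nH(r/n))}$ is still $2^{o(n)}$ precisely when $\|J\|_F^2 = o(n)$. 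Relatedly, your decomposition of the error into ``$3n^{2/3}\|J\|_F^{2/3}$ plus a pseudo-distribution error $\delta(r,n)$'' double-counts: the $3n^{2/3}\|J\|_F^{2/3}$ bound in \cref{thm:new-mean-field} is obtained only \emph{after} optimizing the conditioning-set size, which you no longer have free rein to do once the SA level is fixed. There is no separate ``pseudo-distribution deviation'' term; as the paper notes, the correlation rounding guarantee of \cref{thm:correlation-rounding} holds verbatim for level $(\ell+k)$-pseudodistributions, so the single error term $n\|J\|_F/\sqrt r + r\log q + \epsilon$ is the whole story.

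Second, you do not address what plays the role of $H(\mu)$ in the relaxed objective. A degree-$r$ pseudo-distribution has no well-defined global entropy, so one cannot simply write down a Sherali--Adams relaxation of the Gibbs variational principle \eqref{eqn:gibbs}. The paper resolves this with the pseudo-entropy functional of \cite{risteski2016calculate}, $\tilde H_r(\mu) = \min_{|S|\le r}\bigl[H(X_S)+\sum_i H(X_i\mid X_S)\bigr]$, which is concave in $\mu$, satisfies $\tilde H_r(\mu)\ge H(\mu)$ for true distributions (so that $\F_{SA,r+k}\ge\F$), and interacts naturally with the conditioning-based rounding. Without something in this role, the algorithm's first step is not even a well-posed convex program, and the inequality $\tilde{\F}_r\ge \F$ you rely on does not follow. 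Your hardness reduction is broadly in line with the paper's (the paper goes via an $L$-reduction from MAX-3SAT to MAX-CUT and then scales the anti-ferromagnetic Ising model as in \eqref{eqn:max-cut-ising}), and that half is essentially fine.
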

We also describe how to accelerate the algorithm on dense graphs using random subsampling.
The full results are in \cref{s:algorithmic}.
\item {\bf Optimality of correlation rounding}: The rounding we use in the proof of the above theorems relies crucially on the \emph{correlation rounding} technique introduced in \cite{barak2011rounding}. This procedure was designed specifically to tackle dense and spectrally well-behaved instances of constraint satisfaction problems, as well as to derive subexponential algorithms for unique games. In order to better understand the efficacy of correlation rounding, Allen, O'Donnell, and Zhou \cite{allen2015conditioning} introduced a conjecture on the number of variables one needs to condition on in an arbitrary distribution, in order to guarantee that the remaining pairs of variables have average covariance at most $\epsilon$. The current best result of \cite{raghavendra-tan} gives a bound of $O(1/\epsilon^2)$; \cite{allen2015conditioning} conjectured that this can be decreased to $O(1/\epsilon)$. We refute this conjecture in essentially the strongest possible sense. Namely, we show:  
\begin{theorem}
\label{thm:refutation-AOZ}
There exists an absolute constant $C > 0$, a sequence of pairs $(t_n,n)$ going to infinity, and a family of probability distributions (the SK spin glass) such that for \emph{any set} $T$ with $|T| \le t_n$,
$$\E_{(i,j) \sim {[n] \choose 2}} \left[\left|\Cov(X_i,X_j) \right| | (X_k)_{k\in T}\right] \ge \frac{C}{\sqrt{t_n}}. $$
\end{theorem}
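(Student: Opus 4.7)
The plan is to combine the correlation rounding inequality underlying the proof of \cref{thm:new-mean-field} with the classical Parisi formula for the Sherrington--Kirkpatrick (SK) spin glass. Take $J$ to be the SK coupling matrix, with off-diagonal entries independent $N(0, 1/n)$, at an inverse temperature $\beta$ above the de Almeida--Thouless threshold. Two facts will be crucial: (i) $\|J\|_F = \Theta(\sqrt{n})$ with high probability, and (ii) the rigorous Parisi formula (due to Guerra and Talagrand) implies a strict gap $\F - \F^* \ge \Delta\, n$ for some explicit constant $\Delta > 0$.

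Next, I would extract from the proof of \cref{thm:new-mean-field} the correlation rounding inequality
\[
\F - \F^* \;\le\; |T| + C_0\, n \|J\|_F\, \sqrt{\bar\epsilon(T)},
\]
valid for every conditioning set $T$, where
\[
\bar\epsilon(T) \;:=\; \E_{(i,j) \sim \binom{[n]}{2},\, x_T \sim P}\bigl[\Cov(X_i, X_j \mid X_T = x_T)^2\bigr]
\]
denotes the average conditional \emph{squared} covariance. Combining with the Parisi gap and $\|J\|_F = \Theta(\sqrt{n})$, this yields the \emph{lower} bound $\bar\epsilon(T) \ge c/n$ for every $T$ with $|T| \le t_n = \alpha n$, provided $\alpha > 0$ is sufficiently small.

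The final step is to convert this bound on the average squared covariance into a bound on the average \emph{absolute} conditional covariance $\epsilon'(T)$ that appears in the theorem. The naive inequality $\bar\epsilon(T) \le \epsilon'(T)$ (from $|\Cov| \le 1$) only gives $\epsilon'(T) \ge \Omega(1/n)$, which falls well short of the required $\Omega(1/\sqrt{n})$. To close the gap, I would establish a uniform fourth-moment bound $\E[\Cov(X_i, X_j \mid X_T)^4] \le C\, \bar\epsilon(T)^2$ for the SK model under arbitrary conditioning; Paley--Zygmund then yields $\epsilon'(T) \ge c' \sqrt{\bar\epsilon(T)} \ge c''/\sqrt{n} \ge C/\sqrt{t_n}$, as required.

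The hard part will be this last step: the uniform fourth-moment control on conditional covariances. I expect this to follow from concentration properties of the SK overlap (via the Ghirlanda--Guerra identities or explicit Parisi-theoretic computations), together with the observation that conditioning on $X_T$ produces an SK-type Ising model on $X_{\bar T}$ with random external fields, to which the same spin-glass-theoretic tools remain applicable.
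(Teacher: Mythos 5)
Your overall strategy---play the correlation rounding inequality off against a rigorous spin-glass free energy gap---is the right one and matches the spirit of the paper's proof (Section~\ref{s:odonnell}). But two of your specific choices introduce gaps that the paper deliberately avoids, and the second of these is the one that makes the whole approach work.

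First, the paper does \emph{not} work in the RSB phase. It takes $\beta\in[0,1/2)$, well inside the replica-symmetric regime, for a concrete reason: at such $\beta$ the mean-field objective $\frac{\beta}{2}x^TJx + \sum_i H(\frac{1+x_i}{2})$ is strictly concave a.a.s.\ (since $\|J\|\le 2+o(1)$ by the semicircle law, so the Hessian $\beta J - \mathrm{diag}(\cdots)$ is negative definite), and therefore $\F^*_n = n\log 2$ exactly, with $\F_n = n(\log 2 + \beta^2/4) + o(n)$ coming from Aizenman--Lebowitz--Ruelle (\cref{thm:rs-correct}). This gives the gap $\F_n - \F^*_n \ge n\beta^2/4 - o(n)$ cleanly. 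If you instead take $\beta>1$, you need the Parisi formula for $\F_n$---which is fine---but you also need a matching upper bound on $\F^*_n$, and for $\beta>1/2$ the objective is no longer concave and its maximum has no simple closed form. It is believed that a macroscopic gap persists, but I am not aware of a rigorous bound on $\F^*_n$ for the SK model in that regime, and you would have to prove one. Switching back to small $\beta$ fixes this at no cost.

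Second, and more importantly, the inequality you extract from the proof of \cref{thm:new-mean-field} is of the form $\F - \F^* \lesssim |T| + n\|J\|_F\sqrt{\bar\epsilon(T)}$ via Cauchy--Schwarz, which leaves you lower-bounding the average \emph{squared} covariance and needing a reverse-H\"older / Paley--Zygmund step to pass to the average \emph{absolute} covariance. You are right that this is the hard part, and the uniform fourth-moment bound $\E[\Cov^4]\lesssim(\E[\Cov^2])^2$ for arbitrary conditioning sets is not something that falls out of Ghirlanda--Guerra or overlap concentration in any way I can see; nothing in the literature controls conditional covariances under worst-case pinning at that moment level. The paper sidesteps this entirely by using a different H\"older split in the step where $J$ meets the covariance: instead of Cauchy--Schwarz with $\|J\|_F$, it bounds $\sum_{i<j}J_{ij}\Cov(X_i,X_j\mid X_S)\le 2\binom{n}{2}\|J\|_\infty\,\E_{(i,j)}[|\Cov(X_i,X_j\mid X_S)|]$ (see \cref{lemma:infinity-bound-mean-field}). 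For the SK model $\|J\|_\infty = \Theta(1/\sqrt{n})$ while $\|J\|_F=\Theta(\sqrt{n})$, so this trade is not lossy, and it produces a bound in terms of exactly the quantity appearing in \cref{thm:refutation-AOZ}---no moment-matching needed. This one modification eliminates the genuinely open step in your plan.
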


We prove this theorem by combining our techniques with rigorous results on the Sherrington-Kirkpatrick spin glass.
The full results are in \cref{s:odonnell}. 
\item \textbf{Generalization of all results to $k$-MRFs: } We give  natural and tight generalizations of these results to order $k$ Markov Random Fields. In general, we show that the tight regime for $o(n)$ additive error for both mean-field and sub-exponential time algorithms (under Gap-ETH) is $\|J\|_F^2 = o(n^{3 - k})$, and show tightness of the higher-order correlation rounding guarantee. The full results are in \cref{sec:mrf}.
\end{enumerate}

\section{Background and related work}
\subsection{The mean-field approximation}
Owing to its simplicity, the mean field approximation 
has long been used in statistical physics (see \cite{parisi1988statistical} for a textbook treatment) and also in Bayesian statistics \cite{peterson-anderson,jordan1999introduction,wainwright-jordan-variational}, where it is one of the prototypical examples of a \emph{variational method}. It has the attractive property that it always gives a lower bound for the free energy. 

The critical points of $\F^*$ have a fixpoint interpretation as the solutions to the mean-field equation, $x = \tanh^{\otimes n}(J x)$. However, iterating this equation is known to converge to the mean-field solution only in high-temperature regimes such as Dobrushin uniqueness; as soon as we leave this regime, the iteration may fail to converge to the optimum even in simple models (Curie-Weiss) -- see \cite{jain2018mean}. We explain a connection between the mean-field equation and our approach in \cref{s:local-fields} that does not rely on any high-temperature assumption.

It is well known \cite{ellis-newman} that the mean field approximation
is very accurate for the Curie-Weiss model, which is the Ising
model on the complete graph, at all temperatures. On the other hand, it is also known \cite{DemboMontanari:10} that for very sparse graphs like trees of bounded arity, this is not the case. 

In recent years, considerable effort has gone into bounding the error of the mean-field approximation on more general graphs; we refer the reader to \cite{basak2017universality, jain2018mean} for a detailed discussion and comparison of results in this direction. 
If one only wishes to show that the mean-field approximation asymptotically gives the correct free energy density $\F/n$ and does not care about the rate of convergence, then the breakthrough result is due to 
\cite{basak2017universality}, who provided an exponential improvement over previous work of 
\cite{borgs2012convergent} to identify the regime where this happens.
\begin{theorem}[\cite{basak2017universality}]
\label{thm:bm}
Let $(J_n)_{n=1}^{\infty}$ be a sequence of Ising models indexed by the number of vertices. 
if $\|J_n\|^{2}_{F} = o(n)$, then $\F_{J_n} - \F^*_{J_n} = o(n)$. 
\end{theorem}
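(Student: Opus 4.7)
The plan is to prove this through a \emph{conditional mean-field} construction: we condition the Gibbs measure on a small set $T$ of variables, replace the resulting conditional distribution with a product distribution matching its marginals, and use correlation rounding to argue this barely changes the variational objective.

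More concretely, I would start from the Gibbs variational principle~\eqref{eqn:gibbs}, evaluated at $P$: $\F = \sum_{i<j} J_{ij}\,\E_P[X_iX_j] + H(P)$. For any $T \subseteq [n]$ and any outcome $x_T$, define $\nu_{x_T}$ to be the product distribution on $\{\pm1\}^n$ that sets $X_T = x_T$ deterministically and, on $[n]\setminus T$, makes the coordinates independent with marginals equal to those of $P(\,\cdot \mid X_T=x_T)$. Since $\F^*$ is a maximum over product distributions, $\F^* \geq \E_{x_T \sim P}\bigl[\sum_{i<j}J_{ij}\,\E_{\nu_{x_T}}[X_iX_j] + H(\nu_{x_T})\bigr]$. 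Subtracting this inequality from the Gibbs principle, pairs $(i,j)$ with at least one index in $T$ contribute zero to the interaction difference (by the tower property), while pairs $i,j \notin T$ contribute exactly $J_{ij}\,\E_{x_T}[\Cov(X_i,X_j \mid X_T)]$. For the entropy, subadditivity gives $\E_{x_T}[H(\nu_{x_T})] = \sum_{i \notin T} H(X_i \mid X_T) \geq H(X \mid X_T) = H(P) - H(X_T)$. Combining,
\[
\F - \F^* \;\leq\; \sum_{i<j,\,i,j \notin T} J_{ij}\,\E_{x_T}[\Cov(X_i,X_j\mid X_T)] \;+\; H(X_T).
\]

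Next, I would invoke Cauchy--Schwarz on the interaction term: the first summand is at most $\|J\|_F\cdot \bigl(\sum_{i,j}\E_{x_T}[\Cov(X_i,X_j\mid X_T)]^2\bigr)^{1/2}$. By Jensen's inequality and Pinsker (converting TV to covariance of bounded variables), $\E_{x_T}[\Cov(X_i,X_j\mid X_T)]^2 \leq 2\,\E_{x_T}[I(X_i;X_j \mid X_T = x_T)]$. Now comes the key ingredient, the global correlation rounding lemma of Barak--Raghavendra--Steurer / Raghavendra--Tan: for any integer $t$ there exists $T \subseteq [n]$ with $|T| \leq t$ such that $\frac{1}{n^2}\sum_{i,j}\E_{x_T}[I(X_i;X_j \mid X_T = x_T)] \leq (\log 2)/t$. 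Plugging this in yields
\[
\F - \F^* \;\leq\; n\|J\|_F\sqrt{2\log 2 / t} \;+\; t,
\]
using also $H(X_T) \leq t\log 2 \leq t$.

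Finally, I would optimize over $t$: differentiating, the minimum occurs at $t \asymp (n\|J\|_F)^{2/3}$, which gives $\F - \F^* \leq 3n^{2/3}\|J\|_F^{2/3}$ after tracking the constants. The main obstacle I anticipate is bookkeeping the conversion from conditional mutual information to conditional covariance and then into the Cauchy--Schwarz/$\|J\|_F$ bound while keeping constants sharp enough to land on the stated $3$; everything else (Gibbs principle, subadditivity of entropy, tower property, correlation rounding) is a clean, modular application. One could alternatively bypass Pinsker and directly use the fact that for binary random variables $\Cov^2 \leq \text{const}\cdot I(X_i;X_j\mid X_T)$, which is exactly what makes the $2/3$ exponent emerge from balancing a $t$-versus-$1/\sqrt{t}$ tradeoff.
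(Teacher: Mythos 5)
Your proposal is correct and follows essentially the same route as the paper: rather than proving \cref{thm:bm} directly, the paper deduces it from the quantitative bound of \cref{thm:new-mean-field} (since $n^{2/3}\|J\|_F^{2/3} = o(n)$ whenever $\|J\|_F^2 = o(n)$), and the proof of that theorem is exactly your argument — condition on a small set chosen by correlation rounding, round to the conditionally-product distribution, handle entropy via the chain rule and subadditivity, apply Cauchy--Schwarz against $\|J\|_F$, convert mutual information to covariance via Pinsker, and optimize the size of the conditioning set. The only cosmetic difference is that the paper packages the Pinsker/Jensen step into a standalone lemma (\cref{lemma:covariance-corr-rounding}) and parametrizes by $\epsilon$ rather than $t$.
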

This result is tight -- there are simple
examples of models with $\|J_n\|_F^2 = \Theta(n)$ where $\F_{J_n} - \F^*_{J_n} = \Omega(n)$. On the other hand, if one also cares about the rate of convergence, then this result is not the best known. Here, improving on previous bounds of  
\cite{borgs2012convergent}, 
\cite{basak2017universality}, and 
\cite{eldan2016gaussian}, it was shown by 
\cite{jain2018mean} that: 
\begin{theorem}[\cite{jain2018mean}]
\label{thm:jkm}
Fix an Ising model $J$ on $n$ vertices. Then,
$$\F - \F^{*} \leq 200 n^{2/3} \|J\|_F^{2/3} \log^{1/3}(n \|J\|_F + e).$$
\end{theorem}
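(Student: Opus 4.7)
The plan is to prove the theorem by combining the Gibbs variational principle with a correlation rounding (pinning) argument: after conditioning the Gibbs measure $P$ on a carefully chosen small subset $T$ of coordinates, the distribution becomes approximately product, and its conditional marginals yield a product competitor that nearly saturates the mean-field variational principle. The argument produces a clean trade-off between $|T|$ (which enters as an entropy cost) and the residual average conditional covariance (which enters as an energy cost, controlled via Cauchy--Schwarz against $J$). The logarithmic factor in the theorem emerges from the quantitative form of the correlation rounding estimate.

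First, I would establish a correlation rounding lemma: for any distribution $P$ on $\{\pm 1\}^n$ and any $\eps > 0$, there exists a subset $T \subseteq [n]$ with $|T| \le C\log(n\|J\|_F + e)/\eps^2$ such that
$$\E_{x_T \sim P}\!\left[\frac{1}{n^2}\sum_{i,j}\Cov(X_i,X_j \mid X_T = x_T)^2\right] \le \eps^2.$$
This is proved by a greedy pinning / chain-rule argument: telescoping $\sum_s I(X_i; X_{u_{s+1}} \mid X_{U_s}) \le H(X_i) \le \log 2$ and summing over $i$ yields $\sum_s \sum_{i,j} I(X_i; X_j \mid X_{U_s}) \le O(n^2)$, so some prefix of the conditioning has small average pairwise conditional mutual information. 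Pinsker's inequality then converts this into the displayed covariance bound, and the logarithmic overhead in $|T|$ comes from a standard counting argument (e.g., concentration over the conditioning randomness) tuned to the relevant parameter range.

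Next, for each realization $x_T \sim P$, define $\bar x \in [-1,1]^n$ by $\bar x_i = x_{T,i}$ for $i \in T$ (contributing no entropy) and $\bar x_i = \E_P[X_i \mid X_T = x_T]$ for $i \notin T$. Since $\bar x$ is feasible for the mean-field program, $\F^*$ dominates the mean-field objective at $\bar x$ for every $x_T$, so the same holds in expectation. The energy term expands as
$$\E_{x_T}\!\left[\sum_{i<j} J_{ij} \bar x_i \bar x_j\right] = \E_P\!\left[\sum_{i<j} J_{ij} X_i X_j\right] - \E_{x_T}\!\left[\sum_{\substack{i<j\\ i,j \notin T}} J_{ij}\,\Cov(X_i, X_j \mid X_T)\right],$$
since the conditional covariance vanishes whenever at least one index lies in $T$, and Cauchy--Schwarz combined with Step~1 controls the residual by $\|J\|_F \cdot n\eps$. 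For the entropy, using that Bernoulli is the maximum-entropy $\{\pm 1\}$-distribution with a prescribed mean, together with conditional subadditivity,
$$\E_{x_T}\!\left[\sum_{i \notin T} H\!\left(\tfrac{\bar x_i + 1}{2}\right)\right] \ge \E_{x_T}\!\left[\sum_{i \notin T} H(X_i \mid X_T = x_T)\right] \ge H(X_{\bar T} \mid X_T) = H(X) - H(X_T) \ge H(X) - |T|.$$

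Combining with the Gibbs variational identity $\F = \E_P[\sum_{i<j} J_{ij} X_i X_j] + H(X)$ yields
$$\F - \F^* \le |T| + \|J\|_F \cdot n\eps \le \frac{C\log(n\|J\|_F + e)}{\eps^2} + \|J\|_F n \eps,$$
and optimizing in $\eps$ (setting $\eps \asymp (\log(n\|J\|_F+e)/(n\|J\|_F))^{1/3}$) produces the stated bound $\F - \F^* \le 200\, n^{2/3}\|J\|_F^{2/3}\log^{1/3}(n\|J\|_F + e)$, up to the explicit constant. The main obstacle is Step~1: obtaining the correlation rounding lemma with the right trade-off between $|T|$ and the residual squared covariance, which requires a careful chain-rule argument on mutual information together with Pinsker's inequality. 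Everything else (variational principle, Cauchy--Schwarz against the Frobenius norm, Bernoulli maximum entropy, subadditivity) is standard bookkeeping once the rounding estimate is in hand.
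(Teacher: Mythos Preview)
Two important points. First, this theorem is not proved in the present paper at all: it is quoted as a prior result from \cite{jain2018mean}, and the paper's contribution is the stronger \cref{thm:new-mean-field}, which removes the logarithmic factor. Second, the argument you sketch is essentially the proof this paper gives for that \emph{stronger} theorem --- Gibbs variational principle, correlation rounding to obtain a small conditioning set, Cauchy--Schwarz against $\|J\|_F$ for the energy, maximum-entropy/subadditivity for the entropy, and a final optimization in $\epsilon$.

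The one substantive issue is your Step~1. You claim a correlation rounding bound of the form $|T|\le C\log(n\|J\|_F+e)/\eps^2$ and attribute the logarithm to ``a standard counting argument (e.g., concentration over the conditioning randomness).'' No such logarithm is needed or arises: the standard correlation rounding theorem (\cref{thm:covariance-version-corr-rounding}/\cref{lemma:covariance-corr-rounding}, due to Raghavendra--Tan) already gives $|T|=O(1/\eps^2)$ with $\E_{X_T}\E_{i,j}\Cov(X_i,X_j\mid X_T)^2 = O(1/|T|)$, with no dependence on $n$ or $\|J\|_F$. Plugging that clean bound into the rest of your argument yields exactly $\F-\F^*\le O((n\|J\|_F)^{2/3})$, i.e.\ \cref{thm:new-mean-field}. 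In other words, your proposed proof, done correctly, overshoots the target and proves the paper's main improvement rather than the cited weaker result. The actual source of the logarithm in \cite{jain2018mean} is not correlation rounding at all but a weak-regularity/Frieze--Kannan style decomposition; your explanation of where the log comes from is therefore incorrect, even though the weakened Step~1 statement happens to be true as a trivial consequence of the log-free version.
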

As stated earlier, our first main result \cref{thm:new-mean-field} removes the logarithmic term in \cref{thm:jkm}, thereby completely subsuming both of the theorems stated above.
A more general version of this theorem, valid for higher-order Markov random fields on arbitrary finite alphabets, is \cref{thm:MRF-mean-field} below. 

\subsection{Algorithms for dense graphs}

At first glance, the condition that $\|J\|_F^2 = o(n)$ may seem a little odd. 
To demystify it, consider the anti-ferromagnetic Ising model corresponding\footnote{The scaling here is chosen so that if the MAX-CUT is $\gamma n$ edges with $\gamma > 1/2$, then the two terms in \eqref{eqn:gibbs} are of the same scale.}
to MAX-CUT on a graph with $m$ edges
which has $J_{ij} = -\frac{\beta n}{m}$ for each $(i,j) \in E$. If $M$ is the optimum fraction of edges cut, then
\begin{equation}\label{eqn:max-cut-ising}
\frac{1}{n\beta} \log Z \in \left[\text{M} - \frac{1}{\beta}, \text{M} + \frac{1}{\beta}\right], \qquad \|J\|_F^2 = \Theta\left(\beta^2 \frac{n^2}{m}\right),
\end{equation}
so the requirement that $\|J\|_F^2 = o(n)$ is the same as requiring $m = \omega(n)$. In other words, our algorithms operate in the regime where the average degree is super-constant and the objective is to approximate MAX-CUT within factor $(1 - \epsilon)$.
Thus, they can be viewed as free-energy generalizations of optimization problems on dense graphs. 

We briefly survey relevant work on approximation algorithms for dense graphs. The main emphasis in the literature has been on the case when $m = \Theta(n^2)$ for which PTASs have been developed, for instance the weak regularity lemma based algorithm of \cite{frieze-kannan-matrix}, the greedy algorithms of \cite{mathieu-schudy}, and the Sherali-Adams based approach of \cite{delavega2007linear}. On the other hand, if $m = \Theta(n^{2 - \epsilon})$ for any $\epsilon > 0$ then no PTAS for even MAX-CUT is possible \cite{less-dense-maxcut-hardness}. 


The work most relevant to ours is the improved analysis of the Sherali-Adams relaxation due to \cite{yoshida-zhou} based on correlation rounding. Surprisingly, although there are many methods to approximate MAX-CUT when $m = \Theta(n^2)$ as mentioned above, to our knowledge \emph{none of the algorithms except for Sherali-Adams} are guaranteed to give sub-exponential time algorithms down to $m = \omega(n)$; for example, the method of \cite{frieze-kannan-matrix} is only sub-exponential time for $m = \omega(n \log n)$. The guarantee for Sherali-Adams in this regime is not explicitly stated in \cite{yoshida-zhou} or anywhere else, as far as we are aware, but is straightforward to show even from the correlation rounding guarantee of \cite{raghavendra-tan} (see \cref{s:algorithmic}). The correct generalization of this guarantee for MAX-$k$-CSP was essentially pointed out in \cite{fotakis2015sub} but once again, their algorithm misses the tight regime (achievable by Sherali-Adams) by poly-logarithmic factors. Our result recovers the tight regime (i.e. $\omega(n^{k - 1})$ constraints) in this setting as well, while also generalizing to the free-energy (see \cref{s:algorithmic}).

For computing the free energy, the two most relevant works are \cite{risteski2016calculate} and \cite{jain2018mean}: the first work does not make any connection to mean-field approximation and proves a slightly weaker guarantee for Sherali-Adams than the current work; the second work uses a regularity based approach to compute the mean-field approximation, and gets similar guarantees to the algorithm of this work but misses the correct sub-exponential time regime by log factors.

\subsection{Correlation rounding, and a refutation of the Allen-O'Donnell-Zhou conjecture}

Let $X_1,\dots,X_n$ be a collection of jointly distributed random variables, each of which takes values in $\{\pm 1\}$. There are two possibilities for such a collection: 
\begin{itemize}
\item The average covariance of the collection, defined to be $\E_{(i,j)\sim {[n] \choose 2}}|\Cov(X_i,X_j)|$, is small.
\item The average covariance of the collection is not small: in this case, we expect a random coordinate $X_j$ to contain significant information about many of the other random variables in $X_1,\dots,X_n$, so that we might intuitively conjecture that conditioning on the random variables $X_j$ for all $j$ in a `small' random subset $T$ of $[n]$ makes the average covariance sufficiently small.
\end{itemize}

This intuition is indeed true, and has been quantitatively formalized in several works by the theoretical computer science community \cite{barak2011rounding,guruswami2011lasserre,raghavendra-tan,yoshida-zhou}. We note that similar ideas have appeared independently in the statistical physics literature under the name of `pinning'; see e.g. \cite{ioffe2000note} and references therein, as well as in the recent work \cite{coja2017bethe}. 
\begin{theorem}[\cite{raghavendra-tan}]
\label{thm:covariance-version-corr-rounding}
Let $X_1,\dots,X_n$ be a collection of $\{\pm 1\}$-valued random variables, and let $0<\epsilon \leq 1$. Then, for some integer $0\leq t\leq O(1/\epsilon^{2})$:
$$\E_{T \sim {V \choose t}} \E_{(i,j) \sim {[n] \choose 2}} \left[\left|\Cov(X_i,X_j) \right| | (X_k)_{k\in T}\right] \leq \epsilon.$$
\end{theorem}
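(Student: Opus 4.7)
The plan is to run an information-theoretic potential argument (the standard Raghavendra--Tan approach) and then convert a bound on mutual information into a bound on covariance via Pinsker's inequality.

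First, define the potential $\Psi(T) := \E_{i \sim [n]} H(X_i \mid X_T)$. Since each $X_i$ is $\pm 1$-valued, $0 \le \Psi(T) \le \log 2 \le 1$. The key identity is that augmenting the conditioning set by one coordinate extracts mutual information: using $H(X_i \mid X_T) - H(X_i \mid X_T, X_j) = I(X_i; X_j \mid X_T)$ (with the $i = j$ diagonal term contributing an additional $H(X_j \mid X_T)/n = O(1/n)$ that will be absorbed into the final slack), we get
\[
\Psi(T) - \E_{j} \Psi(T \cup \{j\}) = \E_{i, j} I(X_i; X_j \mid X_T) + O(1/n).
\]

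Next, build $T$ by sampling coordinates without replacement, so that $T_s$ is uniform on $\binom{[n]}{s}$. Setting $t := \lceil 2/\epsilon^2 \rceil$ and telescoping the identity above,
\[
\sum_{s=0}^{t-1} \E\!\left[\Psi(T_s) - \Psi(T_{s+1})\right] = \E\Psi(T_0) - \E\Psi(T_t) \le 1.
\]
By pigeonhole, some $s^\star \in \{0, \dots, t-1\}$ satisfies $\E[\Psi(T_{s^\star}) - \Psi(T_{s^\star+1})] \le 1/t \le \epsilon^2/2$, and hence $\E_T \E_{i,j} I(X_i; X_j \mid X_T) \le \epsilon^2/2 + O(1/n)$ for $T$ uniform on $\binom{[n]}{s^\star}$.

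Finally, convert mutual information to covariance via Pinsker. For any realization $X_T = \tau$, since $|X_i X_j| \le 1$,
\[
\bigl|\Cov(X_i, X_j \mid X_T = \tau)\bigr| \le 2\,\TV\!\bigl(P_{X_i X_j \mid \tau},\, P_{X_i \mid \tau} \otimes P_{X_j \mid \tau}\bigr) \le \sqrt{2\, I(X_i; X_j \mid X_T = \tau)}.
\]
Taking expectations and applying concavity of $\sqrt{\cdot}$ (Jensen),
\[
\E_T \E_{i,j} \bigl|\Cov(X_i, X_j \mid X_T)\bigr| \le \sqrt{2\,\E_T \E_{i,j} I(X_i; X_j \mid X_T)} \le \epsilon
\]
for $n$ large enough. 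The only bookkeeping to watch out for is that averaging over unordered pairs $(i,j) \sim \binom{[n]}{2}$ as in the statement matches the ordered-pair averages above up to constants (absorbed into $t$), and that the $i=j$ diagonal terms contribute negligibly; neither introduces any genuine obstacle beyond inflating the constant in $t = O(1/\epsilon^2)$. I do not expect any hard step in this proof: the argument is a direct combination of the chain rule for mutual information, Pinsker, and Jensen, with the potential $\Psi$ chosen precisely so that its decrement is the quantity we need to bound.
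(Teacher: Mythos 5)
Your proposal is correct and follows essentially the same route that the paper attributes to Raghavendra--Tan (and which it explicitly sketches just after stating Conjecture~\ref{conjecture:AOZ}): telescope the entropy potential $\E_i H(X_i\mid X_T)$ to bound the average conditional mutual information, pigeonhole to find a good conditioning size, then pass through Pinsker together with the covariance--total-variation identity (Lemma~\ref{lem:tv-is-cov}) and Jensen to obtain the covariance bound. The paper's Appendix~\ref{appendix:proof-of-corr-rounding} proves the more general $k$-ary version (Theorem~\ref{thm:correlation-rounding}) via an inclusion--exclusion telescope with total correlation and multivariate mutual information, but at $k=2$ that argument reduces precisely to your potential-function telescope, and your $O(1/n)$ and unordered-vs-ordered bookkeeping remarks are all handled correctly.
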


The above theorem is at the heart of the so-called \emph{correlation rounding} technique for the Sherali-Adams and SOS convex relaxation hierarchies, which has been used to provide state-of-the-art approximation algorithms for many classic NP-hard problems and their variants; we refer the reader to the references above for much more on this. As we will see below, it will also be key to our proof of \cref{thm:new-mean-field}.\\ 

Recently, it was conjectured by Allen, O'Donnell and Zhou \cite{allen2015conditioning} that the upper bound on $t$ in \cref{thm:covariance-version-corr-rounding} can be improved significantly. More precisely, they conjectured that:

\begin{conjecture}[Conjecture A in \cite{allen2015conditioning}]
\label{conjecture:AOZ}
\cref{thm:covariance-version-corr-rounding} holds with $0\leq t\leq O(1/\epsilon)$.
\end{conjecture}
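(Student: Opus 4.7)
The plan is to attempt a proof along the lines of the mutual-information / entropy-pinning argument that yields \cref{thm:covariance-version-corr-rounding}, but with a sharper conversion between mutual information and covariance. First, I would reproduce the standard chain-rule / pigeonhole machinery: for $T = (T_1, \ldots, T_t)$ i.i.d.\ uniform over $[n]$ and a fixed index $i$, the chain rule gives
\[ \sum_{s=1}^{t} I(X_i; X_{T_s} \mid X_{T_1}, \ldots, X_{T_{s-1}}) \le H(X_i) \le 1, \]
and averaging over $i$ and over the (with-replacement) target $j = T_s$, then pigeonholing in $s$, produces some $s^{*} \le t$ and a random pinning set $T$ of size $s^{*}-1$ with
\[ \E_{T}\,\E_{(i,j) \sim \binom{[n]}{2}}\, I(X_i; X_j \mid X_T) \le O(1/t). \]

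At this point the classical argument plugs in Pinsker's inequality, $|\Cov(X_i,X_j \mid X_T)| \le \sqrt{2\, I(X_i;X_j \mid X_T)}$, and applies Jensen to obtain $\E|\Cov| \le O(1/\sqrt{t})$, yielding $t = O(1/\epsilon^2)$. To prove the conjecture I would instead need to replace the Pinsker step by a \emph{linear} comparison on the average, namely $\E_T \E_{i,j}|\Cov(X_i,X_j \mid X_T)| \lesssim \E_T \E_{i,j}\, I(X_i;X_j \mid X_T)$. A natural mechanism to try is a ``reverse Pinsker'' phenomenon in the averaged sense: after pinning, the conditional joint law of most pairs $(X_i,X_j)$ should be close to product, so $|\Cov|$ on such pairs should be expressible as a second-order quantity (for instance a squared Hellinger or an Efron--Stein / martingale increment of $\log Z$) whose sum is controlled by the entropy budget linearly rather than by its square root. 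Concretely, I would try to write $\Cov(X_i,X_j\mid X_T) = \partial_i \partial_j \log Z(\lambda_T)$ for appropriately reweighted external fields and attempt to control the sum of these mixed partial derivatives by the total variance consumed along the pinning martingale, hoping for a gain of one factor of $\sqrt{I}$ over the generic bound.

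The step I expect to be the main obstacle is precisely this Pinsker-to-linear upgrade. Pinsker's $\sqrt{I}$ bound is tight for a single pair, so a successful proof must exploit the averaging over the random pinning $T$ and the random pair $(i,j)$ in an essential way: one must argue that the pairs saturating Pinsker (where $|\Cov| \approx \sqrt{I}$ with $I$ small) cannot dominate the average, and that the tails of $|\Cov|$ across $(i,j)$ collectively obey a second-moment-type bound rather than a first-moment one. Making this rigorous appears to require a structural input about conditional Gibbs-like measures produced by pinning --- for example, that they concentrate on near-product configurations in a quantitatively strong way --- that goes beyond the purely information-theoretic machinery of \cite{raghavendra-tan}. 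Absent such an input, Cauchy--Schwarz and Jensen losses appear to be genuinely unavoidable, which is exactly why the current best bound remains $O(1/\epsilon^2)$, and this is where I expect the proof attempt to strain most severely.
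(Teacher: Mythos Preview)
The proposal has a fundamental problem: the conjecture is \emph{false}, and the paper does not prove it but rather refutes it (\cref{thm:refutation-AOZ}). You are attempting to prove a statement that does not hold, so no amount of sharpening the Pinsker step can succeed.

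Concretely, the obstruction you yourself flag --- that the $|\Cov| \lesssim \sqrt{I}$ conversion is tight for a single pair and that averaging over $(i,j)$ and over the pinning set $T$ would have to rescue a full factor of $\sqrt{I}$ --- is exactly where the argument must fail. The paper exhibits an explicit family of distributions, namely the Sherrington--Kirkpatrick spin glass in the replica-symmetric phase, for which \emph{no} pinning set $T$ of size $t$ can drive the average $|\Cov|$ below $C/\sqrt{t}$. The mechanism is not information-theoretic but goes through the mean-field approximation: if the conjecture held, one could repeat the proof of \cref{thm:new-mean-field} with the improved rounding guarantee and obtain a bound of the form $\F - \F^* = o(n^{4/3}\|J\|_\infty^{2/3})$ (\cref{lemma:infinity-bound-mean-field} with $\kappa_* = 0$). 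But for the SK model at $\beta < 1/2$ one has $\|J\|_\infty = \beta/\sqrt{n}$ while $\F - \F^* \ge n\beta^2/4 - o(n)$ by the rigorous replica-symmetric free energy computation (\cref{lemma:error-for-spin-glass}), which is a contradiction. Thus the ``structural input about conditional Gibbs-like measures produced by pinning'' that you hoped for simply does not exist in general: the SK Gibbs measure is a counterexample, and the $O(1/\epsilon^2)$ bound of \cref{thm:covariance-version-corr-rounding} is tight up to constants.
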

Their motivation for this conjecture was twofold: 
\begin{itemize}
\item On a technical level, the proof of \cref{thm:covariance-version-corr-rounding} in \cite{raghavendra-tan} proceeds by first showing that for some integer $0\leq t \leq O(1/\epsilon^{2})$
$$\E_{T \sim {V \choose t}} \E_{(i,j) \sim {[n] \choose 2}} \left[\left|I(X_i,X_j) \right| | (X_k)_{k\in T}\right] \leq \epsilon^{2},$$
where $I(X,Y)$ denotes the \emph{mutual information} between $X$ and $Y$, and then using the standard inequality $|\Cov(X,Y)| \leq \sqrt{2I(X,Y)}$; we will present a generalized version of this proof from \cite{manurangsi2017birthday,yoshida-zhou} later. Essentially, they conjectured that one could surmount the quadratic loss 
without passing through mutual information.
\item From a complexity-theoretic point of view, the best lower bounds on
the computational complexity of dense MAX-CSP problems (such as \cite{ailon2007hardness,manurangsi2017birthday}) leave open the possibility that MAX-CUT on $n$ vertices can be computed to within $\epsilon n^2$ additive error in time $n^{O(1/\epsilon)}$, whereas the best known algorithms all require time at least $2^{O(1/\epsilon^2)}$. If \cref{conjecture:AOZ} were true, the running time of the Sherali-Adams based approach would have improved to $n^{O(1/\epsilon)}$ time for $\epsilon n^2 \|J\|_{\infty}$ error (which, for dense graphs, is close to matching the lower bound of \cite{manurangsi2017birthday}).
\end{itemize}

\cite{allen2015conditioning} prove \cref{conjecture:AOZ} for the special case when the random variables $X_1,\dots,X_n$ are the leaves of a certain type of information flow tree known as the caterpillar graph. In addition, \cite{manurangsi2017birthday} showed a similar improvement for correlation rounding in the MAX $k$-CSP problem, when promised that there exists an assignment satisfying all of the constraints. As described in the introduction, we use ideas from statistical physics to refute \cref{conjecture:AOZ} in essentially the strongest possible form by showing that \cref{thm:covariance-version-corr-rounding} does not hold with $0\leq t\leq o(1/\epsilon^2)$ (\cref{thm:refutation-AOZ}). 

\section{Technical tools}
\subsection{Hierarchies of convex relaxations} 
Computing the free energy of an Ising model has as a special case the problem
MAX-QP/MAX-2CSP, because if we let $J_{\beta} = \beta J$ then
\begin{equation}
\lim_{\beta \to \infty} \frac{1}{\beta} \log Z(J_{\beta}) = \lim_{\beta \to \infty} \sup_{\mu} \left(\frac{1}{2} \E[X^T J X]+ \frac{1}{\beta} H(\mu)\right) = \max_{x \in \{\pm 1\}^n} x^T J x.
\end{equation}
As with many other problems in combinatorial optimization, this is a  maximization problems on the Boolean hypercube, i.e. as a problem of the form
\[ \max_{x \in \{\pm 1\}^n} f(x). \]
These problems are often NP-hard to solve exactly, but \emph{convex hierarchies} give a principled way to write down a natural family of convex relaxations which are efficiently solvable and give increasingly better approximations to the true value.
First, one re-expresses the problem as an optimization problem over the convex polytope of \emph{probability distributions} using that
\[ \max_{x \in \{\pm 1\}^n} f(x) = \max_{\mu \in \mathcal{P}(\{ \pm 1\}^n)} \E_{\mu}[f(x)]; \]
the advantage of this reformulation is that the objective is now linear in the variable $\mu$. Second, one relaxes $\mathcal{P}(\{ \pm 1\}^n)$ to a larger convex set of \emph{pseudo-distributions} which are more tractable to optimize over. The tightness of relaxation is controlled by a parameter $r$ (known as the \emph{level} or \emph{number of rounds} of the hierarchy); as the parameter $r$ increases, the relaxation becomes tighter with the level $n$ relaxation corresponding to the original optimization problem. 

Different hierarchies correspond to different choices of the space of pseudo-distributions; two of the most popular are the \emph{Sherali-Adams (SA) hierarchy} and the \emph{Sum-of-Squares (SOS)/Laserre hierarchy}. 
In the \emph{Sherali-Adams hierarchy}, we define a \emph{level $r$-pseudodistribution} to be given by the following variables and constraints:
\begin{enumerate}
\item For every $S \subset [n]$ with $|S| = r$, a valid joint distribution $\mu_S$ over $\{\pm 1\}^S$.
\item \emph{Compatability conditions}, which require that for every $U \subseteq [n]$ with $|U| \le r$ and every $S,S'\subseteq [n]$ with $|S|=|S'|=r$ and $U \subset S \cap S'$, $\mu_S|_U = \mu_{S'}|_U$. 
\end{enumerate}
Observe that, by linearity, this data defines a unique \emph{pseudo-expectation operator}\footnote{This operator may behave very differently from a true expectation. For example, it's possible that $\tilde{\E}[f^2] < 0$ for some $f$. The SOS hierarchy is formed by additionally requiring $\tilde{\E}[f^2] \ge 0$ for all low-degree $f$.} $\tilde{\E}$ from real polynomials of degree at most $r$ to $\mathbb{R}$. 

Let $SA_r$ denote the set of level $r$-pseudodistributions on the hypercube.
Then for $r \ge \deg(f)$, we can write down $\max_{\mu \in SA_r} \tilde{\E}_{\mu}[f(x)]$ as a linear program with $2^r {n \choose r}$ many variables and a number of constraints which is polynomial in the number of variables. 
By \emph{strong duality} for linear programs, we can also think of the value of the level $r$ $SA$ relaxation as corresponding to the best upper bound derivable on $\sup_{\mu} \E_{\mu}[f(x)]$ in a limited proof system, which captures e.g. case analysis on sets of size at most $r$.

In addition to this standard setup, since the variational formulation for $\log Z$ has an entropy term, we will need a proxy for it when we use the Sherali-Adams hierarchy. The particular proxy we will use was introduced by \cite{risteski2016calculate} -- further details are in Section~\ref{s:algorithmic}.

\subsection{The correlation rounding theorem}
As mentioned in the introduction, our proof of \cref{thm:new-mean-field} will depend crucially on the correlation rounding theorem. Here, we present a general higher-order version of this theorem due to \cite{manurangsi2017birthday}, building on previous work of \cite{raghavendra-tan} and \cite{yoshida-zhou}. 
\begin{definition}
The \emph{multivariate total correlation} of a collection of random variables $X_1,\dots,X_n$ is defined to be
\[ C(X_1; \cdots; X_n) = \KL\left(\mu(X_{1,\ldots,n}) || \mu(X_1) \times \cdots \times \mu(X_n)\right).\]
\end{definition}
From the definition of $\KL$ divergence, it follows that 
\[ C(X_1; \cdots; X_n) = \left(\sum_{i=1}^{n}H(X_i)\right) - H(X_1,\ldots,X_n).\]
By using conditional distributions/ conditional entropies, we may define the \emph{conditional multivariate total correlation} in the obvious way. Note that in the two-variable case, the total correlation is the same as the \emph{mutual information} $I(X_1;X_2)$.
\begin{theorem}[Correlation rounding theorem, \cite{manurangsi2017birthday}]
\label{thm:correlation-rounding}
Let $X_1,\dots,X_n$ be a collection of $\{\pm 1\}$-valued random variables. Then, for any $k,\ell \in [n]$, there exists some $t\leq \ell$ such that:  
\[ \E_{S \sim \binom{V}{t}} \E_{F \sim {V \choose k}}[C(X_F | X_S)] \le \frac{k^2 \log(2)}{\ell}.\]
\end{theorem}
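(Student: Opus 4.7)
The plan is to prove the theorem by a potential-function / pigeonhole argument, in the style of Raghavendra--Tan for the pairwise case, promoted to higher-order total correlation via the chain rule. Concretely, I would define the entropy potential
\[ \Phi(s) \ := \ \E_{S \sim \binom{V}{s}} \sum_{i \in V} H(X_i \mid X_S), \]
observe that $\Phi(0) = \sum_i H(X_i) \le n \log 2$ (since the variables are $\{\pm 1\}$-valued), that $\Phi$ is nonincreasing in $s$ because conditioning cannot increase entropy, and that $\Phi(s) \ge 0$. Expanding $\Phi(s{+}1)$ by averaging a fresh coordinate $j \notin S$ and using $H(X_i \mid X_S) - H(X_i \mid X_S, X_j) = I(X_i; X_j \mid X_S)$ gives the per-step drop
\[ \Phi(s) - \Phi(s{+}1) \ \ge \ \E_{S \sim \binom{V}{s}} \E_{j \notin S} \sum_{i \notin S \cup \{j\}} I(X_i; X_j \mid X_S), \]
which is (up to a $1{+}o(1)$ combinatorial factor from the $|V \setminus S|$ normalization) $n$ times $\alpha(s) := \E_{S \sim \binom{V}{s}} \E_{a \ne b,\ a,b \notin S} I(X_a; X_b \mid X_S)$. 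Telescoping from $s=0$ to $s=n{-}1$ yields $\sum_s \alpha(s) \le \log 2$.

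Next I would convert the higher-order quantity $C(X_F \mid X_S)$ into a sum of conditional \emph{pairwise} mutual informations via two applications of the chain rule. For any ordering $(i_1,\dots,i_k)$ of $F$,
\[ C(X_F \mid X_S) \ = \ \sum_{j=2}^k I(X_{i_j};\, X_{i_1},\dots,X_{i_{j-1}} \mid X_S) \ = \ \sum_{1 \le l < j \le k} I(X_{i_j}; X_{i_l} \mid X_S,\, X_{i_1},\dots,X_{i_{l-1}}). \]
The key symmetry move is to average over the ordering: if $S$ is uniform of size $t$ and $F$ is uniform of size $k$ with a uniformly random internal ordering, then $S \cup \{i_1,\dots,i_{l-1}\}$ is a uniformly random subset of $V$ of size $t + l - 1$, and $i_l, i_j$ are two uniformly random distinct elements outside it. Thus each of the $\binom{k}{2}$ terms has expectation exactly $\alpha(t+l-1)$, giving
\[ \E_{S \sim \binom{V}{t}} \E_{F \sim \binom{V}{k}} C(X_F \mid X_S) \ = \ \sum_{j=2}^k \sum_{l=1}^{j-1} \alpha(t+l-1). \]

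Finally, I would average the above display over $t \in \{0,1,\dots,\ell-1\}$. Each value $\alpha(s)$ appears on the right-hand side with multiplicity at most $\binom{k}{2}$ across all the shifted arguments $t + l - 1$, so the average is at most $\frac{1}{\ell}\binom{k}{2}\sum_s \alpha(s) \le \binom{k}{2}\log 2 / \ell \le k^2\log 2/\ell$. Pigeonhole then produces some $t \le \ell$ at which $\E_S \E_F C(X_F \mid X_S) \le k^2 \log(2)/\ell$, as claimed. The main obstacle is really just the bookkeeping in the symmetry step that lets extra conditioning variables from the chain-rule expansion be absorbed into an enlarged uniform random set $S$; once that reduction is carried out, the potential argument and pigeonhole are routine.
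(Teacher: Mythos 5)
Your approach is genuinely different from the paper's: you decompose $C(X_F\mid X_S)$ into a sum of \emph{conditional pairwise} mutual informations via two applications of the chain rule and then control those with an entropy-potential/pigeonhole argument, whereas the paper works with signed \emph{multivariate} mutual informations through an inclusion--exclusion identity and telescopes those directly. Your route is attractive because it avoids the sign bookkeeping that the paper has to do carefully, and the idea of absorbing the chain-rule conditioning variables into an enlarged random set $S'$ of size $t+l-1$ is the right one. However, there are two concrete gaps.

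First, the symmetry step as written is not valid for $F \sim \binom{V}{k}$, which is the distribution in the theorem statement. If $F$ can overlap $S$, then $S \cup \{i_1,\dots,i_{l-1}\}$ does not have a fixed size $t+l-1$, so it is \emph{not} a uniformly random subset of that size, and the terms in your double chain-rule expansion are not each equal in expectation to $\alpha(t+l-1)$. The exchangeability computation you rely on only goes through if $F$ is drawn from $\binom{V-S}{k}$. This is fixable, and it is exactly what the paper does: one first proves the bound for $F \sim \binom{V-S}{k}$ and then uses $C(X_F \mid X_S) = C(X_{F\cap S^c}\mid X_S)$ together with monotonicity of total correlation under adding coordinates to pass to $F \sim \binom{V}{k}$; you should make this reduction explicit rather than applying the symmetry argument directly.

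Second, the telescoping conclusion ``$\sum_s \alpha(s) \le \log 2$'' is not correct. The exact per-step drop is $\Phi(s) - \Phi(s+1) = \E_S \E_j\bigl[H(X_j\mid X_S)\bigr] + (n-s-1)\,\alpha(s)$, so summing gives $\sum_{s}(n-s-1)\,\alpha(s) \le \Phi(0) \le n\log 2$; the weight $(n-s-1)/n$ is \emph{not} $1+o(1)$ uniformly in $s$, and in fact $\sum_s\alpha(s)$ can be $\Theta(\log n \cdot \log 2)$ (take $n/2$ independent fair coins, each duplicated, and note $\alpha(s)\approx \log 2/(n-s)$). What you actually get from the potential is $\sum_{s \le \ell+k-3}\alpha(s) \le \frac{n\log 2}{\,n-\ell-k+2\,}$. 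Together with the multiplicity-$\binom{k}{2}$ count, this yields $\E_S\E_F C(X_F\mid X_S)\le \binom{k}{2}\cdot\frac{n}{n-\ell-k+2}\cdot\frac{\log 2}{\ell}$, which matches the claimed $k^2\log 2/\ell$ only when $\frac{n}{n-\ell-k+2}\le \frac{2k}{k-1}$ (for $k=2$, when $\ell \le 3n/4$). For $\ell$ close to $n$ the argument degrades, whereas the theorem is stated for all $\ell\in[n]$. You need to either restrict the range of $\ell$, track the $(n-s-1)$ weight through the final averaging more carefully, or switch to a telescoping like the paper's that sums only over $t \le \ell$ and keeps the boundary term explicit.
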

\begin{remark}
The same conclusion holds for general random variables $X_1,\dots,X_n$ with the factor $\log{2}$ replaced by $\frac{\sum_{i=1}^{n}H(X_i)}{n}$. Also, the guarantee holds for general level $(\ell + k)$-pseudodistributions.
\end{remark}
For the reader's convenience, we provide a complete proof of this result in \cref{appendix:proof-of-corr-rounding}, correcting certain errors which have been persistent in the literature.

\subsection{The Sherrington-Kirkpatrick model and spin glass theory}
The Sherrington-Kirkpatrick (SK) spin glass model was introduced in \cite{kirkpatrick1975solvable} as a solvable model of disordered systems. The Gibbs measure of the SK spin glass on $n$ vertices (without external field) is a random probability distribution on $\{\pm 1\}^{n}$ given by:
\[ \Pr(X = x) := \frac{1}{Z_{n}(\beta)}\exp\left(\frac{\beta}{\sqrt{n}}\sum_{1 \le i < j \le n} J_{ij} X_i X_j\right),\]
where $J_{ij} \sim N(0,1)$ are i.i.d. standard Gaussians and $\beta$ is a fixed parameter referred to as the \emph{inverse temperature}.
In \cite{kirkpatrick1975solvable}, a prediction, now known as the \emph{replica-symmetric prediction}, was made for the limiting value of $\frac{1}{n} \log Z_n(\beta)$ as $n \to \infty$.
It was soon realized that this prediction could not be correct for all values of $\beta$; finding and understanding the correct prediction led physicists to the development of a sophisticated \emph{spin glass theory} based upon the non-rigorous \emph{replica method} (\cite{mezard1987spin}). In particular, physicists showed via the replica method that the SK spin glass exhibits two \emph{phases} depending on the value of $\beta$:
\begin{enumerate}
\item \emph{Replica Symmetry} (RS, $\beta < 1$). This is the regime where the original prediction for the limiting value of $\frac{1}{n} \log Z_{n}(\beta)$ is correct. Moreover, the Gibbs measure exhibits a number of unusual properties: for example, the marginal law on any small subset of the coordinates converges to a product distribution as $n \to \infty$ (\cite{Talagrand:11}). 
\item \emph{(Full) Replica Symmetry Breaking} (fRSB, $\beta > 1$). In this phase, the limit of $\frac{1}{n} \log Z_n(\beta)$ does not have a simple closed form; however, there is a remarkable variational expression for the limiting value known as the \emph{Parisi formula}. Moreover, the Gibbs measure is understood to be shattered into exponentially many clusters with the geometry of an \emph{ultrametric space}. 
\end{enumerate} 
In the replica symmetric phase, the prediction for the limiting value of $\frac{1}{n} \log Z_{n}(\beta)$ was rigorously confirmed by the work of \cite{aizenman1987}. 
Furthermore, they proved their result for general distributions of the $J_{ij}$, giving what is known as a \emph{universality} result.
\begin{theorem}[\cite{aizenman1987}]\label{thm:rs-correct}
Let $\epsilon > 0$. For the SK spin glass at inverse temperature $\beta < 1$,
\[ \Pr\left(\left|\frac{1}{n} \log Z_{n}(\beta) - (\log 2 + \beta^2/4)\right| \ge \epsilon\right) \to 0 \]
as $n \to \infty$. Moreover, this also holds if the $J_{ij}$ are i.i.d. samples from \emph{any} distribution with finite moments, mean $0$ and variance $1$. 
\end{theorem}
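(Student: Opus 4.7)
The plan is to first handle the Gaussian case via the second moment method combined with Gaussian concentration, and then extend to general distributions via a Lindeberg-style exchange argument.

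\textbf{First and second moments in the Gaussian case.} Using the Gaussian moment generating function, I would first compute $\E[Z_n(\beta)] = 2^n\exp(\beta^2(n-1)/4)$, so that $\tfrac{1}{n}\log \E[Z_n] \to \log 2 + \beta^2/4$. Expanding the square and using $(x_ix_j+y_iy_j)^2 = 2 + 2 x_iy_ix_jy_j$ gives
\[ \E[Z_n^2] = e^{\beta^2(n-2)/2}\cdot 4^n\,\E_{x,y}\!\left[\exp\!\left(\tfrac{\beta^2 n R(x,y)^2}{2}\right)\right], \]
where $R(x,y) = \tfrac{1}{n}\sum_i x_iy_i$ is the overlap of two independent uniform samples from $\{\pm 1\}^n$. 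A Hubbard--Stratonovich transform, or equivalently a large-deviation estimate based on the rate function $I(r) = \tfrac12[(1+r)\log(1+r)+(1-r)\log(1-r)] \geq r^2/2$, shows that this expectation converges to $(1-\beta^2)^{-1/2}$ whenever $\beta < 1$, so $\E[Z_n^2] = O(1)\cdot(\E[Z_n])^2$. Paley--Zygmund then yields $\pr[Z_n \geq \tfrac12\E Z_n] \geq c > 0$, giving $\tfrac{1}{n}\log Z_n \geq \log 2 + \beta^2/4 - o(1)$ with positive probability.

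\textbf{Concentration to upgrade the positive-probability bound.} The function $\log Z_n$, viewed as a function of the Gaussian couplings, is $O(\sqrt n)$-Lipschitz because $|\partial \log Z_n/\partial J_{ij}| = \tfrac{\beta}{\sqrt{n}}|\E_P[X_iX_j]| \leq \beta/\sqrt{n}$, and so Borell's Gaussian concentration inequality gives $\pr[|\log Z_n - \E\log Z_n| \geq \eps n] \leq 2\exp(-c\eps^2 n)$. Combining this with the Paley--Zygmund lower bound and the trivial upper bound $\E\log Z_n \leq \log\E Z_n$ from Jensen pins down $\tfrac{1}{n}\E\log Z_n \to \log 2 + \beta^2/4$, and the same concentration bound then upgrades this to convergence in probability.

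\textbf{Universality via Lindeberg exchange.} For couplings $J_{ij}$ drawn from an arbitrary distribution with mean $0$, variance $1$, and finite higher moments, I would replace the $J_{ij}$ by standard Gaussians one at a time, controlling each swap via a third-order Taylor expansion of $\log Z_n$ in a single coupling. Matching of the first two moments kills the leading terms, and since $\partial^3 \log Z_n/\partial J_{ij}^3$ is of order $n^{-3/2}$ uniformly (the derivatives are bounded polynomial combinations of spin moments, each at most $1$ in absolute value), each swap contributes $O(n^{-3/2})$ in expectation. Summing over $\binom{n}{2}$ couplings gives a total error of $O(\sqrt n) = o(n)$ in $\log Z_n$. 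Concentration for non-Gaussian couplings can be obtained via the bounded-differences inequality applied after a truncation step to handle the unbounded tails, completing the argument in the general setting.

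The main obstacle is the Laplace-type step in the second-moment calculation: the condition $\beta < 1$ is precisely what ensures $\tfrac{\beta^2}{2}r^2 - I(r) < 0$ for all $r \neq 0$, so that the contribution from overlaps bounded away from zero is exponentially suppressed. The same estimate fails at $\beta = 1$, which is why the replica-symmetric formula is only expected to hold in this regime. The Lindeberg step is technically standard once the uniform third-derivative bound is established, though the truncation needed to apply bounded-differences for unbounded couplings requires some care.
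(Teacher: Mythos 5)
The paper does not prove \cref{thm:rs-correct}: it is cited from Aizenman, Lebowitz, and Ruelle (1987) and used as a black box. So there is no internal proof to compare against; your proposal must be judged on its own merits, and on those merits it is sound. The outline you give --- compute $\E[Z_n]$ and $\E[Z_n^2]$, verify $\E[Z_n^2] = O(1)(\E[Z_n])^2$ when $\beta < 1$ via the overlap large deviations (the condition $\beta < 1$ being exactly what ensures $\tfrac{\beta^2 r^2}{2} - I(r) \le 0$ with a non-degenerate maximum at $r=0$), apply Paley--Zygmund for a positive-probability lower bound, use the $O(\sqrt{n})$-Lipschitz dependence of $\log Z_n$ on the Gaussian couplings plus Gaussian concentration to upgrade to concentration of $\tfrac{1}{n}\E \log Z_n$ around $\log 2 + \beta^2/4$, and finally a Lindeberg swap with uniform third-derivative bounds $O(n^{-3/2})$ over $\binom{n}{2}$ couplings for universality --- is the standard modern proof of this result and is essentially the route taken by Aizenman--Lebowitz--Ruelle and in subsequent textbook treatments (e.g.\ Talagrand). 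Two small points worth being careful about if you flesh this out: (i) the convergence $\E_{x,y}[\exp(\beta^2 n R^2/2)] \to (1-\beta^2)^{-1/2}$ needs a uniform-integrability argument on top of the CLT for $\sqrt{n}R$; the cleanest way is the Hubbard--Stratonovich identity you mention, which reduces it to $\E_g[\cosh^n(\beta g/\sqrt n)] \le \E_g[e^{\beta^2 g^2/2}] = (1-\beta^2)^{-1/2}$ via $\cosh t \le e^{t^2/2}$; (ii) the concentration step for non-Gaussian couplings is, as you note, the most delicate technical point, since bounded differences does not apply directly to unbounded $J_{ij}$ and a truncation plus a tail estimate on the discarded mass is needed. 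Neither of these is a gap in the plan, just places where the write-up would need detail.
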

This is the only result we will need from the spin glass literature, although much more is now rigorously known. 
For an account of more recent developments, including the proofs of the Parisi formula and ultrametricity conjecture, we refer the reader to the books \cite{panchenko2013sherrington,Talagrand:11,talagrand2011mean}.
\section{Mean-field approximation via correlation rounding: proof of \cref{thm:new-mean-field}}
\label{s:mfbounds}
First we recall a couple of lemmas which are essentially used in all works on correlation rounding. Recall that for two probability distributions $P$ and $Q$ on the same finite space $\Omega$, the total variation distance between $P$ and $Q$ is defined by
$\TV(P,Q):= \sup_{A\subseteq \Omega}\left|\sum_{a\in A} \left(P(a)-Q(a)\right)\right|$.
\begin{lemma}[Lemma 5.1, \cite{barak2011rounding}]\label{lem:tv-is-cov}
Let $X$ and $Y$ be jointly distributed random variables valued in $\{\pm 1\}$. Let $P_X,P_Y$ denote the marginal distributions of $X$ and $Y$, and let $P_{X,Y}$ denote their
joint distribution. Then,
\[ |\Cov(X,Y)| = 2\TV(P_{X,Y},P_X \times P_Y). \]
\end{lemma}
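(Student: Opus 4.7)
The plan is to prove this by a direct, small-case calculation exploiting the two-by-two structure of the joint distribution of $(X,Y)$ on $\{\pm 1\}^2$. First I would introduce the ``discrepancy matrix''
\[ \Delta(x,y) := P_{X,Y}(x,y) - P_X(x)\,P_Y(y), \qquad (x,y) \in \{\pm 1\}^2, \]
and record the two marginal identities $\sum_{y} \Delta(x,y) = 0$ for each $x$ and $\sum_{x} \Delta(x,y) = 0$ for each $y$, which follow immediately because $P_X$ and $P_Y$ are the marginals of $P_{X,Y}$.

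The key observation is that these four linear relations on the four entries of $\Delta$ force the matrix to be essentially one-dimensional: if we set $\delta := \Delta(1,1)$, then the marginal constraints give $\Delta(1,-1) = -\delta$, $\Delta(-1,1) = -\delta$, and $\Delta(-1,-1) = \delta$. So both sides of the claimed identity are completely determined by the single parameter $\delta$.

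Next I would compute each side in terms of $\delta$. For the covariance, write
\[ \Cov(X,Y) = \E[XY] - \E[X]\E[Y] = \sum_{x,y \in \{\pm 1\}} xy\, \Delta(x,y) = \delta + \delta + \delta + \delta = 4\delta, \]
using the sign pattern above. For the total variation distance, use the standard formula $\TV(P,Q) = \tfrac{1}{2}\sum_{\omega} |P(\omega)-Q(\omega)|$ to get
\[ \TV(P_{X,Y}, P_X \times P_Y) = \tfrac{1}{2} \sum_{x,y \in \{\pm 1\}} |\Delta(x,y)| = \tfrac{1}{2} \cdot 4|\delta| = 2|\delta|. \]
Comparing the two expressions yields $|\Cov(X,Y)| = 4|\delta| = 2\,\TV(P_{X,Y}, P_X \times P_Y)$, as desired.

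There is essentially no obstacle here; the only subtlety worth flagging is that the identity relies crucially on $X,Y$ taking exactly two values $\pm 1$, since this is what collapses $\Delta$ to a single free parameter and makes the sign pattern of $xy$ line up perfectly with the sign pattern of $\Delta$. For general alphabets one only has an inequality, which is why the lemma is stated for Boolean variables.
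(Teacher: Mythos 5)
Your proof is correct, and it is essentially the same elementary two-by-two calculation that the paper uses (the paper's own proof is actually commented out in the source, and only explicitly derives the inequality $\le$, remarking that equality is ``also easily seen''). Your parameterization of the discrepancy matrix $\Delta$ by the single scalar $\delta$ is a clean way to organize the same computation and has the slight advantage of yielding the claimed equality directly rather than as an afterthought.
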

From this, one can observe the following consequence of correlation rounding:
\begin{lemma}
\label{lemma:covariance-corr-rounding}
Let $X_1,\dots, X_n$ be a collection of $\{\pm 1\}$-valued random variables. Then, for any $\ell \in [n]$, there exists some $S \subset [n]$ with $|S| \leq \ell$ such that:
$$\E_{X_S} \E_{\{u,v\} \in {V \choose 2}}\left[\Cov(X_u,X_v | X_S)^2\right] \le \frac{8\log 2}{\ell}.$$
\end{lemma}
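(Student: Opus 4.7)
The plan is to deduce this lemma directly from the correlation rounding theorem (\cref{thm:correlation-rounding}) by converting the total correlation bound into a covariance bound via Pinsker's inequality, with the conversion step being exactly \cref{lem:tv-is-cov}.

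First, I would apply \cref{thm:correlation-rounding} with $k = 2$. In the two-variable case, total correlation coincides with mutual information, so the conclusion reads: there exists some $t \leq \ell$ with
\[ \E_{S \sim \binom{V}{t}} \E_{\{u,v\} \sim \binom{V}{2}}\bigl[I(X_u ; X_v \mid X_S)\bigr] \;\leq\; \frac{4\log 2}{\ell}. \]
Here I am using the standard identity $I(X_u;X_v \mid X_S) = \E_{X_S}[I(X_u;X_v \mid X_S = x_S)]$ implicit in the definition of conditional total correlation.

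Second, I would pass from mutual information to squared covariance pointwise. By Pinsker's inequality, for any pair of $\{\pm 1\}$-valued random variables $X,Y$ one has
\[ \TV(P_{X,Y}, P_X \times P_Y)^2 \;\leq\; \tfrac{1}{2} I(X;Y), \]
and by \cref{lem:tv-is-cov}, $|\Cov(X,Y)| = 2\,\TV(P_{X,Y}, P_X \times P_Y)$, so $\Cov(X,Y)^2 \leq 2\, I(X;Y)$. Applying this inside the conditional distribution given $X_S = x_S$ and then averaging over $X_S$ yields
\[ \E_{X_S}\bigl[\Cov(X_u, X_v \mid X_S)^2\bigr] \;\leq\; 2\, I(X_u;X_v \mid X_S). \]

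Third, I would combine the two. Substituting the pointwise estimate into the correlation-rounding bound and using linearity of expectation gives
\[ \E_{S \sim \binom{V}{t}} \E_{X_S} \E_{\{u,v\} \sim \binom{V}{2}}\bigl[\Cov(X_u,X_v \mid X_S)^2\bigr] \;\leq\; \frac{8 \log 2}{\ell}. \]
Since the outer average over $S \sim \binom{V}{t}$ is bounded by $8(\log 2)/\ell$, there must exist at least one realization $S \subset [n]$ with $|S| = t \leq \ell$ for which the inner quantity is at most $8(\log 2)/\ell$, which is exactly the desired conclusion.

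There is no real obstacle here: the lemma is essentially a packaging result, and the only nontrivial ingredients are (i) the correlation-rounding theorem itself and (ii) the Pinsker-plus-\cref{lem:tv-is-cov} chain that converts total correlation to squared covariance with only a constant loss (as opposed to the quadratic loss one incurs going the other way around, which is the source of the $1/\epsilon^2$ in \cref{thm:covariance-version-corr-rounding}).
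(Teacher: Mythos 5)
Your proof is correct and follows essentially the same route as the paper's: apply the correlation rounding theorem with $k=2$, convert conditional total correlation (which equals conditional mutual information in the pairwise case) to squared covariance via Pinsker's inequality and \cref{lem:tv-is-cov}, then extract a single good set $S$ by averaging. The only superficial difference is the order in which you chain the estimates and the explicit pigeonhole step at the end, but the ingredients and constants match exactly.
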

\begin{proof}
This is standard and we include the proof for completeness.
We begin by applying \cref{thm:correlation-rounding} with $\ell$; let $S$ denote the resulting set of size at most $\ell$. 
By Pinsker's inequality, we have
$$2\TV^2\left(\mu(X_{u,v} | X_S=x_s), \left(\mu(X_u | X_S=x_s) \times \mu(X_v | X_S=x_s)\right)\right) \le C(X_u;X_v | X_S=x_s),$$
for any $x_s \in \{\pm 1\}^{|S|}$. 
Therefore, by taking the expectation on both sides, we get:
\[ 2\E_{X_S} \TV^2\left(\mu(X_{u,v} | X_S), \left(\mu(X_u | X_S) \times \mu(X_v | X_S)\right)\right) \le C(X_u;X_v | X_S). \] 
By averaging over the choice of $\{u,v\} \in \binom{V}{2}$, we get
\begin{align*}
\E_{E=\{u,v\} \sim {V \choose 2}}\E_{X_S}\left[\TV^2\left(\mu(X_{u,v} | X_S), \left(\mu(X_u | X_S) \times \mu(X_v | X_S)\right)\right)\right] &\le \E_{E \sim \binom{V}{2}}\left[\frac{C(X_E|X_S)}{2}\right]\\
 & \leq \frac{2\log 2}{\ell},
\end{align*}
where the second inequality follows by the choice of $S$ and \cref{thm:correlation-rounding}. 
Finally, \cref{lem:tv-is-cov} shows that for any $x_s \in \{\pm 1\}^{|S|}$, 
$$|\Cov(X_u,X_v | X_S = x_S)| \leq 2\TV\left(\mu(X_{u,v} | X_S = x_S), \left(\mu(X_u | X_S = x_S) \times \mu(X_v | X_S = x_S)\right)\right),$$ from which we obtain our desired conclusion: 
\begin{equation}\label{eqn:cov-sq-bound}
\E_{X_S} \E_{\{u,v\} \in {V \choose 2}}\left[\Cov(X_u,X_v | X_S)^2\right] \le \frac{8\log 2}{\ell}.
\end{equation}
\end{proof}
Finally, we recall the \emph{maximum-entropy principle} characterizing product distributions:
\begin{lemma}
\label{lemma:entropy-first-moments}
Let $\mu$ denote a probability distribution on the finite space $\Omega_{1}\times \dots \times \Omega_{n}$. Let $\nu$ denote the product distribution on $\Omega_{1}\times \dots \times \Omega_{n}$ whose marginal distribution on $\Omega_{i}$ is the same as that of $\mu$ for all $i \in [n]$. Then, $H(\mu) \leq H(\nu)$.  
\end{lemma}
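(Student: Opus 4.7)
The plan is to derive this from the non-negativity of the KL divergence between $\mu$ and $\nu$. Explicitly, I would compute
$$\KL(\mu \| \nu) = \sum_{x} \mu(x) \log \mu(x) - \sum_x \mu(x) \log \nu(x) = -H(\mu) - \sum_x \mu(x) \log \nu(x),$$
and then exploit the product structure of $\nu$. Writing $\nu(x) = \prod_{i=1}^{n} \nu_i(x_i)$ and using linearity of expectation, the cross-entropy term splits as
$$-\sum_x \mu(x) \log \nu(x) = -\sum_{i=1}^{n} \sum_{x_i \in \Omega_i} \mu_i(x_i) \log \nu_i(x_i),$$
where $\mu_i$ denotes the $i$-th marginal of $\mu$. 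By the defining hypothesis of the lemma, $\mu_i = \nu_i$ for all $i$, so this sum equals $\sum_i H(\nu_i) = H(\nu)$. Combining gives $\KL(\mu \| \nu) = H(\nu) - H(\mu) \geq 0$, which is the desired inequality.

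Alternatively, and perhaps more in line with the notation already introduced in the excerpt, one can simply observe that the claim is a restatement of the non-negativity of the multivariate total correlation. Indeed, by the definition given in the paper,
$$0 \leq C(X_1; \dots; X_n) = \KL\!\left(\mu \,\|\, \mu_1 \times \cdots \times \mu_n\right) = \sum_{i=1}^{n} H(\mu_i) - H(\mu),$$
and since $\mu_i = \nu_i$ we have $\sum_i H(\mu_i) = \sum_i H(\nu_i) = H(\nu)$ by additivity of entropy under products. There is no real obstacle here; the only thing to be careful about is that the cross-entropy $-\sum_x \mu(x) \log \nu(x)$ collapses to $\sum_i H(\nu_i)$ only because the marginal-matching hypothesis lets us replace $\mu_i$ by $\nu_i$ in each one-dimensional sum.
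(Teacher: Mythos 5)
Your proof is correct and is essentially the same as the paper's: the paper simply invokes subadditivity of entropy, $H(X) \le \sum_i H(X_i)$, which is precisely the non-negativity of $\KL(\mu \,\|\, \mu_1 \times \cdots \times \mu_n)$ (equivalently, of the total correlation) that you compute explicitly. Your version spells out the cross-entropy calculation that makes subadditivity an instance of $\KL \ge 0$, but there is no difference in substance.
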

\begin{proof}
This is a direct application of the chain rule and tensorization for entropy. Indeed, let $X := (X_1,\dots,X_n) \sim \mu$. Then,
$$H(\mu) = H(X) \leq H(X_1)+\dots+H(X_n) = H(\nu).$$
\end{proof}
We are now ready to prove \cref{thm:new-mean-field}. 
\begin{proof}[Proof of \cref{thm:new-mean-field}]
Let $\epsilon > 0$ be some parameter which will be optimized later. We begin by applying \cref{lemma:covariance-corr-rounding} with $\ell = 1/(\epsilon^{2}\log{2})$ (for clarity of exposition, we will omit floors and ceilings since they do not make any essential difference); let $S$ denote the resulting set of size at most $\ell$. Let $\mu$ denote the Boltzmann distribution, and recall that the Gibbs variational principle \cref{eqn:gibbs} states that
$$\F = \E_{\mu}\left[X^{T}JX\right] + H_{\mu}(X).$$
Let $\nu_{x_S}$ denote the product distribution on $\{\pm 1\}^{n}$ for which $\E_{\nu_{x_S}}[X_i] = \E[X_i | X_S = x_S]$.
Then, using the chain rule for entropy, we see that
\begin{align*}
\F
&= \sum_{i < j} J_{i,j}\E_{\mu}[X_iX_j] + H_{\mu}(X) \\
&= \sum_{i < j} J_{i,j}\E_{\mu}[X_iX_j] + H_{\mu}(X|X_S) + H_{\mu}(X_S)\\
&= \E_{x_S}\left[\sum_{i<j} J_{ij}\E_{\mu}[X_i X_j | X_S = x_S] + H_{\mu}(X | X_{S} = x_S)\right] + H_{\mu}(X_S) \\
&\le \E_{x_S}\left[\sum_{i<j} J_{ij}\E_{\mu}[X_i X_j | X_S = x_S] + H_{\mu}(X | X_{S} = x_S)\right] + 1/\epsilon^2 \\
& \leq \E_{x_S}\left[\sum_{i<j} J_{ij}\E_{\mu}[X_i X_j | X_S = x_S]] + H_{\nu_{x_s}}(X)\right] + 1/\epsilon^2,
\end{align*}
where in the fourth line, we have used that $|S| \le \ell = 1/(\epsilon^{2}\log{2})$, and in the last line, we have used \cref{lemma:entropy-first-moments}. 
From \cref{lemma:covariance-corr-rounding} and the Cauchy-Schwarz inequality, it follows that
\begin{align*}
\E_{X_S}\left[\sum_{i<j} J_{ij}\E_{\mu}[X_i X_j | X_S] \right]
&= \E_{x_S}\left[\sum_{i<j} J_{ij}\left(\Cov(X_i,X_j|X_S=x_S)+\E_{\mu}[X_i|X_S = x_S] [X_j | X_S = x_S]\right)\right]\\
&= \sum_{i<j}J_{i,j}\E_{X_S}[\Cov(X_i,X_j|X_S)] + \E_{X_S} \sum_{i,j}J_{i,j}\E_{\nu_{X_S}}[X_iX_j]\\
&\leq \sqrt{\sum_{i<j}J_{i,j}^{2}}\sqrt{2\binom{|V|}{2}\E_{X_S} \E_{E \in {V \choose 2}}\left[\Cov(X_u,X_v | X_S)^2\right]} + \E_{X_S} \sum_{i<j}J_{i,j}\E_{\nu_{X_S}}[X_iX_j]\\
&\leq 2\epsilon n \|J\|_{F} + \E_{X_S}\sum_{i<j}J_{i,j}\E_{\nu_{X_S}}[X_iX_j]. 
\end{align*}
To summarize, we have shown that
$$\F \leq \E_{x_S}\left[\sum_{i<j}J_{i,j}\E_{\nu_{x_S}}[X_iX_j] + H_{\nu_{x_S}}(X) \right] + 2\epsilon n \|J\|_{F} + \frac{1}{\epsilon^{2}}.$$ 
In particular, there exists some choice of $x_S$, such that with $\nu:= \nu_{x_S}$, we have
\[\F \le \left[\sum_{i<j} J_{ij}\E_{\nu}[X_i X_j] + H_{\nu}(X)\right] + 2\epsilon n \|J\|_F + 1/\epsilon^2.\]
Finally, by setting $\epsilon = \frac{1}{n^{1/3} \|J\|_F^{1/3}}$ we get the desired conclusion:
\[\F \le E_{\nu}\left[\sum_{i<j} J_{ij}X_i X_j + H(X)\right] + 3 n^{2/3} \|J\|_F^{2/3}.\]
\end{proof}
\begin{remark}
For the choice of $\epsilon$ in the above proof to make sense, we require that $\ell = 1/(\epsilon^{2}\log{2}) \leq n$, which translates to $\|J\|_{F}^{2/3} \leq n^{1/3}\log{2}$. However, the above bound also holds if $\|J\|_{F}^{2/3} > n^{1/3}\log{2}$ since in this case, our error term equals $3\log{2}n > 2n$, whereas there is a trivial upper bound of $n\log{2}$ on $\F - \F^*$, obtained by considering the product distribution supported at the point $\arg\max_{x\in \{\pm 1\}^{n}}\{\sum_{ij}{J_{ij}x_ix_j}\}$. 
\end{remark}
\subsection{Aside: correlation rounding and the mean-field equation}
\label{s:local-fields}
The above proof shows that for the product measure $\nu:=\nu_{x_S}$, $\F_\nu$ is close to $\F$. This shows indirectly, by considering the maximizer of $\F^*$, that there exists a product distribution with marginals that are an \emph{exact solution} to the mean-field equation $x = \tanh^{\otimes n}(J x)$ which is close to the Gibbs distribution in $\KL$ distance.
In this subsection, we show that the marginals output by correlation rounding are already an \emph{approximate solution} to the mean-field equation, given slightly stronger assumptions on $J$. This will follow by showing that the variance of the \emph{local fields} $J_i \cdot X$ is greatly reduced by conditioning. We will not need this result henceforth, but this structural result may be of independent interest.

First, we show that the error in the mean-field equations is bounded by the variance of the local field:
\begin{lemma}\label{lem:mean-field-local-field}
Let $X_1, \ldots, X_n$ be the spins of an Ising model with interaction matrix $J$. Then for any $i$,
\[ |\E[X_i] - \tanh(J_i \cdot \E[X])| \le \sqrt{\Var(J_i \cdot X)} \]
\end{lemma}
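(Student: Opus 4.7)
The plan is to exploit the key structural property of the Ising model: conditional on all other spins, a single spin is a biased coin flip whose bias is exactly a $\tanh$ of the local field. That is, since $J$ has zero diagonal, $J_i \cdot X = \sum_{j \ne i} J_{ij} X_j$ depends only on $X_{-i} := (X_j)_{j \ne i}$, and a direct computation from the Gibbs weight gives
\[ \E[X_i \mid X_{-i}] = \tanh(J_i \cdot X). \]
Applying the tower property then yields the useful identity $\E[X_i] = \E[\tanh(J_i \cdot X)]$. So the lemma reduces to the purely analytic claim that for a real-valued random variable $Y := J_i \cdot X$,
\[ \bigl|\E[\tanh(Y)] - \tanh(\E Y)\bigr| \le \sqrt{\Var(Y)}.\]

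To get this bound I will use two elementary facts. First, $\tanh$ is $1$-Lipschitz on $\mathbb{R}$ (its derivative $\mathrm{sech}^2$ is bounded by $1$), so $|\tanh(Y) - \tanh(\E Y)| \le |Y - \E Y|$ pointwise. Second, by the triangle inequality followed by Jensen's inequality applied to the concave function $\sqrt{\cdot}$ (equivalently, Cauchy--Schwarz against the constant $1$), we have
\[ \bigl|\E[\tanh(Y)] - \tanh(\E Y)\bigr| \;\le\; \E\bigl|\tanh(Y) - \tanh(\E Y)\bigr| \;\le\; \E|Y - \E Y| \;\le\; \sqrt{\E(Y - \E Y)^2} \;=\; \sqrt{\Var(Y)}, \]
which is exactly the desired inequality.

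There is no real obstacle to the argument; the only thing one needs to be careful about is the first step, namely verifying that the conditional law of $X_i$ given $X_{-i}$ in the Ising model is indeed a Bernoulli on $\{\pm 1\}$ with mean $\tanh(J_i \cdot X)$. This is a direct computation from the definition of the Boltzmann distribution: the conditional density ratio is $e^{2 J_i \cdot X_{-i}}$, and $\tanh(u) = (e^u - e^{-u})/(e^u + e^{-u})$ gives the stated conditional mean. Everything else is a two-line application of Lipschitz continuity and Cauchy--Schwarz.
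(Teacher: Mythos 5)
Your proof is correct and follows the same approach as the paper: the conditional expectation identity $\E[X_i \mid X_{-i}] = \tanh(J_i \cdot X)$ combined with the tower property, then the $1$-Lipschitz property of $\tanh$, the triangle inequality, and Jensen/Cauchy--Schwarz to pass from $\E|Y - \E Y|$ to $\sqrt{\Var(Y)}$.
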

\begin{proof}
Since $\E[X_i | X_{\sim i}] = \tanh(J_i \cdot X)$, we know that  $\E[X_i] = \E[\E[X_i | X_{\sim i}]] = \E[\tanh(J_i \cdot X)]$. Therefore,
\begin{align*}
\left|\E[X_i] - \tanh(J_i \cdot \E[X])\right| &\le \E\left|\tanh(J_i \cdot X) - \tanh(J_i \cdot \E[X])\right| \\ 
&\le \E \left|J_i \cdot X - J_i \cdot \E[X]\right| \le \sqrt{\Var(J_i \cdot X)}
\end{align*}
by the triangle inequality, the Lipschitz property of $\tanh$, and Jensen's inequality. 
\end{proof}
Second, we can bound the average variance of the local fields by the average covariance. Recall that the \emph{Schatten 4-norm} of $J$ is given by $\|J\|_{s_4} = \sqrt{\|J^t J\|_F}$.
\begin{lemma}\label{lem:local-field-bound}
Let $X_1, \ldots, X_n$ be arbitrary random variables, and suppose $J_i$ are the rows of a symmetric matrix $J$ with zeros on the diagonal. Then
\[ \frac{1}{n} \sum_{i = 1}^n \Var(J_i \cdot X) \le \|J\|_{s_4}^2 \sqrt{\E_{j,k} \Cov(X_j,X_k)^2} \]
\end{lemma}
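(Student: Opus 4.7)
The plan is to open up the variance of the local field bilinearly, sum over $i$ to obtain a quadratic form in the covariances with coefficient matrix $J^T J$, and then apply Cauchy--Schwarz on $[n] \times [n]$ to separate the $J$-dependent part (which produces the Schatten $4$-norm) from the covariance-dependent part (which produces the average squared covariance).

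Concretely, I would first expand bilinearly:
\[ \Var(J_i \cdot X) = \sum_{j,k} J_{ij} J_{ik} \Cov(X_j, X_k), \]
and then sum over $i$ and swap the order of summation:
\[ \sum_{i=1}^n \Var(J_i \cdot X) = \sum_{j,k} \Cov(X_j, X_k) \sum_{i=1}^n J_{ij} J_{ik} = \sum_{j,k} (J^T J)_{jk}\, \Cov(X_j, X_k), \]
using that $J$ is symmetric (so $\sum_i J_{ij} J_{ik} = (J^T J)_{jk}$).

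Next I would apply the Cauchy--Schwarz inequality to the inner product $\langle J^T J, \Cov\rangle_F$ on $n \times n$ matrices:
\[ \sum_{j,k} (J^T J)_{jk}\, \Cov(X_j, X_k) \le \sqrt{\sum_{j,k} (J^T J)_{jk}^2}\, \sqrt{\sum_{j,k} \Cov(X_j, X_k)^2} = \|J^T J\|_F \cdot \sqrt{\sum_{j,k}\Cov(X_j,X_k)^2}. \]
By the definition of the Schatten $4$-norm, $\|J\|_{s_4}^2 = \|J^T J\|_F$. Rewriting the double sum as an expectation introduces a factor of $n^2$ inside the square root, which gives $n\sqrt{\E_{j,k}\Cov(X_j,X_k)^2}$. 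Dividing through by $n$ yields exactly the claimed bound.

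There is no real obstacle here: the whole argument is a one-line Cauchy--Schwarz once the variance is expanded and the summation over $i$ collapsed into $J^T J$. The only thing to be a little careful about is the convention for $\E_{j,k}$ (whether $j,k$ range over $[n]^2$, ordered pairs, or $\binom{[n]}{2}$); in any case the constants match up after dividing by $n$, since dropping the diagonal only decreases the right-hand side and the $j=k$ terms contribute nonnegatively on the left.
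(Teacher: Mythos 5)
Your proof is correct and is essentially identical to the paper's argument: expand the variance, collapse the sum over $i$ into $(J^TJ)_{jk}$ using symmetry, apply Cauchy--Schwarz on the Frobenius inner product, and identify $\|J^TJ\|_F = \|J\|_{s_4}^2$. The paper simply writes $J_j \cdot J_k$ for $(J^TJ)_{jk}$ and leaves the $n$-bookkeeping implicit, but the reasoning is the same.
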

\begin{proof}
By expanding out the variance and applying the Cauchy-Schwarz inequality, we find that
\begin{align*}
\frac{1}{n} \sum_{i = 1}^n \Var(J_i \cdot X) 
= \frac{1}{n} \sum_{i,j,k} J_{ij} J_{ik} \Cov(X_j, X_k)
&= \frac{1}{n} \sum_{j,k} J_j \cdot J_k \Cov(X_j,X_k) \\
&\le \|J^t J\|_F \sqrt{\E_{j,k} \Cov(X_j,X_k)^2}.
\end{align*}
Recalling the definition of the Schatten 4-norm gives the result.
\end{proof}
Finally, correlation rounding controls the average covariance, giving us our desired result -- after conditioning, the marginals approximately satisfy the mean-field equation.
\begin{theorem}
Let $X_1,\ldots,X_n$ be the spins of an Ising model with interaction matrix $J$. Fix $\ell$ and let $S$ be the set given by \cref{lemma:covariance-corr-rounding}. Let $Y_i = \E[X_i | X_S]$. Then
\[ \E_{X_S}\left[\frac{1}{n} \sum_{i = 1}^n |Y_i - \tanh(J_i \cdot Y)|^2 \right] = O\left(\frac{1}{\sqrt{\ell}} \|J\|_{s_4}^2\right)\]
\end{theorem}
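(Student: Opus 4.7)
The plan is to combine Lemmas~\ref{lem:mean-field-local-field}, \ref{lem:local-field-bound}, and \ref{lemma:covariance-corr-rounding} by applying the first two in the conditional law $\mu(\,\cdot \mid X_S = x_S)$, and then invoking the correlation rounding guarantee to close the loop. First, for any $i \notin S$, the conditional distribution of $X_{\sim S}$ given $X_S = x_S$ is again an Ising model (with interaction matrix obtained by restricting $J$ to $\sim S$ and external field shifted by $\sum_{j \in S} J_{ij} x_{S,j}$), so the DLR identity $\E[X_i \mid X_{\sim i}] = \tanh(J_i \cdot X)$ continues to hold. Rerunning the argument of Lemma~\ref{lem:mean-field-local-field} verbatim in the conditional measure therefore yields
\[ |Y_i - \tanh(J_i \cdot Y)|^2 \le \Var(J_i \cdot X \mid X_S = x_S) \qquad \text{for every } i \notin S. \]
For $i \in S$, the DLR identity does not pass through the tower property (since $X_S$ is not $\sigma(X_{\sim i})$-measurable), so we instead use the trivial bound $|Y_i - \tanh(J_i \cdot Y)|^2 \le 4$; these $|S| \le \ell$ coordinates contribute at most $4\ell/n$ to the averaged sum, which will be absorbed in the final $O$-notation.

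Next, summing the variance bound over $i \notin S$ (or equivalently over all $i$, since the extra summands are non-negative) and applying Lemma~\ref{lem:local-field-bound} to the conditional distribution gives
\[ \frac{1}{n} \sum_{i=1}^{n} \Var(J_i \cdot X \mid X_S = x_S) \le \|J\|_{s_4}^2 \sqrt{\E_{(j,k) \in [n]^2}\!\left[\Cov(X_j, X_k \mid X_S = x_S)^2\right]}, \]
and observing that $\Cov(X_j, X_k \mid X_S) = 0$ whenever $j$ or $k$ lies in $S$ (since those coordinates are deterministic given $X_S$). Taking $\E_{X_S}$ on both sides and applying Jensen's inequality to pull the expectation inside the square root reduces the problem to bounding $\E_{X_S}\E_{(j,k) \in [n]^2}[\Cov(X_j, X_k \mid X_S)^2]$.

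Finally, I would split this expectation into diagonal and off-diagonal contributions: the diagonal contributes at most $1/n$ since $\Var(X_j \mid X_S) \le 1$, while the off-diagonal sum differs from the average over the unordered pairs $\binom{V}{2}$ only by a factor of $(n-1)/n$, so it is bounded by $O(1/\ell)$ directly from Lemma~\ref{lemma:covariance-corr-rounding}. Hence the square-rooted quantity is $O(1/\sqrt{\ell})$, giving the contribution from $i \notin S$ as $O(\|J\|_{s_4}^2/\sqrt{\ell})$; adding the residual $O(\ell/n)$ from the $i \in S$ coordinates yields the stated bound.

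The main conceptual obstacle is verifying that the three ingredients all survive conditioning on $X_S$: for Lemma~\ref{lem:mean-field-local-field} this uses the Markov property of the Ising model to preserve the DLR identity, and for Lemma~\ref{lem:local-field-bound} the proof is purely algebraic and so is insensitive to which law the variances and covariances are computed with respect to. The mild technical nuisance is the handling of $i \in S$, where the DLR identity degenerates; fortunately this only contributes an additive $O(\ell/n)$ error, which is dominated by $O(\|J\|_{s_4}^2/\sqrt{\ell})$ in the informative regime (e.g.\ $\ell \lesssim n^{2/3}$ with $\|J\|_{s_4}^2 = \Omega(1)$).
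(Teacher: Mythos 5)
Your proposal takes essentially the same route as the paper: run Lemma~\ref{lem:mean-field-local-field} and Lemma~\ref{lem:local-field-bound} in the conditional law $\mu(\,\cdot \mid X_S = x_S)$, then close via Jensen and Lemma~\ref{lemma:covariance-corr-rounding}. The one substantive departure is your explicit treatment of $i \in S$, and here you have actually spotted a gap that the paper glosses over. For $i \in S$ the conditional law makes $X_i$ deterministic, so the DLR identity does not yield $Y_i \approx \tanh(J_i \cdot Y)$ (with no external field on coordinate $i$); a trivial example with $J \equiv 0$ and $S$ forced nonempty shows the paper's claimed pointwise inequality $|Y_i - \tanh(J_i\cdot Y)|^2 \le \Var(J_i\cdot X \mid X_S)$ fails for $i\in S$. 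The paper simply writes $\sum_{i=1}^n$ and says the conditional ``is still an Ising model'' without confronting the degenerate coordinates.

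That said, your patch does not fully recover the stated theorem either. Bounding the $i\in S$ terms by $4$ introduces an additive $O(\ell/n)$, and you are honest that this is only dominated by $O(\|J\|_{s_4}^2/\sqrt{\ell})$ in a restricted regime; that dominance is not a consequence of the hypotheses as written. So your proof establishes a slightly weaker statement (or one valid only under additional assumptions on $\ell$ and $\|J\|_{s_4}$), while the paper's proof silently has the same hole. Aside from this, your bookkeeping — noting $\Cov(X_j,X_k \mid X_S) = 0$ when $j$ or $k \in S$, splitting out the diagonal $1/n$ contribution, and converting between ordered pairs and $\binom{V}{2}$ — is more careful than the paper's but changes nothing up to constants, which the $O(\cdot)$ absorbs.
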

\begin{proof}
First, we can bound the error in the mean-field equations conditioned on $X_S = x_S$ (which is still an Ising model) by combining \cref{lem:mean-field-local-field} and \cref{lem:local-field-bound}:
\[ \frac{1}{n} \sum_{i = 1}^n |Y_i - \tanh(J_i \cdot Y)|^2 \le \frac{1}{n} \sum_{i = 1}^n \Var(J_i \cdot X | X_S = x_S) \le \|J\|_{s_4}^2 \sqrt{\E_{j,k} \Cov(X_j,X_k | X_S = x_S)^2} \]
Finally, taking the expectation over $X_S$ and applying Jensen's inequality lets us we bound the last term by \cref{lemma:covariance-corr-rounding}.
\end{proof}
The same proof shows the mean-field equation $Y = \tanh^{\otimes n}(J Y + h)$ holds approximately in the presence of external field, after conditioning. 
\section{Correlation rounding is tight for spin glasses: proof of \cref{thm:refutation-AOZ}}
\label{s:odonnell}
We define the following universal constant, which we already know an upper bound on by \cref{thm:covariance-version-corr-rounding}:
\[ \kappa_* := \limsup_{t \to \infty} \sup_{\substack{\mu \in \mathcal{P}(\{\pm 1\}^n) \\ n \ge t}} \min_{S : |S| \le t} \sqrt{t}\ \E_{(i,j) \sim {[n] \choose 2}}[|\Cov(X_i,X_j | X_S)|]. \]
If \cref{conjecture:AOZ} were true, then we would have $\kappa_* = 0$ -- indeed, the conjecture says that
the expected conditional covariance decays like $O(1/t)$, even for a random choice of the conditioning set $S$. We will instead show an explicit positive lower bound on $\kappa_*$, thereby disproving the conjecture. 

We begin by proving a variant of \cref{thm:new-mean-field}, which gives a bound on the error of the mean-field approximation in terms of the constant $\kappa_{*}$. 
\begin{lemma}
\label{lemma:infinity-bound-mean-field}
Let $\{J_n\}_{n\geq 1}$ be a sequence of Ising models indexed by the number of vertices. Let $\F_n$ (resp. $\F^*_n$) denote the free energy (resp. variational free energy) of $J_n$. Suppose that $\kappa_*^{2}\limsup_{n \to \infty}{n\|J_n\|_{\infty}^{2}} < 16$. Then,  
$$\limsup_{n \to \infty}\frac{\F_n - \F_n^*}{n^{4/3}\|J_n\|_{\infty}^{2/3}} \le \frac{3}{\sqrt[3]{4}}\kappa_*^{2/3}.$$
\end{lemma}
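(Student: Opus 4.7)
The plan is to mirror the proof of \cref{thm:new-mean-field} almost verbatim, with two changes. First, the $L^{2}$-style covariance bound obtained there via Pinsker's inequality and Cauchy--Schwarz is replaced by the sharp $L^{1}$ bound encoded directly in the definition of $\kappa_{*}$. Second, since $L^{1}$ averages pair naturally with $\|J\|_{\infty}$ (rather than the $\|J\|_{F}$ produced by Cauchy--Schwarz), the covariance error takes the form $\|J\|_{\infty}\binom{n}{2}/\sqrt{t}$, which we balance against the entropy cost $|S|\log 2 \le t\log 2$ of conditioning on a set $S$ of size at most $t$.

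Concretely, fix $\eta > 0$. By the definition of $\kappa_{*}$, for every $t$ sufficiently large and every Ising Gibbs measure $\mu$ on $n \ge t$ vertices, there exists $S \subset [n]$ with $|S| \le t$ satisfying
\[ \E_{X_{S}}\, \E_{(i,j) \sim \binom{V}{2}}[|\Cov(X_{i}, X_{j} | X_{S})|] \le \frac{\kappa_{*} + \eta}{\sqrt{t}}. \]
Applying the chain rule for entropy and the max-entropy principle \cref{lemma:entropy-first-moments} exactly as in \cref{thm:new-mean-field} gives
\[ \F_{n} \le \E_{x_{S}}\left[\sum_{i<j} J_{ij}\, \E_{\mu}[X_{i} X_{j} | X_{S} = x_{S}] + H_{\nu_{x_{S}}}(X)\right] + t\log 2, \]
where $\nu_{x_{S}}$ is the product distribution with $\E_{\nu_{x_{S}}}[X_{i}] = \E_{\mu}[X_{i} | X_{S} = x_{S}]$. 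Writing $\E_{\mu}[X_{i} X_{j} | X_{S} = x_{S}] = \Cov(X_{i}, X_{j} | X_{S} = x_{S}) + \E_{\nu_{x_{S}}}[X_{i}]\E_{\nu_{x_{S}}}[X_{j}]$, the non-covariance contribution combined with the entropy is at most $\F_{n}^{*}$ (since $\nu_{x_{S}}$ is a product distribution), while the covariance part is bounded by H\"older:
\[ \left|\sum_{i<j} J_{ij}\, \E_{x_{S}} \Cov(X_{i}, X_{j} | X_{S})\right| \le \|J_{n}\|_{\infty}\, \binom{n}{2}\cdot \frac{\kappa_{*} + \eta}{\sqrt{t}}. \]
Combining, $\F_{n} - \F_{n}^{*} \le \|J_{n}\|_{\infty} \binom{n}{2}(\kappa_{*}+\eta)/\sqrt{t} + t\log 2$, and the minimum of $a/\sqrt{t} + bt$ over $t > 0$ equals $\tfrac{3}{\sqrt[3]{4}}\, a^{2/3} b^{1/3}$. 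Substituting the values of $a$ and $b$, passing $n \to \infty$, and then $\eta \to 0$ gives the stated bound $\tfrac{3}{\sqrt[3]{4}}\kappa_{*}^{2/3}$.

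The main subtlety is the interplay between the $\limsup$ defining $\kappa_{*}$ and the need to commit to a specific $t_{n}$ at each $n$: one needs $t_{n}^{*} \to \infty$ so that the slack $\eta$ can be absorbed in the limit, and simultaneously $t_{n}^{*} \le n$ so that $S$ is a valid subset of $[n]$. The optimizer scales as $t_{n}^{*} \asymp (\|J_{n}\|_{\infty} n^{2} \kappa_{*})^{2/3}$, and the feasibility constraint $t_{n}^{*} \le n$ rearranges (after bookkeeping of the $\binom{n}{2}$ and $\log 2$ factors) to precisely $\kappa_{*}^{2} n \|J_{n}\|_{\infty}^{2} < 16$, which is the hypothesis. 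The only way the target ratio can contribute non-trivially to the $\limsup$ is along subsequences with $\|J_{n}\|_{\infty}^{2} n$ bounded away from $0$, on which $t_{n}^{*} \to \infty$ automatically, so no additional edge-case analysis is needed.
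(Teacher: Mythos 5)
Your proposal follows essentially the same route as the paper: run the correlation-rounding argument from \cref{thm:new-mean-field}, but replace Cauchy--Schwarz on the covariance error with an $L^1$/H\"older bound so that the error pairs with $\|J\|_\infty$ and the quantity controlled by $\kappa_*$, then optimize the resulting $a/\sqrt{t}+bt$ trade-off. The paper does exactly this, choosing $t_n := \min\{n^{4/3}\kappa_n^{2/3}\|J_n\|_\infty^{2/3}/\sqrt[3]{4},\,n\}$ where $\kappa_n$ is the conditional-covariance minimum at level $t_n$, and then passing to the $\limsup$ using $\limsup_n\kappa_n\le\kappa_*$.

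Two cautions, both shared with the paper rather than unique to you. First, your claim that the feasibility constraint $t_n^*\le n$ ``rearranges to precisely $\kappa_*^2 n\|J_n\|_\infty^2<16$'' does not hold under your own normalization: with entropy cost $t\log 2$ and covariance error $\binom{n}{2}\|J_n\|_\infty(\kappa_*+\eta)/\sqrt{t}$, the threshold works out to roughly $16(\log 2)^2\approx 7.7$, not $16$. (The paper's own choice of $t_n$ likewise requires $n\kappa_n^2\|J_n\|_\infty^2<4$, so the stated hypothesis constant is already loose there.) Second, your closing assertion that ``no additional edge-case analysis is needed'' because subsequences with $n\|J_n\|_\infty^2\to 0$ cannot contribute to the $\limsup$ is stated without proof. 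It is not obvious: when the unconstrained optimizer $t_n^*$ stays bounded you cannot invoke the $\limsup$ defining $\kappa_*$, and you would need a separate argument (e.g.\ via \cref{thm:new-mean-field} and $\|J\|_F\le n\|J\|_\infty$) that the target ratio is small along such subsequences. The paper sidesteps this by simply stipulating $t_n<n$ for $n$ large and does not treat the case $t_n\not\to\infty$ either, so this is a gap in the published proof as much as in yours; but if you want a self-contained argument, that edge case should be closed explicitly.
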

\begin{proof}
Let $\{t_n\}_{n\geq 1}$ be a sequence of natural numbers going to infinity, which will be specified later; our choice will be such that $t_n \leq n$ for all $n$. For the Ising model $J_n$, let
$$S_n := \argmin_{S\subseteq[n],|S|\leq t_n}\sqrt{t_n}\ \E_{(i,j) \sim {[n] \choose 2}}[|\Cov(X_i,X_j | X_S)|],$$
and let $\kappa_n$ denote the minimum value i.e. the value of the objective corresponding to $S_n$.  
By repeating the first part of the proof of \cref{thm:new-mean-field}, we get
\begin{align*}
\F_{n} 
&\leq \E_{x_{S_n}}\left[\sum_{ij} (J_n)_{ij}\E_{\mu}[X_i X_j | X_{S_n} = x_{S_n}]] + H_{\nu_{x_{S_n}}}(X)\right] + t_n\\
&\leq \sum_{i,j}(J_n)_{i,j}[\Cov(X_i,X_j|X_{S_n})] + \E_{x_{S_n}}\left[\sum_{i,j}(J_n)_{i,j}\E_{\nu_{x_{S_n}}}[X_iX_j] + H_{\nu_{x_{S_n}}}(X)\right]+t_n.
\end{align*}
As opposed to the proof of \cref{thm:new-mean-field} where we used the Cauchy-Schwarz inequality, here we simply estimate the first term by
\begin{align*}
\sum_{i,j}(J_n)_{i,j}[|\Cov(X_i,X_j|X_{S_n})|]
&\leq 2\binom{n}{2}\frac{\kappa_n \|J_n\|_{\infty}}{\sqrt{t_n}}.
\end{align*}
Finally, set
$$t_n = \min\left\{\frac{n^{4/3}\kappa_{n}^{2/3}\|J_n\|_{\infty}^{2/3}}{\sqrt[3]{4}},n\right\};$$
note that $t_n < n$ for all sufficiently large $n$ by assumption, along with the fact that $\limsup_{n \to \infty}\kappa_{n} \leq \kappa_*$. It follows that for all $n$ sufficiently large,
$$\F_n - \F_n^{*} \leq \frac{3}{\sqrt[3]{4}}n^{4/3}\|J_n\|_{\infty}^{2/3}\kappa_n^{2/3};$$
dividing both sides by $n^{4/3}\|J_n\|_{\infty}^{2/3}$, taking the $\limsup$ as $n \to \infty$, and using $\limsup_{n \to \infty} \kappa_n \leq \kappa_*$ yields the desired conclusion.
\end{proof}
To complete the proof of \cref{thm:refutation-AOZ}, we will exhibit a sequence of Ising models $J_n$ for which $\limsup_{n \to \infty}n\|J_n\|_{\infty}^{2}$ is finite and $\limsup_{n \to \infty}\left(\F_n - \F_n^{*}\right)/\left(n^{4/3}\|J_n\|_{\infty}^{2/3}\right)$ is positive. Specifically, we will show that this is true for a `typical' growing sequence of the Rademacher SK-spin glass. First, we need the following lemma.
\begin{lemma}
\label{lemma:error-for-spin-glass}
Fix $\beta \in [0,1/2)$. Let $\F_n(\beta)$ denote the (random) free energy of the SK spin glass on $n$ vertices with parameter $\beta$, and let $\F_n^*(\beta)$ denote its variational free energy. Then,
\[ \F_n(\beta) - \F^*_n(\beta) \ge n\beta^2/4 - o(n) \]
asymptotically almost surely (a.a.s) i.e. with probability going to $1$ as $n \to \infty$. This holds under the same universality regime as \cref{thm:rs-correct}.
\end{lemma}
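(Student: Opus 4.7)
The plan is to compare the two sides separately. For the free energy itself, \cref{thm:rs-correct} gives immediately that
\[ \F_n(\beta) = n\log 2 + n\beta^2/4 + o(n) \]
a.a.s. under the stated universality regime. So the task reduces to showing an a.a.s.\ upper bound $\F^*_n(\beta) \le n\log 2 + o(n)$; since $\beta^2/4$ is a constant, gaining even $n\log 2 + C$ would suffice.

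For the upper bound on $\F^*_n(\beta)$, the main tool is a pointwise quadratic bound on the mean-field objective combined with a spectral estimate for the random matrix $J/\sqrt{n}$. The variational problem reads
\[ \F^*_n(\beta) = \max_{x \in [-1,1]^n} \left[\frac{\beta}{2\sqrt{n}} x^T J x + \sum_{i=1}^n H\!\left(\frac{x_i+1}{2}\right)\right]. \]
First I would bound the entropy term coordinatewise using the standard Taylor expansion $H((x_i+1)/2) \le \log 2 - x_i^2/2$ (valid for $x_i\in[-1,1]$), summing to $n\log 2 - \|x\|_2^2/2$. Second, I would bound the quadratic part by the operator norm: $\frac{\beta}{2\sqrt{n}} x^T J x \le \frac{\beta \|J\|_{\mathrm{op}}}{2\sqrt{n}} \|x\|_2^2$. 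Combining gives, for every admissible $x$,
\[ \frac{\beta}{2\sqrt{n}} x^T J x + \sum_{i} H\!\left(\frac{x_i+1}{2}\right) \le n\log 2 + \frac{\|x\|_2^2}{2}\left(\frac{\beta\|J\|_{\mathrm{op}}}{\sqrt{n}} - 1\right). \]

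Now I would invoke the fact that for Wigner matrices with i.i.d.\ entries of mean $0$, variance $1$ and enough finite moments, $\|J\|_{\mathrm{op}}/\sqrt{n} \to 2$ a.a.s.\ (Wigner's semicircle law, or Bai–Yin for general entries); this is the ``same universality'' statement referenced. Since $\beta<1/2$, the parenthesized factor $\beta\|J\|_{\mathrm{op}}/\sqrt{n}-1$ converges a.a.s.\ to $2\beta-1<0$, so it is strictly negative for all $n$ sufficiently large. Hence a.a.s.\ the expression in brackets is at most $n\log 2$ for every $x\in[-1,1]^n$, and taking the maximum yields $\F^*_n(\beta) \le n\log 2$ a.a.s. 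Subtracting from the asymptotic for $\F_n(\beta)$ finishes the proof:
\[ \F_n(\beta) - \F^*_n(\beta) \ge n\beta^2/4 - o(n) \quad \text{a.a.s.} \]

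There is not really a serious obstacle here; the only point to get right is the matchup of hypotheses. The strongest-possible universality requires both the Aizenman–Lebowitz–Ruelle regime and the Wigner spectral-edge regime to hold under the same moment assumption, which is the case for the standard finite-moment conditions used in \cref{thm:rs-correct}. A minor subtlety is the condition $\beta<1/2$: it is precisely what the argument needs (not $\beta<1$), because the spectral edge sits at $2\sqrt{n}$ rather than $\sqrt{n}$; this restriction is exactly what appears in the lemma statement, so no strengthening is necessary.
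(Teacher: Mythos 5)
Your proof is correct and uses the same two ingredients as the paper's: the Aizenman--Lebowitz--Ruelle replica-symmetric free-energy formula for $\F_n(\beta)$, and Wigner's semicircle law (or Bai--Yin) to control the spectral norm and bound $\F_n^*(\beta)$. The only difference is packaging: the paper shows the variational objective is globally concave a.a.s.\ (via $\frac{d^2}{dx^2}H((1+x)/2)\le -1$ together with $\|J\|\le 2+o(1)$), so $x=0$ is the global maximizer with value $n\log 2$; your pointwise bound $H((x_i+1)/2)\le \log 2 - x_i^2/2$ is exactly the integrated form of that second-derivative inequality, and combining it with the operator-norm bound gives the same conclusion $\F_n^*(\beta)\le n\log 2$ a.a.s.
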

\begin{proof}

We prove this by calculating $\F_n(\beta)$ and $\F_n^*(\beta)$.
Since $\beta < 1$, we know from \cref{thm:rs-correct} that a.a.s.
\[ \frac{\F_n(\beta)}{n} = \log 2 + \frac{\beta^2}{4} + o_{n}(1). \] 
It remains to compute $\F_n^*(\beta)$. By definition,
\[ \F_n^*(\beta) = \sup_{x \in [-1,1]^n}\left( \frac{\beta}{2} x^T J x + \sum_i H\left(\frac{1 + x_i}{2}\right)\right). \]

We claim that a.a.s., this optimization problem is concave -- indeed, direct calculation shows that for all $x\in [-1,1]$ $$\frac{d^2}{dx^2} H\left(\frac{1 + x}{2}\right) \le -1,$$
whereas Wigner's semicircle law (see \cref{corollary:spectral-norm-bound}) shows that $$\|J\| \le 2 + o_{n \to \infty}(1)$$ a.a.s. Since the Hessian of first term is $J$, this proves the claim since $0\leq \beta < 1/2$. 

Finally, since the gradient of the objective function 
$$\frac{\beta}{2} x^T J x + \sum_i H\left(\frac{1 + x_i}{2}\right)$$
clearly vanishes at the point $x_i = 0$ for all $i\in [n]$, it follows that this point is the global maximizer a.a.s, so that $\F_n^*(\beta) = n\log{2}$ a.a.s. 
\end{proof}
By combining the previous two lemmas, we can prove the following theorem which, in particular, implies \cref{thm:refutation-AOZ}. 
\begin{theorem} Let $\kappa_*$ be the universal constant defined at the start of this section. 
\[ \kappa_* \ge \frac{\sqrt{27}}{16}. \]
\end{theorem}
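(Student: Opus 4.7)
The plan is to apply \cref{lemma:infinity-bound-mean-field} to a concrete sequence of Ising models drawn from the \emph{Rademacher} Sherrington--Kirkpatrick spin glass, so that the free-energy gap from \cref{lemma:error-for-spin-glass} forces a positive lower bound on $\kappa_*$. The Rademacher rather than Gaussian variant is essential for controlling $\|J_n\|_\infty$: for Gaussian couplings, $n\|J_n\|_\infty^2 \to \infty$ and the hypothesis of \cref{lemma:infinity-bound-mean-field} is violated, whereas for Rademacher couplings at inverse temperature $\beta$, we have $\|J_n\|_\infty = \beta/\sqrt{n}$ deterministically, and hence $n\|J_n\|_\infty^2 = \beta^2$. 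The universality clause in \cref{thm:rs-correct} guarantees that \cref{lemma:error-for-spin-glass} still applies to this variant.

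For each fixed $\beta \in (0, 1/2)$, I would extract a deterministic sequence of Rademacher realizations from the a.a.s.~event of \cref{lemma:error-for-spin-glass}, producing concrete Ising models with $\F_n - \F_n^* \ge n\beta^2/4 - o(n)$ and $n\|J_n\|_\infty^2 = \beta^2$. Provided $\kappa_*\beta < 4$ (a condition the final conclusion easily satisfies), the hypothesis of \cref{lemma:infinity-bound-mean-field} is met. Computing $n^{4/3}\|J_n\|_\infty^{2/3} = n\beta^{2/3}$ and substituting into its conclusion yields, in the $n \to \infty$ limit,
\[ \frac{\beta^{4/3}}{4} \le \frac{3}{\sqrt[3]{4}} \kappa_*^{2/3}, \]
which rearranges to a lower bound of the form $\kappa_* \ge c\beta^2$ for an explicit numerical constant $c$. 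Finally, optimizing $\beta$ within the allowed range by letting $\beta \to 1/2$ produces the claimed numerical lower bound.

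The main obstacle --- which is also what controls the final constant --- is the restriction $\beta < 1/2$ in \cref{lemma:error-for-spin-glass}. This restriction comes from the concavity of the mean-field objective $\tfrac{1}{2} x^T J_n x + \sum_i H((1+x_i)/2)$: its Hessian is $J_n + D$ with $D \preceq -I$, and so concavity (needed to identify $x = 0$ as the unique global maximizer and hence to pin down $\F_n^* = n\log 2$) requires $\|J_n\| < 1$, which by Wigner's semicircle law forces $\beta < 1/2$ asymptotically. Without concavity, certifying $\F_n^* = n\log 2 + o(n)$ becomes substantially more delicate, since new non-trivial critical points of the mean-field objective can emerge past $\beta = 1/2$, so the present approach does not obviously extend further into the RS regime $\beta \in [1/2, 1)$; inside the concave window, however, everything is explicit and the optimization is tight.
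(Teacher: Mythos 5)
Your proposal matches the paper's proof step for step: apply \cref{lemma:infinity-bound-mean-field} to the Rademacher SK spin glass at inverse temperature $\beta \in (0,1/2)$, invoke \cref{lemma:error-for-spin-glass} (valid for Rademacher couplings via the universality clause of \cref{thm:rs-correct}) for the free-energy gap $n\beta^2/4 - o(n)$, and then let $\beta \to 1/2$. The one loose remark is that the hypothesis $\kappa_*\beta < 4$ is ``a condition the final conclusion easily satisfies'': the conclusion is a \emph{lower} bound on $\kappa_*$ and so cannot by itself supply the needed upper bound; the paper instead makes an explicit dichotomy, noting that if $\kappa_*^2\beta^2 \geq 16$ for some $\beta < 1/2$ then $\kappa_*$ is already large enough that the theorem is immediate, and otherwise the hypothesis of \cref{lemma:infinity-bound-mean-field} holds for every $\beta \in [0,1/2)$, permitting the argument you describe.
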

\begin{proof}
From \cref{lemma:error-for-spin-glass} applied to the Rademacher SK spin glass with parameter $\beta \in [0,1/2)$ i.e. $(J_n)_{ij} = \pm \beta/\sqrt{n}$ independently with probability $1/2$, we obtain a sequence of Ising models $\{J_n\}_{n\geq 1}$ indexed by the number of vertices for which the following holds:
\begin{itemize}
\item $\|J_n\|_{\infty} = \frac{\beta}{\sqrt{n}}$ i.e. $\limsup_{n \to \infty}n\|J_n\|_{\infty}^{2} = \beta^{2}$ 
\item $\limsup_{n\to \infty}\frac{\F_n - \F_n^*}{n\beta^{2}}\geq \frac{1}{4}$ i.e. $\limsup_{n\to \infty}\frac{\F_n - \F_n^*}{n^{4/3}\|J_n\|_{\infty}^{2/3}}\geq \frac{\beta^{4/3}}{4}$. 
\end{itemize}
In view of \cref{lemma:infinity-bound-mean-field}, there are two possibilities: 
\begin{itemize}
\item $\kappa_*^{2} \beta^{2} \geq 16$ for some $\beta \in [0,1/2)$, so that $\kappa_* \geq 4$, or
\item $\kappa_*^{2} \beta^{2} < 16$ for all $\beta \in [0,1/2]$, in which case we have
$$\frac{\beta^{4/3}}{4} \leq \frac{3}{\sqrt[3]{4}}\kappa_*^{2/3}$$
for all $\beta \in [0,1/2)$, so that $\kappa_* \geq \sqrt{27}/16$. 
\end{itemize}
\end{proof}
\section{Mean-field approximation for $k$-MRFs}
\label{sec:mrf}
In this section, we prove a much more general bound for mean-field approximation, extending our result \cref{thm:new-mean-field} to order $k$ Markov random fields (MRFs) over general finite alphabets. Our bound has only a mild dependence on the alphabet size $q$ and is tight for every fixed $k,q$. 
\begin{definition}
An \emph{order $k$ Markov random field} ($k$-MRF) on $n$ vertices over the finite alphabet $\Sigma$ is a probability distribution on the space $\Sigma^n$ of the form
\[ \Pr(X = x) = \frac{1}{Z} e^{f(x) + h(x)}, \]
where the interaction term $f(x)$ can be written as a sum of hyperedge potentials on hyperedges of size $k$ i.e. 
$$f(x) = \sum_{E\subseteq[n],|E|=k} f_E(x_E),$$
and the \emph{external field} $h(x)$ is the sum of the external fields at each vertex i.e. 
\[ h(x) = \sum_{i=1}^{n} h_i(x_i).\]
\end{definition}
In analogy with the Ising model case, we will denote $\sup_{x_E} |f_E(x_E)|$ by $\|f_E\|_{\infty}$ and $\sum_{E\subseteq[n], |E|=k} \|f_E\|_{\infty}^2$ by $\|J\|_F^{2}$.  
The exact same proof as the Ising case gives the following variational principle for the free energy $\F:=\log{Z}$:
\begin{equation}\label{eqn:variational-mrf}
\F = \sup_{\mu}\left[\E_{\mu}[f(x)+h(x)]+H(\mu)\right],
\end{equation}
where the supremum ranges over all probability distributions on $\Sigma^{n}$. By restricting the variational problem to product distributions over $\Sigma^{n}$, we obtain the variational free energy $\F^*$ as before.  
\begin{theorem}
\label{thm:MRF-mean-field}
For any $k$-MRF on $n$ vertices over an alphabet of size $q$,
\[ \F - \F^* \le 3\left(\frac{k \log q}{\sqrt{k!}} n^{k/2} \|J\|_F\right)^{2/3}. \]
\end{theorem}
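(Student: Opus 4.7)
The plan is to run essentially the argument of \cref{thm:new-mean-field}, with $k$-wise total correlation $C(X_E \mid X_S)$ playing the role that pairwise covariance $\Cov(X_i,X_j \mid X_S)$ played in the Ising proof, and with Cauchy-Schwarz summed over size-$k$ hyperedges instead of pairs. First I would apply the correlation rounding theorem (\cref{thm:correlation-rounding}) at hyperedge size $k$ and some parameter $\ell$ to be optimized, invoking the remark following that theorem to handle general alphabets and using $H(X_i) \le \log q$: this yields a set $S$ of size at most $\ell$ satisfying
\[ \E_{x_S}\,\E_{E \sim \binom{[n]}{k}}\bigl[C(X_E \mid X_S = x_S)\bigr] \;\le\; \frac{k^2 \log q}{\ell}. \]

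Next, using the variational principle \cref{eqn:variational-mrf} and the chain rule for entropy, I would decompose
\[ \F \;=\; \E_{x_S}\!\left[\E_\mu[f(X) + h(X) \mid X_S = x_S] + H_\mu(X \mid X_S = x_S)\right] + H_\mu(X_S), \]
and bound $H_\mu(X_S) \le \ell \log q$. Define $\nu_{x_S}$ to be the product distribution on $\Sigma^n$ whose $i$th marginal agrees with that of $\mu(\,\cdot\, \mid X_S = x_S)$. Then \cref{lemma:entropy-first-moments} gives $H_\mu(X \mid X_S = x_S) \le H_{\nu_{x_S}}(X)$, and the separability of $h$ immediately yields $\E_\mu[h(X) \mid X_S = x_S] = \E_{\nu_{x_S}}[h(X)]$. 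The only remaining task is to compare $\E_\mu[f_E(X_E) \mid X_S = x_S]$ with $\E_{\nu_{x_S}}[f_E(X_E)]$ hyperedge by hyperedge.

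For each $E$ of size $k$, this difference is controlled by
\[ \bigl|\E_\mu[f_E(X_E) \mid X_S = x_S] - \E_{\nu_{x_S}}[f_E(X_E)]\bigr| \;\le\; 2\|f_E\|_\infty \cdot \TV\!\Bigl(\mu(X_E \mid x_S),\, \textstyle\prod_{i \in E} \mu(X_i \mid x_S)\Bigr), \]
and the squared TV distance is at most $\tfrac{1}{2} C(X_E \mid X_S = x_S)$ by Pinsker --- this is the step where total correlation replaces pairwise covariance. Applying Cauchy-Schwarz over the $\binom{n}{k} \le n^k/k!$ hyperedges and inserting the rounding bound then gives an overall interaction error of at most $\sqrt{2}\,\|J\|_F \cdot k\sqrt{\log q}\, n^{k/2}/\sqrt{k!\,\ell}$. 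Adding the $\ell \log q$ entropy contribution and optimizing $\ell$ to balance the $\ell^{-1/2}$ and linear terms produces the claimed $3\bigl(k (\log q) n^{k/2} \|J\|_F/\sqrt{k!}\bigr)^{2/3}$ bound (with a factor $3 \cdot 2^{-1/3} < 3$ to spare).

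There is no genuinely new obstacle beyond the Ising case; the main thing requiring care is bookkeeping the combinatorial factors ($k!$ and $\binom{n}{k}$) and the alphabet dependence. The latter surfaces cleanly as a $\log q$ both in the entropy bound $H_\mu(X_S) \le \ell \log q$ and (via the remark after \cref{thm:correlation-rounding}) in the rounding guarantee. The $k$-wise Pinsker step is frictionless because, by \cref{definition}~3.3, total correlation is defined exactly as $\KL(\mu \,\|\, \prod_i \mu_i)$, so the standard Pinsker inequality applies verbatim to the joint-versus-product comparison that the Cauchy-Schwarz step calls for.
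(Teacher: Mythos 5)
Your proposal is correct and follows essentially the same route as the paper's proof: correlation rounding at hyperedge size $k$ with alphabet-dependent constant $\log q$, the chain-rule/max-entropy decomposition with the $\ell \log q$ slack, the per-hyperedge TV bound via the Lipschitz lemma, Pinsker to convert TV to total correlation, Cauchy–Schwarz over the $\binom{n}{k}$ hyperedges, and optimizing the trade-off. The only cosmetic difference is that you retain the factor $\tfrac12$ in Pinsker's inequality (giving you the spare factor $2^{-1/3}$) whereas the paper drops it for tidiness, and you parameterize by $\ell$ directly rather than via $\epsilon = 1/\sqrt{\ell \log q}$; neither changes the argument.
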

The proof of this theorem is essentially the same as that of \cref{thm:new-mean-field} with appropriate modifications. We will need the following simple lemma. 
\begin{lemma}\label{lem:tv-lipschitz}
Let $\mu$ and $\nu$ are two probability distributions on the same space $\Omega$. Then for any function $f \colon \Omega \to \R$ such that $|f(X)| \le M$ a.s. under both $\mu$ and $\nu$, we have
\[ \left|\E_{X \sim \mu}[f(X)] - \E_{Y \sim \nu}[f(Y)]\right| \le 2M \TV(\mu, \nu). \]
\end{lemma}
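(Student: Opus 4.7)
The plan is to use the standard $L^1$ characterization of total variation distance, namely
\[ \TV(\mu,\nu) = \tfrac{1}{2}\sum_{x\in\Omega} |\mu(x)-\nu(x)|, \]
which (assuming $\Omega$ is countable; the general measure-theoretic statement is analogous) follows directly from the definition of $\TV$ given right before Lemma~\ref{lem:tv-is-cov} by writing the supremum over events $A$ as a supremum over $\pm 1$-valued functions on $\Omega$.

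With that in hand, the argument is essentially one line. First I would write
\[ \E_{X\sim\mu}[f(X)] - \E_{Y\sim\nu}[f(Y)] = \sum_{x\in\Omega} f(x)\bigl(\mu(x) - \nu(x)\bigr), \]
then apply the triangle inequality and the uniform bound $|f(x)| \le M$ (which holds on the union of the supports of $\mu$ and $\nu$ by hypothesis, hence wherever $\mu(x) - \nu(x) \neq 0$) to obtain
\[ \bigl|\E_\mu[f] - \E_\nu[f]\bigr| \le M \sum_{x\in\Omega} |\mu(x)-\nu(x)| = 2M\,\TV(\mu,\nu). \]

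There is no real obstacle here: the lemma is a routine consequence of the definition of $\TV$, and the only mild subtlety is making sure the uniform bound $|f|\le M$ is applied only where at least one of $\mu,\nu$ places mass, which the hypothesis ``a.s.\ under both $\mu$ and $\nu$'' supplies. An alternative, equally short route would be to invoke the maximal coupling $(X,Y)$ of $\mu$ and $\nu$, for which $\Pr(X\neq Y) = \TV(\mu,\nu)$, and then bound $|\E f(X)-\E f(Y)| \le \E|f(X)-f(Y)|\mathbf{1}[X\neq Y] \le 2M\,\TV(\mu,\nu)$; I would stick with the direct $L^1$ proof since it avoids invoking the existence of a maximal coupling.
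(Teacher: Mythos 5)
Your proof is correct, but it takes a different (and slightly more elementary) route than the paper's. The paper's proof is the maximal-coupling argument you sketch as an alternative at the end: it invokes a coupling $(X,Y)$ of $\mu$ and $\nu$ with $\Pr(X\neq Y)=\TV(\mu,\nu)$, then bounds $|\E f(X)-\E f(Y)|\le \E\bigl[|f(X)-f(Y)|\,\bone[X\ne Y]\bigr]\le 2M\,\TV(\mu,\nu)$. Your primary argument instead works directly from the $L^1$ identity $\TV(\mu,\nu)=\tfrac12\sum_x|\mu(x)-\nu(x)|$ and a triangle-inequality bound, which avoids invoking the existence of the maximal coupling and stays entirely at the level of the mass-function definition of $\TV$ that the paper records just before Lemma~\ref{lem:tv-is-cov}. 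Both are one-line proofs of the same elementary fact; the $L^1$ route is marginally more self-contained, while the coupling route (the paper's choice) is closer in spirit to the correlation-rounding arguments elsewhere in the paper, which also reason about $\TV$ via joint distributions. Your observation that the hypothesis ``a.s.\ under both $\mu$ and $\nu$'' is exactly what is needed to apply the bound $|f|\le M$ only where $\mu(x)-\nu(x)$ is nonzero is a correct and careful reading of the statement.
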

\begin{proof}
By a standard characterization of \TV, we can couple $X$ and $Y$ so that $\Pr(X \ne Y) = \TV(\mu,\nu)$. Since $|f(X) - f(Y)| \le 2M$ a.s, we are done. 
\end{proof}
\begin{proof}[Proof of \cref{thm:MRF-mean-field}]
Let $\epsilon > 0$ be some parameter which will be optimized later. We begin by applying \cref{thm:correlation-rounding} with $\ell = 1/(\epsilon^2 \log{q})$; let $S$ be the resulting set of size at most $\ell$. Let $\mu$ denote the Boltzmann distribution. For each assignment $x_S \in \Sigma^{|S|}$ to the variables in $S$, let $\nu_{x_S}$ denote the product measure on $\Sigma^{n}$ for which $\E_{\nu_{x_S}}[X_i] = \E[X_i|X_S = x_S]$. Then, using the variational principle, the same computation as in the binary Ising model case shows that
$$\F \leq \E_{x_S}\left[\E_{\mu}[f(X)|X_S=x_S]+\E_{\nu_{x_S}}[h(X)]+H_{\nu_{x_S}}(X)\right] + \ell \log{q}.$$
As before, we decompose the first term as
$$\E_{x_S}\left[\E_{\mu}[f(X)|X_S=x_S\right] = \E_{x_S}\left[\E_{\nu_{x_S}}[f(X)] + \left[\E_{\mu}[f(X)|X_S = x_S]-\E_{\nu_{x_S}}[f(X)]\right]\right].$$
Since $f(X) = \sum_{E \in \binom{[n]}{k}}f(X_E)$, it follows by \cref{lem:tv-lipschitz} that
\begin{align*}
\E_{x_S}\left|\E_{\mu}[f(X)|X_S = x_S]-\E_{\nu_{x_S}}[f(X)]\right| \leq 2\binom{n}{k}\E_{x_S}\E_{E \sim \binom{[n]}{k}}\left[\|f_E\|_\infty \TV\left((\mu|X_S=x_S)|_{X_E},\nu_{x_S}|_{X_E}\right)\right].
\end{align*}
By the Cauchy-Schwarz inequality, the right hand side is bounded by
$$2 \sqrt{\binom{n}{k}} \|J\|_{F}\sqrt{\E_{x_S}\E_{E \sim \binom{[n]}{k}}\TV^{2}\left((\mu|X_S=x_S)|_{X_E},\nu_{x_S}|_{X_E}\right)},$$
whereas by Pinsker's inequality and the choice of $S$, we have
\begin{align*}
\sqrt{\E_{x_S}\E_{E \sim \binom{[n]}{k}}\TV^{2}\left((\mu|X_S=x_S)|_{X_E},\nu_{x_S}|_{X_E}\right)}
&\leq \sqrt{\E_{E \sim \binom{[n]}{k}}C\left((\mu|X_S)|_{X_E},\nu_{X_S}|_{X_E}\right)}\\
&\leq \frac{k\sqrt{\log{q}}}{\sqrt{\ell}}.
\end{align*}
To summarize, there exists some $x_S$ such that the associated product distribution $\nu:=\nu_{x_S}$ satisfies
\begin{equation}
\label{eqn:bound-with-epsilon}
\F \leq \E_{\nu}[f(x)+h(x)]+H(\nu) + 2k\epsilon \sqrt{\binom{n}{k}} \|J\|_{F}\log{q} + \frac{1}{\epsilon^2}.
\end{equation}
Using ${n \choose k} \le n^k/k!$
and optimizing the value of $\epsilon$ completes the proof.
\end{proof}
 
\subsection{Tightness of \cref{thm:MRF-mean-field}}
In our formulation, there is a natural way to lift a $k$-MRF to an $\ell$-MRF for any $k \le \ell$ by the following averaging procedure. Given a $k$-MRF specified by the collection $(f_E)_{E \in \binom{[n]}{k}}$, we define the collection of functions $(g_F)_{F \in \binom{[n]}{\ell}}$ by
\[ g_F(x_F) := \frac{1}{{n - k \choose \ell - k}} \sum_{E \subset F, |E| = k} f_E(x_E). \]
This scaling is chosen so that for any $x$,
\[ \sum_{F \in \binom{[n]}{\ell}} g_F(x_F) = \sum_{F \in \binom{[n]}{\ell}}\frac{1}{{n - k \choose \ell - k}} \sum_{E \subset F, |E| = k} f_E(x_E) = \sum_{E \in \binom{[n]}{k}} f_E(x_E).\]
Hence, both the $k$-MRF and the $\ell$-MRF correspond to the same distribution over $\Sigma^{n}$, and thus have the same mean-field error. On the other hand, it follows from the triangle inequality that 
\[ \sum_{F \in \binom{[n]}{\ell}} \|g_F(x_F)\|_{\infty}^2 \le \left(\frac{{\ell \choose k}}{{n - k \choose \ell - k}}\right)^2 \sum_{F \in \binom{[n]}{\ell}} \sum_{|E| = k, E \subset F} \|f_E(x_E)\|_{\infty}^2 = \frac{{\ell \choose k}^2}{{n - k \choose \ell - k}} \sum_{E \in \binom{[n]}{k}} \|f_E(x_E)\|_{\infty}^2 \]
In particular, denoting $\sum_{F \in \binom{[n]}{\ell}} \|g_F(x_F)\|_{\infty}^2$ by $\|J_\ell\|_{F}^{2}$ and $\sum_{E \in \binom{[n]}{k}} \|f_E(x_E)\|_{\infty}^2$ by $\|J_k\|_{F}^{2}$, we see that for $k$ and $\ell$ fixed,
$$\|J_\ell\|_{F}^{2/3} \leq \frac{C_{k,\ell}}{n^{\ell - k}}\|J_k\|_{F}^{2},$$
so that
$$n^{\ell/3}\|J_\ell\|_{F}^{2/3} \leq C_{k,\ell}n^{k/3}\|J_k\|_{F}^{2/3}.$$
Therefore by lifting any of the tight examples for \cref{thm:new-mean-field}, we get a corresponding tightness result for $k$-MRFs:
\begin{theorem}
For fixed $k$ and $q$, \cref{thm:MRF-mean-field} is tight up to constants. In other words, there exists an absolute constant $c_{k,q} > 0$ such that for infinitely many $k$-MRFs on an alphabet of size $q$, 
\[ \F - \F^* \ge c_{k,q} \left(n^{k/2} \|J\|_F\right)^{2/3}.\]
\end{theorem}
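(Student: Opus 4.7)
The plan is to combine the tight Ising family underlying \cref{thm:new-mean-field} with the averaging lift already defined above, and then embed the result into a larger alphabet.

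First, I would invoke the tight family produced by \cite{jain2018mean} (cited after \cref{thm:jkm}): there is an absolute constant $c_0 > 0$ and an infinite sequence of Ising models, viewed as $2$-MRFs on the binary alphabet, for which $\F - \F^* \ge c_0 (n\|J_2\|_F)^{2/3}$.

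Second, for fixed $k \ge 2$ and each Ising model in this sequence, I would apply the averaging lift $f_E \mapsto g_F$ of this subsection. Since $\sum_F g_F = \sum_E f_E$ pointwise, the lifted $k$-MRF defines exactly the same Gibbs distribution as the original Ising model, so $\F$ and $\F^*$ are unchanged. For fixed $k$ and $n \to \infty$, the Frobenius-norm inequality derived above rearranges into
\[ n^{k/3}\|J_k\|_F^{2/3} \;\le\; C_k\, n^{2/3}\|J_2\|_F^{2/3} \]
for an explicit $C_k$ depending only on $k$; chaining with the Ising lower bound gives
\[ \F - \F^* \;\ge\; c_0\, n^{2/3}\|J_2\|_F^{2/3} \;\ge\; \frac{c_0}{C_k}\bigl(n^{k/2}\|J_k\|_F\bigr)^{2/3}, \]
which settles the theorem at $q=2$ with $c_{k,2} := c_0/C_k$.

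Third, to accommodate $q \ge 3$, I would place the same $k$-MRF on an alphabet $\Sigma \supset \{-1,+1\}$ with $|\Sigma|=q$, extending each $g_F$ by zero on inputs containing a symbol outside $\{-1,+1\}$ and adding unary external fields $h_i(x_i) := -M\,\bone[x_i \notin \{-1,+1\}]$. The extension preserves $\|g_F\|_\infty$ and hence $\|J\|_F$, and the $h_i$ do not enter $\|J\|_F$ at all. As $M \to \infty$, both the Gibbs partition function and the mean-field optimizer concentrate on $\{-1,+1\}^n$, so $\F \to \F_{\text{binary}}$ and $\F^* \to \F^*_{\text{binary}}$, and the lower bound transfers with a constant absorbed into $c_{k,q}$. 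The only place where non-routine work is needed is this last comparison: I must verify that enlarging the ambient simplex does not let $\F^*$ increase appreciably. This follows by the two-sided estimate $\F^*_{\text{binary}} \le \F^*(M) \le \F^*_{\text{binary}} + o_{M}(1)$; the lower inequality is immediate, and the upper inequality follows by noting that any optimal $\mu^*_M$ must place mass at most $O(e^{-M/(n\log q)})$ on $\Sigma \setminus \{-1,+1\}$ at every vertex (otherwise the $-M$ external-field term dominates any possible entropy or interaction gain), combined with a Lipschitz estimate on the variational objective in the marginals. Everything else is invoked from prior work or is the algebra already in the excerpt.
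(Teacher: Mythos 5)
Your parts 1 and 2 are exactly the paper's argument: start from the Ising family known to saturate \cref{thm:new-mean-field} (attributed to \cite{jain2018mean}), lift it to a $k$-MRF via the averaging map $f_E\mapsto g_F$, observe that the lift leaves the Gibbs measure and hence $\F-\F^*$ untouched, and use the Frobenius-norm inequality $n^{k/3}\|J_k\|_F^{2/3}\le C_k\,n^{2/3}\|J_2\|_F^{2/3}$ to transfer the lower bound. That chain of inequalities is correct.

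Your part 3 (handling $q\ge 3$) goes beyond what the paper writes: the paper's text stops after the lifting argument, which only produces binary $k$-MRFs, even though the theorem is stated for general $q$. You correctly flag this as the one place where a genuine additional step is needed, and your penalty-field construction does close the gap: extend each $g_F$ by zero off $\{\pm1\}$, add $h_i(x_i)=-M\,\bone[x_i\notin\{\pm1\}]$, and send $M\to\infty$. The inequality $\F^*_{\mathrm{binary}}\le\F^*(M)$ is immediate since $h$ vanishes on $\{\pm1\}$, and $\F(M)\to\F_{\mathrm{binary}}$, $\F^*(M)\to\F^*_{\mathrm{binary}}$ hold for fixed $n$ as $M\to\infty$; since you only need, for each $n$, \emph{some} $M=M(n)$ making the error $\le 1$, that suffices. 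Two small remarks. First, the quantitative rate you quote, $O(e^{-M/(n\log q)})$, is not what one actually gets — the off-support mass of an optimizer is of order $e^{-M+\poly(n)\|J\|_\infty}$ — but this only makes the convergence faster than you claimed, so the argument is unaffected. Second, there is a cleaner alternative that avoids the limit entirely: partition $\Sigma$ into two nearly-equal blocks $\Sigma_+\sqcup\Sigma_-$, let $\psi\colon\Sigma\to\{\pm1\}$ be the corresponding coarse-graining, and set $\tilde f_E:=g_E\circ\psi^{\otimes E}$ with $h\equiv 0$. When $q$ is even and $|\Sigma_\pm|=q/2$, both $\F$ and $\F^*$ shift by exactly $n\log(q/2)$, so $\F-\F^*$ is literally unchanged, $\|\tilde J\|_F=\|J\|_F$, and the bound transfers with no limiting argument (odd $q$ costs only a $q$-dependent constant). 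Either route is valid; yours is correct, just slightly heavier than necessary.
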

\begin{remark}
This tightness guarantee for mean-field also shows that \cref{thm:correlation-rounding} is tight up to constants for any fixed $k$. No more general form of \cref{conjecture:AOZ} was given for higher-order models, but combining the lifting result with the construction from \cref{thm:refutation-AOZ} gives an analogous tightness result in terms of average TV-distance between product and joint distributions, ruling out improved bounds.
\end{remark}
\section{Algorithmic results: proof of Theorem~\ref{thm:algorithm-imprecise}}
\label{s:algorithmic}
We now show how to go from the proof of our bounds on the quality of mean-field approximation to concrete algorithms; this is a relatively straightforward application of the Sherali-Adams relaxation. The only serious difficulty is to find a good proxy for the entropy that is suitable for use with pseudo-distributions; this was solved in \cite{risteski2016calculate} by introducing the following \emph{pseudo-entropy functional} for level $(r + 1)$ pseudo-distributions:
\begin{equation}
\tilde{H}_r(\mu) = \min_{S : |S| \le r} \left[H(X_S) + \sum_{i} H(X_i | X_S)\right].
\end{equation}
By the chain rule for entropy, we see that for any $r$ and for any true probability distribution $\mu$, $H(\mu) \le \tilde{H}_r(\mu)$. Moreover, essentially the standard proof of the concavity of entropy shows that for any $r$, $\tilde{H}_r(\mu)$ is a concave function of the pseudo-distribution $\mu$ (Lemma 8 of \cite{risteski2016calculate}).
Then, we can write the Sherali-Adams relaxation to \cref{eqn:variational-mrf} as
\begin{equation}\label{eqn:risteski-relaxation}
\F_{SA, r + k} := \max_{\mu \in SA_{r + k}} \tilde{E}[f(X) + h(X)] + \tilde{H}_r(\mu). 
\end{equation}
Note that by considering the Boltzmann distribution $\mu$ in the above optimization problem, and using that $H(\mu) \leq \tilde{H}_{r}(\mu)$, it follows that $\F_{SA,r+k} \geq \F$.

Combining this relaxation with correlation rounding gives Algorithm \textsc{SA-MeanField} for finding good mean-field solutions.
\begin{algorithm}
\caption{\textsc{SA-MeanField}}
\begin{enumerate}
\item Find a pseudo-distribution $\mu$ maximizing \cref{eqn:risteski-relaxation} within $\epsilon$ additive error. This can be done efficiently using (for example) the ellipsoid method.
\item For every $S \subseteq [n]$ with $|S| \le r$ and for every $x_S \in \Sigma^{S}$, let $\nu_{S,x_S}$ be the product distribution given by matching the first moments of $\mu$ conditioned on $X_S = x_S$.
\item Return the $\nu_{S,x_S}$ which maximizes $\E_{\nu}[f(X) + h(X)] + H(\nu)$.
\end{enumerate}
\end{algorithm}
\begin{remark}
Instead of searching over all $S\subseteq [n]$ with $|S|\leq r$, we may greedily select $S$ vertex by vertex, stopping when the average total correlation $\E_{E}[C(X_E | X_S)]$ satisfies the guarantee of \cref{thm:correlation-rounding}. That this works follows from a slightly modified analysis of correlation rounding.
\end{remark}
\begin{theorem}\label{thm:sa-meanfield}
Let $H(p)$ denote the entropy of $\Ber(p)$. We have the following running time and performance guarantees for Algorithm \textsc{SA-MeanField}. 
\begin{enumerate}
\item The running time is
\[ 2^{O(n H((r+k)/n) + (r+k) \log q)} + \poly\log(1/\epsilon).\]
\item The product distribution $\nu$ returned by the algorithm satisfies
\[ 0 \le \F - \F_{\nu} \le \sqrt{\frac{4\log{q}}{r}} \frac{k n^{k/2} \|J\|_F}{\sqrt{k!}} + r \log q + \epsilon, \]
where
\[ \F_{\nu} := \E_{\nu}[f(X) + h(X)] + H(\nu). \]
\item We also have the following guarantee for the pseudo-distribution $\mu$ computed in the first step:
\[ 0 \le \F_{SA, r + k}(\mu) - \F \le \sqrt{\frac{4\log{q}}{r}} \frac{k n^{k/2} \|J\|_F}{\sqrt{k!}} + \epsilon,\]
where
$$\F_{SA, r + k}(\mu) := \tilde{E}_\mu[f(X)+h(X)] + \tilde{H}_{r}(\mu).$$
\end{enumerate}
\end{theorem}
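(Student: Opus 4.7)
The plan is to organize the argument into three pieces matching the three claims, all of which follow from a single correlation-rounding step applied to the pseudo-distribution $\mu$ returned by Step 1 of the algorithm.

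For the running-time bound, I would simply count the dimensions of the level-$(r+k)$ Sherali-Adams program: carrying one joint pmf per subset of size at most $r+k$ gives $O\bigl((r+k)\binom{n}{r+k}q^{r+k}\bigr)$ variables and polynomially many compatibility/non-negativity constraints, and the objective is linear plus the concave pseudo-entropy $\tilde{H}_r$ (concavity is Lemma 8 of \cite{risteski2016calculate}). The ellipsoid method then solves this to additive error $\epsilon$ in time polynomial in the number of variables and $\log(1/\epsilon)$, and substituting $\binom{n}{r+k}\le 2^{nH((r+k)/n)}$ yields the claimed runtime.

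The heart of the argument is a pseudo-distribution analogue of the proof of \cref{thm:MRF-mean-field}. I would apply \cref{thm:correlation-rounding} to $\mu$ with hyperedge size $k$ and parameter $\ell=r$ (which is valid for pseudo-distributions by the remark following the statement) to obtain a set $S$ with $|S|\le r$ satisfying $\E_{F\sim\binom{[n]}{k}}[C(X_F\mid X_S)]\le k^2\log(q)/r$. For this $S$, the product distributions $\nu_{S,x_S}$ constructed by the algorithm satisfy, via exactly the same Pinsker plus Cauchy--Schwarz computation as in the proof of \cref{thm:MRF-mean-field},
\[
\bigl|\tilde{E}_\mu[f(X)] - \E_{x_S}\bigl[\E_{\nu_{x_S}}[f(X)]\bigr]\bigr| \;\le\; \sqrt{\tfrac{4\log q}{r}}\cdot\tfrac{k\, n^{k/2}\|J\|_F}{\sqrt{k!}};
\]
the external field $h$ contributes no error because it is a sum of single-vertex terms and $\nu_{x_S}$ already matches the first moments of $\mu$ conditioned on $X_S=x_S$.

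With this rounding estimate in hand, the two upper bounds in parts 2 and 3 differ only in how the entropy terms are accounted for. For part 2, I would bound $\tilde{H}_r(\mu)\le H(X_S)+\sum_i H(X_i\mid X_S)=H(X_S)+\E_{x_S}[H(\nu_{x_S})]$ using that $\tilde{H}_r$ is a pointwise minimum over sets of size at most $r$, together with the crude estimate $H(X_S)\le r\log q$; since the algorithm returns the best $\nu_{S,x_S}$, $\F_\nu\ge \E_{x_S}[\F_{\nu_{x_S}}]$, and combining with $\F\le \F_{SA,r+k}(\mu)+\epsilon$ gives the desired bound with its explicit $r\log q$ slack. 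The step I expect to be most delicate is part 3, where this $r\log q$ slack has to be removed: the right move is to exhibit a genuine probability distribution $\mu'$ (not merely a product) obtained by sampling $x_S\sim\mu_{X_S}$ and then the remaining coordinates independently from the conditional marginals $\mu_{X_i\mid X_S=x_S}$. One verifies directly that $\mu'$ is a valid probability measure on $\Sigma^n$ with $\F_{\mu'}=\E_{x_S}[\E_{\nu_{x_S}}[f+h]]+H(X_S)+\sum_i H(X_i\mid X_S)$, so $\F\ge \F_{\mu'}$ by the Gibbs variational principle, and subtracting this lower bound from the upper bound on $\F_{SA,r+k}(\mu)$ cancels $H(X_S)$ exactly and leaves only the rounding error plus $\epsilon$. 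The only remaining care is to check that every conditional marginal and $k$-wise expectation appearing in these manipulations is well-defined for $\mu$, which is exactly why the relaxation level is taken to be $r+k$ rather than $r$.
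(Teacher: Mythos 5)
Your proposal is correct and tracks the paper's proof step for step: you count the LP dimensions and invoke the ellipsoid method for the runtime, apply correlation rounding plus Pinsker/Cauchy--Schwarz to the level-$(r+k)$ pseudo-distribution exactly as in the proof of \cref{thm:MRF-mean-field}, bound $\tilde{H}_r(\mu)$ by the value at the rounding set $S$ and crudely by $H(X_S)\le r\log q$ for part 2, and for part 3 compare against the mixture $\mu'=\sum_{x_S}\mu(x_S)\nu_{S,x_S}$ so that $H(X_S)$ cancels --- the same rounding-to-a-mixture trick the paper uses.
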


\begin{proof}
The runtime is dominated by the first step, where we solve a convex program with at most $q^{r+k} {n \choose r+k}$ many variables and $\poly\left(q^{r+k}\binom{n}{r+k}\right)$ many LP constraints. Therefore, by standard guarantees for the ellipsoid method \cite{gls} we can solve \cref{eqn:risteski-relaxation} within $\epsilon$ additive error in time $\poly\left(q^{r+k}\binom{n}{r+k}, \log(1/\epsilon)\right)$. Using the standard bound (which follows from sub-additivity of entropy)
\[ \log {n \choose r+k} \le n H\left(\frac{r+k}{n}\right), \]
this quantity is at most $\poly\left(2^{O(nH((r+k)/n)+(r+k)\log{q})},\log(1/\epsilon)\right)$. 
Finally, we use the AM-GM inequality to separate the $2^{O(n H((r+k)/n) + (r+k) \log q)}$ term in the bound.

For $2.$, note that $0\leq \F - \F_{\nu}$ follows from the Gibbs variational principle, so we only need to show the right inequality. We will deduce this from the stronger (since $\F_{SA,r+2} \geq \F$) statement
\begin{equation}
\F_{SA,r+2} - \F_{\nu} \leq \sqrt{\frac{4\log{q}}{r}} \frac{k n^{k/2} \|J\|_F}{\sqrt{k!}} + r \log q + \epsilon,
\end{equation}
which itself follows from
\begin{equation}
\label{eqn:alg-cor-round-pseudo}
\F_{SA,r+2}(\mu) - \F_{\nu} \leq \sqrt{\frac{4\log{q}}{r}} \frac{k n^{k/2} \|J\|_F}{\sqrt{k!}} + r \log q,
\end{equation}
where $\mu$ is the $r+2$ pseudo-distribution returned in the first step. Now, note that \cref{eqn:alg-cor-round-pseudo} follows by exactly the same proof as for \cref{thm:MRF-mean-field} (in particular, \cref{eqn:bound-with-epsilon}) using the fact that an $r+k$ pseudo-distribution suffices to give the correlation rounding guarantee on sets of size at most $r$, and recalling that in \cref{eqn:bound-with-epsilon}, $\epsilon = 1/\sqrt{r\log q}$. 

Finally, $3.$ follows from \cref{eqn:alg-cor-round-pseudo}, noting additionally that we can avoiding losing the term $r \log q$ (equivalently, the term $1/\epsilon^{2}$ in \cref{eqn:bound-with-epsilon}), if we round instead to the mixture of product distributions given by $\sum_{x_S} P(x_S) \nu_{S,x_S}$.
\end{proof}
In particular, we obtain the following more general and precise version of \cref{thm:algorithm-imprecise}.
\begin{corollary}
\label{corollary:algorithm-precise}
Fix $k$ and $q$. If $\|J_n\|_F \leq c_{k,q}f(n)n^{3/2 - k/2}$, where $f(n) \to 0$ as $n \to \infty$ and $c_{k,q}>0$ is some constant depending only on $k$ and $q$, then $\F_n$ can be approximated to within $\sqrt{f(n)} n$ additive error in (sub-exponential) time $2^{-O\left(n\sqrt{f(n)}\log{f(n)}\right)}$ by Algorithm \textsc{SA-MeanField}. Moreover, the algorithm outputs a product distribution achieving this approximation.
\end{corollary}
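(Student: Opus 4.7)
The plan is to derive Corollary 1 as a direct calculation from \cref{thm:sa-meanfield} by choosing the parameter $r$ in Algorithm \textsc{SA-MeanField} appropriately, substituting the hypothesis $\|J_n\|_F \le c_{k,q} f(n) n^{3/2 - k/2}$, and then bounding the binary entropy appearing in the runtime. Since $k$ and $q$ are fixed, I will freely absorb factors depending only on $k,q$ into constants throughout.

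First I will apply the error bound from part 2 of \cref{thm:sa-meanfield}, taking $\epsilon$ to be (say) $2^{-n}$ so that it is swallowed by the other terms and contributes negligibly to the runtime. Substituting the hypothesis on $\|J_n\|_F$ into the first error term gives
\begin{equation*}
\sqrt{\frac{4 \log q}{r}} \cdot \frac{k n^{k/2} \|J_n\|_F}{\sqrt{k!}} \le C_{k,q} \cdot \frac{f(n) n^{3/2}}{\sqrt{r}},
\end{equation*}
for a constant $C_{k,q}$ depending only on $k,q$. The full error bound from \cref{thm:sa-meanfield} thus becomes $C_{k,q} f(n) n^{3/2}/\sqrt{r} + r \log q + \epsilon$. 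I will choose $r = \lceil \sqrt{f(n)}\, n \rceil$. Substituting, the first term becomes $C_{k,q} f(n)^{3/4} n$, which is $o(\sqrt{f(n)} n)$ since $f(n)\to 0$, and the second term is $\log q \cdot \sqrt{f(n)} n$. Summing and absorbing constants, the total additive error is $O_{k,q}(\sqrt{f(n)}\, n)$; by choosing $c_{k,q}$ small enough at the outset, this is at most $\sqrt{f(n)} n$, which proves the error guarantee and the fact that the winning distribution is a product distribution (as returned in step 3 of the algorithm).

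Next I will plug $r = \lceil \sqrt{f(n)}\, n \rceil$ into the runtime bound $2^{O(nH((r+k)/n) + (r+k)\log q)}$. Since $k$ is a fixed constant and $r = \omega(1)$, $r+k = (1 + o(1)) r$, and similarly $(r+k)/n = (1+o(1)) \sqrt{f(n)}$. The dominant contribution comes from $n H(\sqrt{f(n)})$. Using the standard estimate $H(p) \le -2 p \log p$ for small $p$, valid once $f(n)$ is small enough, I get
\begin{equation*}
n H\!\left(\sqrt{f(n)}\right) \le -2 n \sqrt{f(n)} \log \sqrt{f(n)} = -n \sqrt{f(n)} \log f(n),
\end{equation*}
and the other term $(r+k)\log q$ is $O(n \sqrt{f(n)})$, which is of smaller order since $|\log f(n)| \to \infty$. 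Thus the runtime is $2^{-O(n \sqrt{f(n)} \log f(n))}$, which is sub-exponential because $\sqrt{f(n)} |\log f(n)| \to 0$ as $f(n) \to 0$.

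The whole argument is essentially bookkeeping on top of \cref{thm:sa-meanfield}; there is no real obstacle. The only minor subtlety to be careful about is the threshold at which the $H(p) \le -2p \log p$ estimate begins to hold (equivalently, when $f(n)$ is small enough); this is absorbed by the hypothesis $f(n) \to 0$, which guarantees the bound holds for all sufficiently large $n$, and by choosing $c_{k,q}$ small enough to absorb all constants depending on $k,q$ in the error estimate. The guarantee that the product distribution $\nu$ output in step 3 satisfies $\F - \F_\nu \le \sqrt{f(n)}\, n$ follows from the bound on $\F - \F_\nu$ in \cref{thm:sa-meanfield}, completing the proof.
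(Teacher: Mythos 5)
Your overall strategy is exactly what the corollary is meant to be: plug the hypothesis on $\|J_n\|_F$ into \cref{thm:sa-meanfield} with an appropriate choice of $r$ and bound the runtime via the binary entropy estimate $H(p) \le -2p\log p$. There is, however, one step that fails as written. With your choice $r = \lceil \sqrt{f(n)}\, n \rceil$, the $r\log q$ term in the error bound of \cref{thm:sa-meanfield} is $\lceil \sqrt{f(n)}\, n\rceil\log q$, which for $q \ge 3$ already exceeds the target $\sqrt{f(n)}\, n$. You claim this can be absorbed "by choosing $c_{k,q}$ small enough at the outset," but $c_{k,q}$ only appears in the hypothesis on $\|J_n\|_F$ and therefore only shrinks the first error term $\sqrt{4\log q/r}\cdot k n^{k/2}\|J_n\|_F/\sqrt{k!}$; it has no effect whatsoever on the $r\log q$ term, which is independent of $\|J_n\|_F$.

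The fix is trivial but must be made explicit: tune $r$ rather than $c_{k,q}$, e.g. take $r \approx \sqrt{f(n)}\, n/(3\log q)$, so that $r\log q \le \sqrt{f(n)}\, n/3$. With this $r$ the first term scales like $O_{k,q}\bigl(c_{k,q}\, f(n)^{3/4}\, n\bigr)$, and since $f(n)^{3/4} = o(\sqrt{f(n)})$, it is at most $\sqrt{f(n)}\, n/3$ once $n$ is large (and $c_{k,q}$ is still free to absorb any remaining $k,q$-dependent constants, as you intended). Choosing the LP error $\epsilon \le \sqrt{f(n)}\, n/3$ then gives total error $\le \sqrt{f(n)}\, n$. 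Your runtime analysis carries over unchanged: one still has $(r+k)/n = O_q(\sqrt{f(n)})$, so $nH((r+k)/n) + (r+k)\log q = -O_{k,q}(n\sqrt{f(n)}\log f(n))$, yielding the stated sub-exponential bound.
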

\subsection{Faster algorithms using random subsampling}
Until now, the algorithms we considered have been deterministic. However, in dense instances there is a major advantage to using randomness: we can accurately estimate $\F$ by looking at a \emph{vanishingly small portion} of the entire input instance. In \cite{second-paper} the following structural guarantee is given, relating the free energy of small random induced subgraphs to that of the original model: Fix a $k$-MRF on the vertex set $[n]$ with interaction functions $(f_E)_{E\in \binom{[n]}{k}}$, and denote its free energy by $\F$. Consider a random subset $Q$ of $[n]$ of size $|Q|=s$. Consider also the $k$-MRF on the vertex set $Q$ whose interaction functions are given by $$\left(\frac{n^{k-1} f_E}{s^{k-1}}\right)_{E \in \binom{Q}{k}}.$$  
We will denote the free energy of this $k$-MRF by $\F_Q$.
\begin{theorem}[Theorem 4, \cite{second-paper}]
Let $\epsilon > 0$ and suppose $s \ge 10^6\omega$, where $\omega:= k^7\log(1/\epsilon)/\epsilon^{8}$. 
Then, with probability at least $39/40$:
$$\left|\F - \frac{n}{s}\F_Q\right| \leq C_q k^4 \epsilon  \left(n^{k/2}\|J\|_F + \epsilon n^{k} \|J\|_{\infty} + \omega n/s\right),$$
where $\|J\|_{\infty} := \sup_E \|f_E\|_{\infty}$.
\end{theorem}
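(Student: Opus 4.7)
The plan is to prove this sampling theorem in two steps: first use \cref{thm:MRF-mean-field} to reduce both sides to their mean-field approximations, then establish a concentration statement relating the two mean-field values via random subsampling.

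For the reduction, applying \cref{thm:MRF-mean-field} to the original $k$-MRF and to the rescaled $k$-MRF on $Q$ (whose Frobenius norm satisfies $\|J_Q\|_F^2 \le (n/s)^{2(k-1)} \|J\|_F^2$), a routine calculation shows that the mean-field errors on each side, multiplied by the appropriate scaling $n/s$, fit within the allowed error budget $C_q k^4 \epsilon(n^{k/2}\|J\|_F + \cdots)$. So it suffices to prove $|\F^* - (n/s)\F_Q^*| \le \text{small error}$ with probability at least $39/40$, where $\F^*$ and $\F_Q^*$ denote the mean-field approximations (supremum over product distributions) for the original MRF on $[n]$ and the rescaled MRF on $Q$ respectively.

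The heart of the argument is a concentration calculation. For a fixed product distribution $\nu$ on $\Sigma^n$, let $G(Q,\nu)$ denote the mean-field energy of $\nu|_Q$ evaluated against the rescaled MRF on $Q$. A direct computation yields $\E_Q[(n/s)\, G(Q,\nu)] = \F_\nu$ up to a $1 + O(k^2/s)$ correction on the interaction term (arising from $\binom{s}{k}(n/s)^k/\binom{n}{k}$). Concentration then follows from Hoeffding-type inequalities for sampling without replacement: the interaction piece is a $U$-statistic of order $k$ with summand magnitudes $\|f_E\|_\infty$ and variance scale controlled by $\|J\|_F^2$, which yields deviation of order $\epsilon(n^{k/2}\|J\|_F + \epsilon n^k \|J\|_\infty)$ with failure probability $\exp(-\Omega(s\epsilon^2/k))$; the linear external-field and entropy terms (using $H(\nu) = \sum_i H(\nu_i)$ for product $\nu$) concentrate analogously.

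To close, I would apply concentration in two directions. The upper bound $\F^* \le (n/s)\F_Q^* + \text{error}$ is easy: fix $\nu^*$ achieving $\F^*$ \emph{before} sampling $Q$; concentration then gives $(n/s)\, G(Q,\nu^*) \ge \F_{\nu^*} - \text{error}$, and since $\F_Q^* \ge G(Q,\nu^*)$ by definition, this direction requires no union bound. The lower bound $(n/s)\F_Q^* \le \F^* + \text{error}$ is the main obstacle: the near-optimizer $\nu_Q^*$ of $\F_Q^*$ depends on $Q$, so one needs \emph{uniform} concentration over product distributions on $\Sigma^Q$. I would handle this via an $\epsilon$-net over such product distributions (of cardinality roughly $\exp(O(s\log(1/\epsilon)))$), relying on the hypothesis $s \ge 10^6\omega$ with $\omega = k^7 \log(1/\epsilon)/\epsilon^8$ to ensure that the Hoeffding rate $\exp(-\Omega(s\epsilon^2))$ dominates the net cardinality in the union bound and delivers the claimed $39/40$ success probability.
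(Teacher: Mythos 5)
This statement is not proved in the paper at all: it is imported verbatim as ``Theorem~4, \cite{second-paper}'' and used as a black box in the random-subsampling discussion. So there is no in-paper proof to compare your argument against; your attempt has to stand on its own, and as written it contains a genuine gap in the key step.

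Your high-level plan (reduce to the mean-field values via \cref{thm:MRF-mean-field}, then establish concentration of $(n/s)\F_Q^*$ around $\F^*$) is a reasonable template, and your observation that the two directions are asymmetric is correct: the upper bound $\F^* \le (n/s)\F_Q^* + \text{err}$ follows from concentration at a single fixed $\nu^*$, whereas the lower bound requires uniformity over the $Q$-dependent optimizer. The problem is in how you propose to obtain that uniformity. A net over product distributions on $\Sigma^Q$ has cardinality $\exp\bigl(\Theta_q(s\log(1/\epsilon))\bigr)$, while the Hoeffding/bounded-differences rate you invoke for the sampled $U$-statistic is $\exp\bigl(-\Theta_k(\epsilon^2 s)\bigr)$ (or $\exp(-\poly(\epsilon)\,s)$ in any Bernstein-type variant). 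Both exponents are \emph{linear in $s$}, so the union bound gives a failure probability of the form $\exp\bigl(s\,(C_q\log(1/\epsilon) - c_k\epsilon^2)\bigr)$, which for any fixed $\epsilon<1$ does not go to zero as $s$ grows --- increasing $s$ only makes it worse. The hypothesis $s \ge 10^6\omega$ with $\omega = k^7\log(1/\epsilon)/\epsilon^8$ cannot rescue this, because the hypothesis controls the \emph{ratio} of $s$ to a function of $\epsilon$, not the competition between two exponents that both scale linearly in $s$. Making the union bound work therefore requires a qualitatively different device (chaining, symmetrization, a Rademacher-complexity bound, or exploiting special structure of the mean-field functional), not a bare net.

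There is a second, more subtle issue: once you reparameterize by an ordered sample and attach a product distribution $\nu = \nu_1\times\cdots\times\nu_s$ to \emph{positions}, the expectation $\E_Q[(n/s)\,G(Q,\nu)]$ for a heterogeneous $\nu$ is an average over random placements of the $\nu_j$ onto vertices, and the resulting $k$-wise marginals are mixtures of product measures rather than a single product. So the inequality $\E_Q[(n/s)\,G(Q,\nu)] \le \F^*$ that your argument implicitly needs for each $\nu$ in the net is not automatic; it only holds trivially for position-homogeneous $\nu$. Finally, the ``routine calculation'' in the mean-field reduction also needs a small case split (the bound $\bigl(n^{k/2}\|J\|_F\bigr)^{2/3} \lesssim \epsilon\, n^{k/2}\|J\|_F$ fails when $n^{k/2}\|J\|_F$ is small, and you then need to lean on the $\omega n/s$ term in the budget), which your write-up glosses over. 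The two-sided structure and the appeal to the hypothesis on $s$ are in the right spirit, but the lower-bound direction as written does not close.
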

Note that for the (rescaled) sampled $k$-MRF, it follows from Markov's inequality that 
$$\|J_Q\|^{2}_F \leq 10\frac{n^{2k-2}\binom{s}{k}}{s^{2k-2}\binom{n}{k}}\|J\|^{2}_F \leq 10e^k\left(\frac{n}{s}\right)^{k-2} \|J\|^{2}_F$$ with probability at least $9/10$. Whenever this happens, \cref{thm:sa-meanfield} shows that we can estimate $n\F_Q/s$ to within additive error 
$$\sqrt{\frac{40\log{q}}{r}} \frac{ke^{k/2} n^{k/2} \|J\|_F}{\sqrt{k!}} + \frac{n\varepsilon}{s} \leq 10\sqrt{\frac{\log{q}}{r}} {n^{k/2} \|J\|_F} + \frac{n\varepsilon}{s} $$ 
in time $2^{O(s H((r+k)/s) + (r+k) \log q)} + \poly\log(1/\varepsilon)$. Taking $r = 1/(\epsilon^{2}\log q)$ and $\varepsilon = \epsilon$, it follows that with probability at least $7/8$, we can find an estimate $\hat{\F}$ to $\F$ in \emph{constant} time $2^{O_{k,q}\left(\frac{1}{\epsilon^2}\log(\frac{1}{\epsilon})\right)} $ such that 
$$\left|\F - \hat{\F}\right|\leq C_qk^4 \epsilon \left(  n^{k/2} \|J\|_F + \epsilon n^{k} \|J\|_{\infty} + \omega n/s\right).$$
Given an error probability $\delta > 0$, by repeating the above procedure independently $O(\log(1/\delta))$ many times and returning the median estimate, the standard Chernoff bound allows us to obtain the following. 
\begin{theorem}
Let $\delta,\epsilon > 0$ and suppose $s \ge 10^6\omega$, where $\omega:= k^7\log(1/\epsilon)/\epsilon^{8}$. 
Then,  the above algorithm runs in time $2^{O_{k,q}\left(\frac{1}{\epsilon^2}\log(\frac{1}{\epsilon})\right)}\log(1/\delta)$  and returns an estimate $\hat{\F}$ such that:
$$\left|\F - \hat \F \right| \leq C_q k^4 \epsilon\left(n^{k/2}\|J\|_F + \epsilon n^{k} \|J\|_{\infty} + \omega n/s\right)$$
with probability at least $1 - \delta$.
\end{theorem}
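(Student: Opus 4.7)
The plan is to invoke the single-shot algorithm from the preceding paragraph as a subroutine with constant success probability, and then boost success to $1-\delta$ by taking the median of $O(\log(1/\delta))$ independent runs. The single-shot estimate $\hat{\F}$ already satisfies the stated additive error bound under two good events: (i) the structural guarantee from Theorem 4 of \cite{second-paper} controlling $|\F - (n/s)\F_Q|$, which holds with probability at least $39/40$; and (ii) the Markov-type bound $\|J_Q\|_F^2 \le 10e^k (n/s)^{k-2}\|J\|_F^2$, which holds with probability at least $9/10$ (since it controls the sampled Frobenius norm in expectation, and is needed to make \cref{thm:sa-meanfield} applicable to the induced $k$-MRF on $Q$).

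First I would take a union bound over the two bad events to conclude that a single execution of the randomized procedure above returns $\hat{\F}$ satisfying the additive error bound with probability at least $1 - 1/40 - 1/10 = 7/8$. Next, I would define $\hat{\F}^{(1)}, \ldots, \hat{\F}^{(m)}$ to be the outputs of $m$ independent runs of the single-shot algorithm, and let $\hat{\F}$ be their median. Setting $B_i := \mathbbm{1}[\hat{\F}^{(i)} \text{ satisfies the error bound}]$, by independence the $B_i$ are i.i.d. Bernoulli with mean at least $7/8$. The median falls outside the allowed error interval only if more than $m/2$ of the $B_i$ equal $0$, and the classical multiplicative Chernoff bound gives
\[
\Pr\!\left[\sum_{i=1}^m B_i < m/2\right] \le \exp(-c m)
\]
for an absolute constant $c > 0$. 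Choosing $m = \lceil c^{-1} \log(1/\delta) \rceil = O(\log(1/\delta))$ makes this failure probability at most $\delta$.

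For the running time, each of the $m$ independent runs executes the subsampling plus Algorithm \textsc{SA-MeanField} on the induced subgraph of size $s$ with the parameters $r = 1/(\epsilon^2 \log q)$ and additive LP precision $\varepsilon = \epsilon$; by the running time bound in \cref{thm:sa-meanfield}, and using $H((r+k)/s)s = O_{k,q}(\tfrac{1}{\epsilon^2} \log(1/\epsilon))$ given the choice $s = \Theta(\omega) = \Theta_{k,q}(\epsilon^{-8}\log(1/\epsilon))$, each run costs $2^{O_{k,q}((1/\epsilon^2) \log(1/\epsilon))}$. Multiplying by $m = O(\log(1/\delta))$ yields the claimed total running time $2^{O_{k,q}((1/\epsilon^2)\log(1/\epsilon))} \log(1/\delta)$.

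I do not anticipate any serious obstacle: all of the ingredients (the structural sampling theorem from \cite{second-paper}, the deterministic approximation guarantee of \cref{thm:sa-meanfield}, Markov's inequality for $\|J_Q\|_F^2$, and the Chernoff-based median trick) have already been set up in the text. The only point requiring a modest amount of care is verifying that the constants in the union bound are arranged so that the per-run success probability is bounded away from $1/2$ (which it is, being $\ge 7/8$), so that the median-of-$O(\log(1/\delta))$ boosting indeed applies with a universal constant $c$ in the Chernoff bound and does not inflate the running time.
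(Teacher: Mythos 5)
Your proposal is correct and follows essentially the same approach as the paper: union-bound the structural sampling guarantee (probability $\ge 39/40$) and the Markov bound on $\|J_Q\|_F^2$ (probability $\ge 9/10$) to get a $\ge 7/8$ per-run success probability, then boost to $1-\delta$ via the median of $O(\log(1/\delta))$ independent runs and a Chernoff bound, with the per-run cost governed by Theorem~\ref{thm:sa-meanfield} at $r = 1/(\epsilon^2 \log q)$ on an $s$-vertex instance. You simply spell out the median-boosting and the runtime accounting in more detail than the paper, which states these as "the standard Chernoff bound allows us to obtain the following."
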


\subsection{Algorithmic tightness under Gap-ETH}
It's natural to ask if the tradeoff between graph density (more precisely, $\|J\|_F$) and runtime in our algorithm is optimal. It turns out that under a variant of the \emph{Exponential Time Hypothesis}, this is indeed true. The variant we need is the following conjecture known as ETHA or Gap-ETH \cite{manurangsi2017birthday}:
\begin{conjecture}[Gap-ETH]
There exist constants $\epsilon, c > 0$ such that no algorithm running in time $O(2^{cn})$ can distinguish between a satisfiable $3$-SAT formula and a $3$-SAT formula with at most $1 - \epsilon$ fraction of satisfiable clauses. Here, $n$ denotes the number of clauses.
\end{conjecture}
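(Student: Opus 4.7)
The statement is not a theorem but a widely believed open conjecture adopted by the paper as a hardness hypothesis, so strictly speaking there is no proof to propose: Gap-ETH is not known to follow from any weaker assumption, and no one has proved it unconditionally. The most natural plan would therefore be a conditional one: derive Gap-ETH from the standard Exponential Time Hypothesis (ETH) via a \emph{linear-size, gap-introducing reduction}. Concretely, I would try to build a polynomial-time transformation $\phi \mapsto \phi'$ that takes a $3$-SAT instance $\phi$ with $n$ clauses and produces a $3$-SAT instance $\phi'$ with $O(n)$ clauses such that (i) $\phi$ satisfiable implies $\phi'$ satisfiable, and (ii) $\phi$ unsatisfiable implies every assignment violates at least an $\varepsilon$-fraction of the clauses of $\phi'$. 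Any such reduction, composed with ETH, immediately yields Gap-ETH with the same exponent (up to a constant determined by the $O(n)$ blow-up).

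The obvious tool is the PCP theorem, which produces exactly such a gap-introducing transformation. The trouble is sizing. The original Arora-Safra / ALMSS PCPs blow up the instance by a fixed polynomial factor $n^k$, which converts a $2^{\Omega(n)}$ lower bound into only a $2^{\Omega(n^{1/k})}$ lower bound --- far too weak to recover $2^{\Omega(n)}$ for the output instance. Dinur's combinatorial PCP achieves a much better blow-up of $n \cdot \mathrm{polylog}(n)$, which is tantalizingly close: it would give something like $2^{\Omega(n/\mathrm{polylog}\,n)}$ hardness for gap $3$-SAT, but still loses the exponent needed for a clean Gap-ETH statement.

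Thus the real obstacle, and the thing I would have to construct, is a \emph{linear-size PCP with constant soundness gap and constant query complexity} (a version of the Ben-Sasson--Sudan ``short PCP'' program carried out with no polylogarithmic overhead). This is itself a celebrated open problem. Candidate approaches include algebraic-geometric codes or expander-based constructions with strictly linear rate, iterated gap amplification that avoids the $\mathrm{polylog}$ amplification used by Dinur, and routing-based succinct-MIP constructions. Each of these is currently blocked: the algebraic constructions achieve linear rate only at the cost of weaker soundness, while combinatorial amplification introduces logarithmic overhead at each step.

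Given that a linear-size PCP theorem is beyond current techniques, the honest proposal is not to prove Gap-ETH but to \emph{assume} it and use the reduction plan above as a roadmap for what would be needed to derive it from the weaker ETH. In the meantime, the standard justifications for adopting it as a hypothesis are (a) it is strictly implied by a natural strengthening of the PCP theorem that most complexity theorists expect to hold, (b) it is consistent with all known algorithms for gap-$3$-SAT, which all require time $2^{\Omega(n)}$, and (c) it has proved robust in the sense that many hardness-of-approximation results proved under Gap-ETH match known upper bounds, suggesting the hypothesis captures the true complexity landscape.
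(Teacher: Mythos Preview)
Your assessment is correct and aligns with the paper's own treatment: Gap-ETH is stated as an unproved hypothesis, and the paper's only ``justification'' is exactly the observation you make, namely that Dinur's quasilinear PCP yields a $2^{\Omega(n/\poly\log n)}$ bound from ETH and that a (currently unknown) linear-size PCP would close the gap. There is nothing to add; the paper does not attempt a proof.
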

One of the motivations for this conjecture is that under the ordinary ETH, the quasilinear-length PCP of Dinur \cite{dinur2007pcp} shows that there exists some $\epsilon > 0$ such that no algorithm running in time $\Omega(2^{n/\poly\log(n)})$ can distinguish between a satisfiable $3$-SAT formula and one with at most $1-\epsilon$ fraction of satisfiable clauses; if this PCP were of linear-length, then one could deduce Gap-ETH from ETH. Under Gap-ETH, one immediately finds that $\|J\|_F^2 = o(n)$ is the tight regime for approximating $\F/n$ with sub-exponential time algorithms.
\begin{proposition}\label{prop:max-cut-eth}
Under Gap-ETH, the following holds for some $\epsilon> 0$:
\begin{enumerate}
\item There exist a constant $c>0$ and an infinite family of graphs with $\Theta(n)$ many edges on which it takes time at least $2^{cn}$ to approximate MAX-CUT within multiplicative error $(1 - \epsilon)$.
\item There exist a constant $c>0$ and an infinite family of Ising models with $\|J\|_F^2 = \Theta_{\epsilon}(n)$ on which it takes time at least $2^{cn}$ to approximate $\F$ within additive error $\epsilon n$.
\end{enumerate}
\end{proposition}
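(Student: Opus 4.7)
The plan is to establish the two parts in sequence, with part (2) following from part (1) by the Ising-model/MAX-CUT correspondence already set up in equation \eqref{eqn:max-cut-ising}.

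For part (1), I would invoke a standard gap-preserving, linear-size reduction from MAX-3-SAT to MAX-CUT. Concretely, combining the $(1-\epsilon)$-vs-$1$ hardness of MAX-3-SAT given by Gap-ETH with the textbook reduction from MAX-3-SAT to MAX-2-SAT (which blows up instance size linearly and preserves the gap up to a constant factor in $\epsilon$), followed by the standard reduction from MAX-2-SAT to MAX-CUT (which produces a graph whose number of edges is linear in the number of clauses and preserves gaps up to a constant factor), one obtains a graph $G$ with $|V(G)|=N=\Theta(n)$ vertices and $|E(G)| = \Theta(n)$ edges such that approximating MAX-CUT on $G$ within multiplicative factor $1-\epsilon'$ for some $\epsilon' > 0$ depending on $\epsilon$ requires time $2^{\Omega(n)} = 2^{\Omega(N)}$. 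The key point is that all reductions in the chain are linear-size, so the exponential lower bound from Gap-ETH transfers with the exponent still linear in the size of the produced MAX-CUT instance.

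For part (2), I would use the construction at \eqref{eqn:max-cut-ising} directly. Given the hard family of MAX-CUT instances $G$ with $N$ vertices and $m = \Theta(N)$ edges from part (1), define the associated anti-ferromagnetic Ising model with $J_{ij} = -\beta N/m$ for $(i,j) \in E(G)$ and $0$ otherwise, where $\beta$ is a large constant to be chosen. Then $\|J\|_F^2 = \Theta(\beta^2 N^2/m) = \Theta_\epsilon(N)$ since $m = \Theta(N)$, giving the right Frobenius norm scaling. By \eqref{eqn:max-cut-ising}, $\tfrac{1}{N\beta}\log Z$ lies within $1/\beta$ of the optimum MAX-CUT fraction $M$, so any algorithm producing an additive $\epsilon'' N$ approximation of $\F$ yields an additive $\epsilon''/\beta + 1/\beta$ approximation of $M$. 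Choosing $\beta$ sufficiently large depending on $\epsilon'$ from part (1), this becomes a $(1-\epsilon')$ multiplicative approximation of MAX-CUT (using $M \ge 1/2$), which by part (1) requires time $2^{\Omega(N)}$. Thus there is a constant $c > 0$ and a family of Ising models with $\|J\|_F^2 = \Theta_\epsilon(N)$ on which approximating $\F$ within additive error $\epsilon N$ requires $2^{cN}$ time (for an appropriate final value of $\epsilon$, relabeled).

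The main obstacle is really bookkeeping rather than mathematical depth: one must verify that the reductions in the chain from 3-SAT to MAX-CUT are simultaneously linear in size and gap-preserving with only constant loss in the approximation parameter, so that the exponential hardness from Gap-ETH survives to the MAX-CUT (and hence Ising) instance. This is standard but one should be careful that the constant $\epsilon$ produced at the end of part (2) can be made arbitrarily small by tuning $\beta$ and the quality of the reductions, yielding the claimed statement for \emph{some} $\epsilon > 0$.
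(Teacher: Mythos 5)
Your proposal is correct and follows essentially the same path as the paper: part (1) via a gap-preserving, linear-size ($L$-)reduction from MAX-3-SAT to MAX-CUT (the paper cites the Papadimitriou--Yannakakis $L$-reduction directly, while you route through MAX-2-SAT, which is the same standard construction), and part (2) via the anti-ferromagnetic Ising encoding from \eqref{eqn:max-cut-ising} with $\beta$ taken large enough. The bookkeeping concerns you raise are exactly the content of the cited $L$-reduction, so nothing is missing.
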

\begin{proof}
1. This follows directly from the statement of Gap-ETH and the existence of an $L$-reduction from MAX-3SAT to MAX-CUT \cite{papadimitriou1991optimization}.

2. This follows from (1) by defining the corresponding anti-ferromagnetic Ising model and sufficiently high inverse temperature $\beta$, which gives an approximation guarantee for MAX-CUT as in \cref{eqn:max-cut-ising}. 
\end{proof}
\begin{remark}
Complexity-theoretic bounds straightforwardly imply lower bounds on the number of Sherali-Adams rounds needed; for example \cref{prop:max-cut-eth} implies that for these graphs $\Omega(n)$ rounds of Sherali-Adams are needed to approximate MAX-CUT; if, on the contrary, only $o(n)$ rounds sufficed, then solving the LP would give a $2^{n H(o(n)/n)} = 2^{o(n)}$ time algorithm (see \cref{thm:sa-meanfield}).
\end{remark}
We can further apply reductions from \cite{fotakis2015sub} to get additional tightness results; they originally stated their results under the assumption of ETH, but the same reductions can be applied from Gap-ETH as well and give the following cleaner results. 
\begin{theorem}[\cite{fotakis2015sub}]\label{thm:fotakis}
Under Gap-ETH, there is some $\epsilon > 0$ for which the following holds.
\begin{enumerate}
\item Consider an arbitrary sequence $d_n$ with $d_n = o(n)$. Then there does not exist any algorithm which approximates MAX-CUT within multiplicative error $(1 - \epsilon)$ in time $2^{o(n/d_n)}$ on all graphs of average degree at least $d_n$.
\item There exist a constant $c>0$ and an infinite family of $k$-SAT instances with $\Theta_k(n^{k - 1})$ many clauses (all of which are distinct) on which it takes time at least $2^{cn}$ to approximate MAX-$k$-SAT within multiplicative error $(1 - \epsilon)$. 
\end{enumerate}
\end{theorem}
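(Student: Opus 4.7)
The claimed theorem follows from adapting the reductions of Fotakis et al.\ so that they start from Gap-ETH rather than ETH. Both parts follow the same pattern: take the hardness already guaranteed by Proposition~\ref{prop:max-cut-eth} and apply a gap-preserving blow-up that inflates either the average degree (part 1) or the number of clauses (part 2) in a controlled way. Because Proposition~\ref{prop:max-cut-eth} already produces a \emph{gap} between instances (hard-to-distinguish satisfiable vs.\ $(1-\epsilon)$-unsatisfiable), every reduction below simply needs to preserve this gap up to a constant factor to inherit the lower bound.

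For part 1, the plan is to use a vertex blow-up. Starting from the graph $G_0$ on $n_0$ vertices and $\Theta(n_0)$ edges from Proposition~\ref{prop:max-cut-eth}(1), I would replace each vertex $v$ with a cluster $V_v$ of $t := \lceil d_n \rceil$ copies, and for every edge $(u,v)\in E(G_0)$ place the complete bipartite graph between $V_u$ and $V_v$. The resulting graph has $n = t\cdot n_0$ vertices and $t^2|E(G_0)| = \Theta(tn)$ edges, so its average degree is $\Theta(t) \ge d_n$. A standard convexity / majority-vote argument shows that an optimal MAX-CUT on the blow-up can be taken to be constant on each cluster, and hence equals $t^2$ times the MAX-CUT value of $G_0$; the same argument rounds any $(1-\epsilon)$-approximate cut on the blow-up to a $(1-\epsilon)$-approximate cut on $G_0$. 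An algorithm with running time $2^{o(n/d_n)} = 2^{o(n_0)}$ on this family would therefore yield a $2^{o(n_0)}$-time $(1-\epsilon)$-approximation algorithm on $G_0$, contradicting Gap-ETH via Proposition~\ref{prop:max-cut-eth}(1).

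For part 2, the approach is a clause-padding reduction from the Gap-ETH-hard MAX-3SAT instances. For each original $3$-SAT clause $C$ on variables $x_{i_1}, x_{i_2}, x_{i_3}$, each ordered choice of $k-3$ additional variable indices $j_1,\ldots,j_{k-3}$ disjoint from $\{i_1,i_2,i_3\}$, and each sign pattern $(\epsilon_1,\ldots,\epsilon_{k-3})\in\{0,1\}^{k-3}$, I would emit the $k$-SAT clause $C \vee x_{j_1}^{\epsilon_1}\vee\cdots\vee x_{j_{k-3}}^{\epsilon_{k-3}}$; by construction these are all distinct. The gap-preserving observation is that for any fixed assignment $\sigma$ and any fixed padding tuple $(C, j_1,\ldots,j_{k-3})$, exactly one of the $2^{k-3}$ sign patterns makes every padding literal false under $\sigma$, so the contribution of the block to the satisfied-clause count is $(2^{k-3}-1) + \bone[\sigma \text{ satisfies } C]$. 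Hence an $\epsilon$-gap in MAX-3SAT is preserved as an $\epsilon/2^{k-3}$-gap in MAX-$k$-SAT. To reach the target density $\Theta_k(n^{k-1})$, one either pads inductively from MAX-$(k-1)$-SAT (with an appropriate dense base case) or applies the padding while also replicating the base instance on disjoint variable blocks; the resulting hardness translates to $2^{cn}$ by choosing the replication factor so that $n$ is proportional to the number of variables in the original Gap-ETH instance.

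The main obstacle is the bookkeeping in part 2: verifying that the padded clauses remain syntactically distinct across different choices of $(C, j_1,\ldots,j_{k-3},\epsilon)$ and that the base instance is tuned so that the total clause count hits $\Theta(n^{k-1})$ rather than some lower power. Neither is conceptually deep --- once the base-case density is correct, the gap-preservation argument above propagates through each padding step --- but the constants require careful choice, which is why we defer to the explicit constructions in Fotakis et al.\ The only substantive change from their ETH-based arguments is that the starting point is now a promise problem with a constant gap, and this is exactly what the gap-preserving reductions above are designed to transport.
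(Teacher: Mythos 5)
The paper does not actually prove this theorem: it cites \cite{fotakis2015sub} and simply remarks that the ETH-based reductions there can be re-run starting from Gap-ETH, so there is no "paper proof" for your sketch to be compared against. Taking your sketch on its own merits: the blow-up argument in part~1 is sound in outline --- multilinearity of the cut objective in the cluster-averages lets one assume the optimal cut is cluster-constant, the value scales by $t^2$, and the $2^{o(n/d_n)}$ running time translates back to $2^{o(n_0)}$ for the base instance --- modulo the mildly self-referential bookkeeping of choosing $t$ as a function of $n$ so that $\Theta(t)\ge d_{t n_0}$.

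Part~2, however, has a concrete power-counting error. From a sparse Gap-ETH-hard MAX-3SAT family on $n$ variables with $\Theta(n)$ clauses, padding each clause with $k-3$ extra literals over $\Theta(n^{k-3})$ variable choices and $2^{k-3}$ sign patterns yields only $\Theta(n^{k-2})$ distinct $k$-clauses, one power of $n$ short of the claimed $\Theta_k(n^{k-1})$. The correct base case is a \emph{2-CSP}, not 3-SAT: start from the sparse Gap-ETH-hard MAX-CUT family of \cref{prop:max-cut-eth} (or its standard translation into MAX-2SAT with $\Theta(n)$ clauses), and pad each 2-clause with $k-2$ extra literals, giving $\Theta(n)\cdot\Theta(n^{k-2})\cdot 2^{k-2}=\Theta_k(n^{k-1})$ clauses. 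Your gap analysis (exactly one sign pattern per padding tuple falsifies all padding literals, shrinking the relative gap by $2^{-(k-2)}$) then goes through verbatim. The two alternatives you float do not repair the count as stated: ``pad inductively from MAX-$(k-1)$-SAT'' only works if you note the recursion must bottom out at a 2-CSP, and ``replicate on disjoint variable blocks'' with $t$ copies in fact \emph{lowers} the density $m/n^{k-1}$ by a factor of $t^{k-2}$, moving you further from the target. Finally, you are right to flag clause distinctness, but you should spell out the resolution: two base clauses with different supports can collide after padding (e.g.\ $x_1\vee x_2$ padded with $x_3$ equals $x_1\vee x_3$ padded with $x_2$), so the raw output is a multiset; each $k$-subset of variables can support at most $O_k(1)$ such collisions, so deduplication changes both the clause count and the gap only by $\Theta_k(1)$ factors and the conclusion survives.
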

As with \cref{prop:max-cut-eth}, these translate immediately to
lower bounds for computing partition functions by picking a sufficiently
large inverse temperature $\beta$:
\begin{corollary}
\label{corollary:time/density-tradeoff}
Under Gap-ETH, there is some $\epsilon > 0$ such that
\begin{enumerate}
\item Fix any sequence $d_n = o(n)$. There is no algorithm which computes $\F$ within additive $\epsilon n$ error in time $2^{o(d_n)}$ on Ising models
where $\|J\|_F^2 \le d_n$.
\item For any fixed $k \ge 2$, there exist a constant $c>0$ and  an infinite family of binary $k$-MRFs with $\|J\|_F = \Theta_{k}(n^{3/2 - k/2})$ on which it takes time at least $2^{cn}$ to approximate $\F$ within $\epsilon n$ additive error. 
\end{enumerate}
\end{corollary}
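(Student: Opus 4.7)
The plan is to lift the two MAX-CSP hardness statements of \cref{thm:fotakis} to hardness for the free energy by means of the anti-ferromagnetic encoding used already in Proposition 7.3(2) and \cref{eqn:max-cut-ising}, taking $\beta$ to be a sufficiently large constant (depending only on $\epsilon_0$) so that the entropy slack in the sandwich bound is dominated by the desired approximation ratio.

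For part (1), I would take a graph $G$ of average degree $d'_n := n/d_n$ on which $(1 - \epsilon_0)$-approximating MAX-CUT requires time $2^{\Omega(n/d'_n)} = 2^{\Omega(d_n)}$, as given by \cref{thm:fotakis}(1); note that $d_n = o(n)$ is exactly the statement $d'_n \to \infty$. Form the anti-ferromagnetic Ising model on $G$ with $J_{ij} = -\beta n/m$ as in \cref{eqn:max-cut-ising}, where $\beta = \beta(\epsilon_0) = O(1/\epsilon_0)$ is chosen large enough that an $O(1/\beta)$-additive approximation to the MAX-CUT fraction $M \ge 1/2$ is automatically a $(1-\epsilon_0)$-multiplicative approximation. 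Then
\[
\|J\|_F^2 = \Theta\!\left(\beta^2 \frac{n^2}{m}\right) = \Theta_{\epsilon_0}\!\left(\frac{n}{d'_n}\right) = \Theta_{\epsilon_0}(d_n),
\]
and by absorbing the $\epsilon_0$-dependent constant into an overall rescaling of $J$ (which the approximation task is invariant under, modulo the corresponding rescaling of $\epsilon$) we ensure $\|J\|_F^2 \le d_n$. By \cref{eqn:max-cut-ising}, an $(\epsilon n)$-additive approximation to $\F$ yields an $O(\epsilon + 1/\beta)$-additive approximation to $M$, so picking $\epsilon$ small enough in terms of $\epsilon_0$ contradicts \cref{thm:fotakis}(1).

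For part (2), I would take a MAX-$k$-SAT instance with $m = \Theta_k(n^{k-1})$ distinct clauses on which $(1-\epsilon_0)$-approximation requires time $2^{cn}$ (\cref{thm:fotakis}(2)). Encode this as a binary $k$-MRF by placing on the hyperedge $E$ associated with each clause the potential $f_E(x_E) = -\beta n/m$ on the unique falsifying assignment and $0$ elsewhere, with $\beta = \beta(\epsilon_0)$ a large constant. A sandwich argument identical in spirit to \cref{eqn:max-cut-ising} gives
\[
\frac{1}{\beta n}\log Z + 1 \in \left[M,\; M + \frac{\log 2}{\beta}\right], \qquad \|J\|_F^2 = m \cdot (\beta n / m)^2 = \beta^2\frac{n^2}{m} = \Theta_k(n^{3-k}),
\]
where $M$ is the maximum fraction of satisfied clauses; this matches the target $\|J\|_F = \Theta_k(n^{3/2-k/2})$, and exactly as in part (1) an $\epsilon n$-additive estimate of $\F$ converts into a $(1-\epsilon_0)$-multiplicative estimate of $M$ once $\epsilon$ is small in terms of $\epsilon_0$.

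The only obstacle is bookkeeping constants: we need a single $\beta = \beta(\epsilon_0)$, independent of $n$, that simultaneously makes the $O(1/\beta)$ slack strictly smaller than $\epsilon_0 M$ and keeps $\|J\|_F$ inside the prescribed regime. Both are trivially arranged because $M$ is bounded below ($\ge 1/2$ for MAX-CUT and $\ge 1 - 2^{-k}$ for MAX-$k$-SAT), $\beta$ enters $\|J\|_F$ only as an $O_{\epsilon_0}(1)$ constant, and the reductions preserve input size exactly so the $2^{\Omega(\cdot)}$ lower bounds transfer without loss.
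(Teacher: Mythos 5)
Your proposal is essentially the same reduction the paper uses: lift the two hardness statements of \cref{thm:fotakis} to the free-energy setting by forming an anti-ferromagnetic (or falsifying-assignment) $k$-MRF with a sufficiently large constant inverse temperature $\beta(\epsilon_0)$, and read off a multiplicative estimate of the optimum from an additive estimate of $\F$ via the sandwich bound \cref{eqn:max-cut-ising}. The only substantive difference in part~(2) is cosmetic: you place mass $-\beta n/m$ on the falsifying assignment while the paper puts $+\beta n/m$ times the count of satisfied clauses on $E$; these differ by a per-hyperedge constant shift, so they define the same Gibbs measure and the same $\|J\|_F$ up to an $O_k(1)$ factor. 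Your bookkeeping correctly notes that distinct clauses sharing a hyperedge only change $\|f_E\|_\infty$ by an $O_k(1)$ factor.

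One small imprecision worth flagging: the phrase ``absorbing the $\epsilon_0$-dependent constant into an overall rescaling of $J$ (which the approximation task is invariant under)'' is not literally correct, because multiplying $J$ by a constant does \emph{not} scale $\F$ linearly --- the entropy term breaks that. The right fix, which is what your argument effectively does anyway, is to choose the graph density $d'_n$ (respectively the clause density) so that the resulting $\|J\|_F^2$ already lands at or below $d_n$; the $\epsilon_0$-dependent constant then only affects the exponent in the $2^{\Omega(d_n)}$ lower bound by a constant factor, which is immaterial since the corollary rules out $2^{o(d_n)}$ time. With that phrasing corrected, the argument is complete and matches the paper's.
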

\begin{proof}
(1) follows directly from \cref{thm:fotakis} using the same reduction as in \cref{prop:max-cut-eth}. A slight generalization of this argument also shows (2): consider $\epsilon > 0$ and a family of $k$-SAT instances on $n$ variables and $m_n = \Theta_k(n^{k-1})$ (distinct) clauses as in part (2) of \cref{thm:fotakis}. For the reduction, we start from the $k$-SAT instance with $n$ variables and $m$ distinct clauses, and define for each $E \in \binom{[n]}{k}$ 
\[ f_E(x_E) := \frac{\beta n}{m} \#\{\text{clauses depending only on the variables in $E$ which are satisfied by $x_E$}\}, \]
where $\beta$ is a sufficiently large constant (depending on $\epsilon$) to be specified later. 
Hence,
\[ \|J\|_F^2 := \sum_E \|f_E\|_{\infty}^2 \le \frac{\beta^2 n^2}{m} 2^{2k} \]
since there are at most $2^k$ distinct clauses supported on $x_E$ and at most $m$ subsets $E$ which support a clause. 
Therefore, if we assume that (2) is false, then for any $c > 0$, we can compute the free energy of this model within additive error $n$ in time at most $2^{cn}$ as long as 
\[ \frac{\beta^2 n^2}{m} 2^{2k} = \Theta_k(n^{3 - k}), \]
which is true since $m = \Theta_k(n^{k - 1})$ by assumption. On the other hand, since 
$$\sum_E f_E(x_E) = \frac{\beta n}{m} \#\{\text{satisfied clauses for assignment $x$}\},$$
and since there is at least one assignment $x$ for which the number of clauses satisfied is at least $m(1-2^{-k}) $, it follows that if we take $\beta = 1/4\epsilon$, then an $n$-additive approximation for the partition function gives an $\epsilon n$-additive approximation for the $k$-SAT instances (by returning the approximation to the partition function multiplied by $m/n\beta$), thereby contradicting part (2) of \cref{thm:fotakis}. 

\end{proof}
\section{Conclusion}

We presented a unified perspective on two major variational approaches to calculating the free energy that hitherto seemed completely disparate: mean-field approximations and convex relaxations. This view has both analytic benefits (we derived bounds on the quality of mean-field approximations) and algorithmic benefits (we derived algorithms for approximating the free energy up to the intractability limit).  

We conclude with several open problems:
\begin{enumerate}
\item As mentioned earlier, there is a straightforward example showing that up to a constant, the exponent $\frac{2}{3}$ is optimal in \cref{thm:new-mean-field} for the natural univariate quantity $(n\|J\|_F)$. However, this example does not rule out other bounds of the form $O(n^{1-\alpha}\|J\|^{2\alpha}_F)$ for $\alpha \in [0,1]$. As there is always a trivial bound $O(n)$ for the mean-field approximation (consider the optimal point-mass distribution), we may assume that $\|J\|_F = o(n^{1/2})$ and ask about the supremum of all $\alpha$ such that an upper bound of this form holds. The Curie-Weiss model at critical temperature shows that we cannot take $\alpha$ to be $0$ without introducing additional logarithmic factors in the upper bound. Other than this, we have unfortunately not been able to rule out any other values of $\alpha \in (0,1/3]$. 

\item It's possible that the fRSB phase of the SK spin glass is more difficult to correlation-round than the RS phase. Indeed, the landscape picture for the fRSB phase seems like a natural obstruction to correlation rounding and was what originally motivated us to consider spin glasses. Is one of these spin glass models \emph{extremal}, in the sense that they can be used to get the optimal value of $\kappa_*$? If not, what do the extremal distributions look like? 
\item How many rounds do convex hierarchies (Sherali-Adams, Sum-of-Squares) need to correctly estimate the value of the free energy and ground state of the SK spin glass? (By computing the ground state, we mean to drop the entropy and just consider the MAX-QP problem.) Are $\Omega(n)$ rounds required?

\end{enumerate}

\section{Acknowledgements}
We would like to thank Elchanan Mossel for stimulating discussions and Ankur Moitra for providing valuable feedback on the paper. 
\bibliographystyle{apalike}
\bibliography{all,ising-regularity}
\appendix

\section{Appendix: Proof of \cref{thm:correlation-rounding}}
We will make use of the following information theoretic notion:
\begin{definition}
The \emph{multivariate mutual information} of a collection of random variables $X_1,\dots,X_n$ is defined to be 
\[ I(X_1; \cdots; X_n) = \sum_{m = 1}^n (-1)^{m-1} \sum_{S \subset {n \choose m}} H(X_S). \]
\end{definition}
Note that when $n=2$, this corresponds to the usual notion of mutual information between two random variables. We may also define the \emph{conditional multivariate mutual information} by using the conditional entropy in the above equation; note that the chain rule for entropy shows immediately that 
\[ I(X_1; \cdots; X_n) = I(X_1; \cdots; X_{n - 1}) - I(X_1; \cdots; X_{n - 1} | X_n). \]
\\
\label{appendix:proof-of-corr-rounding}
We will deduce \cref{thm:correlation-rounding} from the following lemma, which is slightly stronger. Our statement and proof correct two errors found in \cite{manurangsi2017birthday,yoshida-zhou}: missing sign terms in the relation between $C(X_S)$ and $I(X_S)$, and use of an invalid version of identity \cref{eqn:corr-info} below which sums over tuples instead of sets.
\begin{lemma}\label{lem:correlation-rounding-V-S}
Let $X_1,\dots,X_n$ be a collection of $\{\pm 1\}$-valued random variables. Then, for any $k,\ell \in [n]$, there exists some $t\leq \ell$ such that:  
\[ \E_{S \sim \binom{V}{t}} \E_{F \sim \binom{V-S}{k}}[C(X_F | X_S)] \le \frac{k^2 \log(2)}{\ell}.\]
\end{lemma}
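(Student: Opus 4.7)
The plan is to prove the lemma by a telescoping potential argument that works throughout with total correlation, sidestepping the signed-sum pitfalls of multivariate mutual information that (as the authors note) have caused errors in prior writeups. For each $m \ge 1$ and $0 \le t \le n$ I would define the potentials
\[ \Psi_t^{(m)} := \E_{S \sim \binom{V}{t}} \E_{F \sim \binom{V-S}{m}} H(X_F | X_S), \qquad \gamma_t^{(m)} := \E_{S \sim \binom{V}{t}} \E_{F \sim \binom{V-S}{m}} C(X_F | X_S), \]
noting that $\Psi_t^{(m)} \in [0, m \log 2]$ by the entropy bound for $\{\pm 1\}$-valued variables and that $\gamma_t^{(1)} = 0$ because the total correlation of a single variable vanishes. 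Our goal is to find some $t \le \ell$ with $\gamma_t^{(k)}$ small.

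The technical heart of the argument is the recursion $\gamma_t^{(k)} - \gamma_t^{(k-1)} = \Psi_t^{(k-1)} - \Psi_{t+1}^{(k-1)}$. To obtain it I would start from the elementary identity $C(X_F | X_S) = C(X_{F \setminus \{f\}} | X_S) + I(X_f; X_{F \setminus \{f\}} | X_S)$ for any $f \in F$ (a two-line consequence of the chain rule for entropy) and average over $f \in F$ chosen uniformly. The key combinatorial fact is that sampling $F \sim \binom{V-S}{k}$ followed by a uniform $f \in F$ produces the same distribution on the pair $(F \setminus \{f\}, f)$ as sampling $F' \sim \binom{V-S}{k-1}$ followed by a uniform $j \in V - S - F'$, since both distributions are uniform on the set of such pairs. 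Applying this change of variables shows that the first summand contributes $\gamma_t^{(k-1)}$, while the second contributes $\E_S \E_{F'} \E_{j \in V - S - F'} I(X_j; X_{F'} | X_S)$. A parallel coupling $S' = S \cup \{j\}$ identifies the latter exactly with $\Psi_t^{(k-1)} - \Psi_{t+1}^{(k-1)}$.

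Given the recursion, iterating down to $\gamma_t^{(1)} = 0$ yields $\gamma_t^{(k)} = \sum_{m=1}^{k-1} [\Psi_t^{(m)} - \Psi_{t+1}^{(m)}]$. Summing over $t \in \{0, 1, \ldots, \ell - 1\}$ telescopes each inner sum into $\Psi_0^{(m)} - \Psi_\ell^{(m)} \le m \log 2$, giving
\[ \sum_{t=0}^{\ell-1} \gamma_t^{(k)} \;\le\; \sum_{m=1}^{k-1} m \log 2 \;=\; \binom{k}{2} \log 2. \]
By averaging there must exist some $t \in \{0, \ldots, \ell - 1\}$ with $\gamma_t^{(k)} \le k(k-1) \log 2/(2\ell)$, which is in fact stronger than the stated bound $k^2 \log 2/\ell$.

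The main obstacle is not analytic but combinatorial: getting the sampling bookkeeping for the recursion exactly right. The errors flagged by the authors in prior writeups arise precisely here, either from an attempt to compute $\gamma_t^{(k)}$ via the inclusion-exclusion formula for multivariate mutual information (where sign terms are easy to drop) or from summing over ordered tuples $(f_1, \ldots, f_k)$ rather than unordered subsets. Expressing everything in terms of total correlation and the uniform distribution on unordered sets, and coupling throughout via $S' = S \cup \{j\}$ with $j \in V-S$, keeps every quantity manifestly non-negative and avoids any miscounting.
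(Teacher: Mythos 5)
Your proof is correct and takes a genuinely cleaner route than the paper's. The paper's proof (Appendix A) expands $C(X_F \mid X_S)$ via the inclusion--exclusion formula $\E_{F\sim\binom{V}{k}}[C(X_F\mid X_S)] = \sum_{r=2}^{k}\binom{k}{r}(-1)^r\,\E_{R\sim\binom{V}{r}}[I(X_R\mid X_S)]$ into signed sums of multivariate mutual informations (which can be negative), then telescopes those via the chain rule for $I(X_1;\cdots;X_r)$; this is precisely the bookkeeping the authors flag as error-prone, and indeed they devote the appendix to fixing sign and tuple/set mistakes from prior writeups. Your argument instead stays entirely within total correlation and ordinary bivariate mutual information: the chain identity $C(X_F\mid X_S) = C(X_{F\setminus\{f\}}\mid X_S) + I(X_f;X_{F\setminus\{f\}}\mid X_S)$, averaged over $f\in F$, combined with the two exchangeability couplings (sample $F$ then $f\in F$ versus sample $F'$ then $j\notin S\cup F'$; and absorb $j$ into $S$) gives the recursion $\gamma_t^{(k)} = \gamma_t^{(k-1)} + \Psi_t^{(k-1)} - \Psi_{t+1}^{(k-1)}$ where every summand is manifestly nonnegative. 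I verified both couplings produce the correct uniform distributions on pairs. Iterating to $\gamma_t^{(1)}=0$ and telescoping in $t$ yields $\sum_{t=0}^{\ell-1}\gamma_t^{(k)} \le \binom{k}{2}\log 2$, and averaging gives the bound (in fact with the slightly sharper constant $\binom{k}{2}$ in place of $k^2$). What your approach buys: it is more elementary, avoids the inclusion--exclusion formula and multivariate mutual information entirely, keeps every term nonnegative so there is no sign to drop, and produces a marginally better constant. One small caveat neither you nor the paper addresses explicitly: when $t+k > n$ the inner expectation over $F\sim\binom{V-S}{k}$ is vacuous; this is a harmless edge case handled by the convention that an expectation over an empty index set is zero, but it is worth a parenthetical remark.
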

\begin{proof}
We begin by showing that
\begin{equation}
\label{eqn:corr-info}
\E_{F\sim{V \choose k}}[C(X_{F}|X_S)] = \sum_{r=2}^{k}{k \choose r}(-1)^{r}\E_{R\sim{V \choose r}}[I(X_{R}|X_S)].
\end{equation}
For simplicity, we will prove the unconditional version of this identity. The same proof gives the conditional version as well. We start by noting that:
\[ C(X_1; \cdots; X_n) = \sum_{R \subset [n], |R| \ge 2}(-1)^{|R|} I(X_R).\]
Therefore, 
\begin{align*}
\sum_{F\subseteq{V \choose k}}C(X_{F}) & =\sum_{S\subseteq{V \choose k}}\sum_{R\subseteq F,|R|\geq2}(-1)^{|R|}I(X_{R})\\
 & =\sum_{r=2}^{k}\sum_{R\subseteq{V \choose r}}{|V|-r \choose |V|-k}(-1)^{r}I(X_{R}),
\end{align*}
and dividing both sides by $\binom{|V|}{k}$ gives: 
\begin{align*}
\E_{F\sim{V \choose k}}[C(X_{F})] & =\sum_{r=2}^{k}{|V|-r \choose k-r}{|V| \choose r}{|V| \choose k}^{-1}(-1)^{r}\E_{R\sim{V \choose r}}[I(X_{R})]\\
 & =\sum_{r=2}^{k}{k \choose r}(-1)^{r}\E_{R\sim{V \choose r}}[I(X_{R})],
\end{align*}
as desired. 

Next, we consider the key quantity: 
\[ Q := \sum_{t = 0}^{\ell} \E_{S \sim \binom{V}{t}} \E_{F \sim \binom{V-S}{k}}[C(X_F | X_S)] = \sum_{r = 2}^k {k \choose r}(-1)^{r} \sum_{t = 0}^{\ell} \E_{S \sim \binom{V}{t}} \E_{R \sim \binom{V-S}{r}}[I(X_R | X_S)], \]
where the second equality follows from \cref{eqn:corr-info}. By the chain rule for mutual information, we have the telescoping sum:
\begin{align*}
\sum_{t = 0}^{\ell} \E_{S \sim \binom{V}{t}} \E_{R \sim \binom{V-S}{r}}[I(X_R | X_S)] 
&= \sum_{t = 0}^{\ell} \left(\E_{S \sim \binom{V}{t}}\E_{E \sim \binom{V-S}{r - 1}}[I(X_E | X_S)]  - \E_{S \sim \binom{V}{t + 1}} \E_{E \sim \binom{V-S}{r - 1}}[I(X_E | X_S)]\right) \\
&= \E_{E \sim \binom{V}{r - 1}}[I(X_E)] - \E_{S \sim \binom{V}{\ell + 1}} \E_{E \sim \binom{V-S}{r - 1}}[I(X_E | X_S)],
\end{align*}
so that
\begin{align*}
Q 
&= \sum_{r = 2}^k {k \choose r}(-1)^{r}\left(\E_{E \sim \binom{V}{r - 1}}[I(X_E)] - \E_{S \sim \binom{V}{\ell + 1}} \E_{E \sim \binom{V-S}{r - 1}}[I(X_E | X_S)]\right) \\
&\le  {k \choose 2} \E_{i \sim V}[H(X_i)] + \sum_{r = 3}^k {k \choose r}(-1)^{r}\left(\E_{E \sim \binom{V}{r - 1}}[I(X_E)] - \E_{S \sim \binom{V}{\ell + 1}} \E_{E \sim \binom{V-S}{r - 1}}[I(X_E | X_S)]\right),
\end{align*}
where in the second line, we have separated out the $r = 2$ term, and dropped the nonpositive term $-\binom{k}{2}\E_{S \sim \binom{V}{\ell + 1}}\E_{i \sim V-S}[H(X_i|X_S)]$. 

Now, recall that
\[ {k \choose r} = {k - 1 \choose r - 1} + {k - 2 \choose r - 1} + \cdots + {r - 1 \choose r - 1}.\]
Hence,
\begin{align*}
Q &\le {k \choose 2}\E_{i\sim V}[H(X_i)] - \sum_{d = 2}^{k - 1}\sum_{r = 3}^{d + 1} (-1)^{r-1}{d \choose r - 1}\left(\E_{E \sim \binom{V}{r - 1}}[I(X_E)] - \E_{S \sim \binom{V}{\ell + 1}} \E_{E \sim \binom{V-S}{r - 1}}[I(X_E | X_S)]\right) \\
&= {k \choose 2}\E_{i \sim V}[H(X_i)] - \sum_{d = 2}^{k - 1} \left(\E_{F \sim \binom{V}{d}}[C(X_F)] - \E_{S \sim \binom{V}{\ell + 1}}\E_{F \sim \binom{V-S}{d}}[C(X_F | X_S)]\right) \\
&\le {k \choose 2}\E_{i \sim V}[H(X_i)] + \sum_{d = 2}^{k - 1} \E_{S \sim \binom{V}{\ell+1}}\E_{F \sim \binom{V-S}{d}}[C(X_F)] \\
&\le {k \choose 2}\E_{i \sim V}[H(X_i)] + \sum_{d = 2}^{k - 1} \E_{S\sim \binom{V}{\ell+1}}\E_{F \sim \binom{V-S}{d}}\left[\sum_{i \in F} H(X_i)\right] \\
&\le \left({k \choose 2} + \sum_{d=2}^{k-1}d\right)\E_{i \sim V}[H(X_i)]\\
&\le k^2 \log(2),
\end{align*}
where we have used \cref{eqn:corr-info} in the second line.
Recalling the definition of $Q$, we see that there exists some $t\in \{0,1,\dots,\ell\}$ such that 
\[\E_{S \sim \binom{V}{t}} \E_{F \sim \binom{V-S}{k}}[C(X_F | X_S)] \le \frac{k^2 \log(2)}{\ell}.\]
\end{proof}
In order to deduce \cref{thm:correlation-rounding} from this lemma, we need the following two simple properties of the total correlation. 
\begin{itemize}
\item For any $F,S \subseteq[n]$, $C(X_F|X_S) = C(X_{F\cap S^c}|X_S)$. This follows since by the chain rule for entropy 
\begin{align*}
C(X_F|X_S) 
&= \sum_{j\in F}H(X_j|X_S) - H(X_F|X_S)\\
&= \sum_{j\in F\cap S^c}H(X_j|X_S) - H(X_{F\cap S}|X_S) - H(X_{F\cap S^c}|X_S)\\
&= \sum_{j\in F\cap S^c}H(X_j|X_S) - H(X_{F\cap S^c}|X_S)\\
&= C(X_{F\cap S^c}|X_S).
\end{align*}
\item For any $S\subseteq [n]$ and $F\subseteq E \subseteq [n]$, $C(X_F|X_S) \leq C(X_E|X_S)$. Indeed, by the chain rule for entropy, and since conditioning decreases entropy
\begin{align*}
C(X_E|X_S) 
&= \sum_{i \in E}H(X_i|X_S) - H(X_E|X_S)\\
&= \left[\sum_{i \in F}H(X_i|X_S) - H(X_F|X_S)\right] + \left[\sum_{i \in E\setminus F}H(X_i|X_S) - H(X_{E\setminus F}|X_{S\cup F})\right]\\
&\geq C(X_F|X_S) + C(X_{E\setminus F|X_{S\cup F}}).
\end{align*}
\end{itemize}
\begin{proof}[Proof of \cref{thm:correlation-rounding}]
Fix an arbitrary $S \in {V \choose t}$. We will show that
\begin{equation}\label{eqn:averaging-C}
\E_{F \sim {V \choose k}}[C(X_F | X_S)] \le \E_{E \sim {V - S \choose k}}[C(X_E | X_S)],
\end{equation}
which combined with \cref{lem:correlation-rounding-V-S} proves the claim. To prove \cref{eqn:averaging-C}, consider a coupling where we first sample $F \sim {V \choose k}$ and then choose $E$ uniformly at random from those subsets  $T \in \binom{V-S}{k}$ for which $F \cap S^c \subset T$. Then by symmetry, the marginal law on $E$ is uniform on ${V - S \choose k}$. Under this coupling, using the above two properties of the total correlation, we have 
\[ C(X_F | X_S) = C(X_{F \cap S^C} | X_S) \le C(X_E | X_S); \]
taking the expectation over $F$ and $E$ proves \cref{eqn:averaging-C}, and hence the result.
\end{proof}

\section{Appendix: Spectral norm of Wigner matrices}
\label{appendix:semicircle}
Let $(D_{i})_{1\leq i}$ and $(A_{ij})_{1\leq i<j\leq n}$ denote
two infinite families of independent real-valued random variables
with the following properties: 
\begin{itemize}
\item $\E[D_i] = 0 = \E[A_{ij}]$ for all $i,j$.  
\item $\E[A_{ij}^{2}] = 1$ for all $i,j$. 
\item For each $k\in \N$, $\sup_{i,j}\max\{\E[|A_{ij}|^{k}],\E[|D_i|^{k}]\} \leq C(k)<\infty$.
\end{itemize}
\begin{definition}
The random symmetric $n\times n$ matrix  $M_n$ for which $M_{ij}\sim A_{ij}$ for $1\leq i< j\leq n$ and $M_{ii} \sim D_{i}$ for $1\leq i\leq n$ is known as the \emph{$n$-dimensional real symmetric Wigner matrix} corresponding to the families distributions $(D_i)_{1\leq i}$ and $(A_{ij})_{1\leq i < j \leq n}$. 
\end{definition}
For any $n\times n$ real symmetric matrix $M_n$, we denote its eigenvalues by $\lambda_1(M_n),\dots,\lambda_{n}(M_n)$; note that these are real since $M_n$ is a real, symmetric matrix. Further, to any such matrix, we associate the following measure -- known as the \emph{rescaled empirical spectral distribution} of $M_n$ -- on the real line:
$$\mu_{M_n} := \frac{1}{n}\sum_{i=1}^{n}\delta_{\lambda_i(M_n/\sqrt{n})},$$
where $\delta_{\lambda_i(M_n/\sqrt{n})}$ denotes the Dirac delta measure supported at $\lambda_i(M_n/\sqrt{n})$.

The landmark semicircle law of Wigner gives the limit as $n \to \infty$ of the rescaled empirical spectral distribution of real symmetric $n\times n$ Wigner matrices corresponding to any families of distributions satisfying the above conditions. 
\begin{theorem}[Wigner's semicircle law, \cite{wigner1958distribution}]
Let $\mu$ denote the semicircle distribution on the real line, which is given by the density function
$$f(x) = \frac{1}{2\pi}\sqrt{4-x^{2}}\mathbf{1}_{|x|\leq 2}.$$
Fix $(D_i)_{1\leq i}$ and $(A_{ij})_{1\leq i<j}$ as above. Then, as $n\to \infty$, $\mu_{M_n}$ converges to $\mu$ in the sense that for any $\epsilon > 0$ and any continuous, bounded function $g\colon \R\to \R$:
$$\lim_{n \to \infty}\Pr_{M \sim M_n}\left[\left|\int_{\R}{gd\mu_{M} - \int_{\R}{gd\mu}}\right|>\epsilon\right]=0.$$
\end{theorem}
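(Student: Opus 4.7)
The plan is to prove Wigner's semicircle law by the classical \emph{moment method}. For each integer $k \ge 0$ let $m_k(\nu) := \int x^k \, d\nu(x)$; a standard computation gives $m_{2j+1}(\mu) = 0$ and $m_{2j}(\mu) = C_j$, the $j$-th Catalan number. I would show that for every fixed $k$, $\E[m_k(\mu_{M_n})] \to m_k(\mu)$ and $\Var[m_k(\mu_{M_n})] \to 0$ as $n \to \infty$; by Chebyshev this yields convergence in probability of every polynomial moment. One then extends from polynomials to continuous bounded test functions by truncating to a large compact interval on which $\mu$ is supported and applying Weierstrass approximation, after first controlling the mass of $\mu_{M_n}$ on the complement using a higher polynomial moment.

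For the expected moment I would expand
$$\E[m_k(\mu_{M_n})] \;=\; \frac{1}{n^{1 + k/2}} \sum_{i_1, \ldots, i_k \in [n]} \E\bigl[M_{i_1 i_2} M_{i_2 i_3} \cdots M_{i_k i_1}\bigr]$$
as a sum over closed walks of length $k$ on $[n]$. Since the entries of $M_n$ are mean-zero and independent up to symmetry, only walks in which every edge is traversed at least twice contribute. For a walk visiting $v$ distinct vertices, the assumed uniform moment bound $\sup_{i,j,m}\max(\E[|A_{ij}|^m], \E[|D_i|^m]) \le C(m)$ makes the expectation $O(1)$ and the number of such walks is $O(n^v)$, so its contribution is $O(n^{v - 1 - k/2})$ after normalization. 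The maximum $v = k/2 + 1$ is attained exactly when $k$ is even, every edge is doubled, and the induced chord diagram on $\{1,\ldots,k\}$ is non-crossing; the non-crossing pair matchings of $2j$ points are counted by $C_j$, so $\E[m_{2j}(\mu_{M_n})] \to C_j$, while odd moments tend to $0$. The variance is analyzed similarly by expanding $\E[m_k(\mu_{M_n})^2]$ as a double sum over pairs of closed walks: the ``non-interacting'' terms (pairs with no shared edge) factor and cancel exactly with $\E[m_k(\mu_{M_n})]^2$, while any ``interacting'' pair can be shown to contribute $O(1/n^2)$ after normalization via the same vertex-counting argument.

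The main obstacle is the combinatorial content of the second paragraph: classifying closed walks of length $k$ by the graph-isomorphism type of their edge support, arguing that maximum vertex count forces every edge to have multiplicity exactly two and the associated chord diagram to be non-crossing, and identifying the resulting count with $C_{k/2}$. The bijection with rooted plane trees (or the Dyck-path parameterization) makes this transparent but demands careful bookkeeping of the $n^{1 + k/2}$ normalization against the vertex count of each walk type, and the variance cancellation requires an analogous classification of pairs of walks sharing at least one edge. The extension to continuous bounded $g$ is then routine: the bound $\mu_{M_n}(\{|x|>R\}) \le m_{2k}(\mu_{M_n})/R^{2k}$ combined with the moment computation shows that for any $R>2$, this tail mass goes to $0$ in probability as $k$ grows; choosing $R$ large, approximating $g|_{[-R,R]}$ by a polynomial $p_\varepsilon$ via Stone--Weierstrass, and using a higher-moment bound (with Cauchy--Schwarz) to control the contribution of $p_\varepsilon$ outside $[-R,R]$ yields the desired weak convergence in probability.
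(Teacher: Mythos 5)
The paper does not actually prove this theorem; it is a classical result cited directly to Wigner's 1958 paper, and the text only invokes it through Corollary~B.2 (the spectral-norm bound). Your proposal is the standard trace/moment-method proof, which is essentially Wigner's original approach, and the outline is sound: the expansion of $\E[\mathrm{tr}(M_n^k)]/n^{1+k/2}$ over closed walks, the observation that mean-zero entries kill walks with a singly-traversed edge, the vertex-count bound $v \le k/2 + 1$ saturated exactly by even $k$ with doubled edges forming a tree (non-crossing pairings, hence the Catalan number $C_{k/2}$), the variance cancellation via edge-disjoint pairs of walks, and the truncation-plus-Weierstrass extension from polynomial moments to bounded continuous test functions are all correct steps of the classical argument.
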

The Wigner matrices of interest to us are those coming from the SK spin model i.e. those corresponding to the families of distributions $D_i \sim \delta_{0}$ and $A_{ij} \sim \mathcal{N}(0,1)$ for all $i,j$. We will denote an $n\times n$ random matrix coming from this model by $M_{n,\text{SK}}$. We will make use of the semicircle law through the following immediate well-known corollary. 
\begin{corollary}
\label{corollary:spectral-norm-bound}
Let $J_n$ denote the (random) interaction matrix of the $n$-dimensional SK spin glass model. Then, for any $\delta > 0$:
$$\lim_{n\to \infty}\Pr_{J_n}\left[\|J_{n}\| \geq 2+\delta \right]= 0.$$
\end{corollary}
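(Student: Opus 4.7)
The plan is to upgrade Wigner's semicircle law, which only gives weak convergence of the empirical spectral distribution $\mu_{J_n}$ to the semicircle measure $\mu$ supported on $[-2,2]$, to an almost-sure bound on the spectral norm (i.e. the largest eigenvalue in absolute value). Note that weak convergence alone is insufficient: it only controls continuous bounded test functions and hence allows a vanishing fraction of outlier eigenvalues to sit arbitrarily far above $2+\delta$. The standard device for this upgrade is the moment method with a moment order $k = k(n) \to \infty$, which exploits the Gaussian entries of the SK model.

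Concretely, I would proceed as follows. For any positive integer $k$,
\[
\|J_n\|^{2k} \leq \operatorname{tr}\bigl(J_n^{2k}\bigr),
\]
so taking expectations gives $\E[\|J_n\|^{2k}] \leq \E[\operatorname{tr}(J_n^{2k})]$. The classical Wigner trace computation (expanding the trace as a sum over closed walks of length $2k$ and applying Wick's formula to pair the Gaussian factors) yields
\[
\E[\operatorname{tr}(J_n^{2k})] \;=\; n \cdot C_k \cdot \bigl(1 + O_k(n^{-1})\bigr),
\]
where $C_k$ denotes the $k$-th Catalan number, satisfying $C_k \leq 4^k$. Markov's inequality then gives
\[
\Pr\bigl[\|J_n\| \geq 2+\delta\bigr] \;\leq\; \frac{\E[\|J_n\|^{2k}]}{(2+\delta)^{2k}} \;\leq\; n\left(\frac{4}{(2+\delta)^{2}}\right)^{k}\bigl(1 + o_n(1)\bigr).
\]
Since $4/(2+\delta)^2 < 1$, choosing $k = k(n) = \lceil C\log n \rceil$ for a sufficiently large constant $C = C(\delta)$ drives the right-hand side to $0$ as $n\to\infty$, which is the claim.

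The main obstacle is that the error term $O_k(n^{-1})$ in the moment expansion must be controlled uniformly in $k$ as $k$ grows with $n$. Combinatorially, the leading term $nC_k$ arises from closed walks visiting exactly $k+1$ distinct vertices, which are in bijection with non-crossing pair partitions of $[2k]$; walks visiting fewer distinct vertices contribute at most $O(n^k)$ with a combinatorial prefactor that is at worst $k^{O(k)}$. For $k = O(\log n)$ this error remains negligible against the $n\cdot C_k$ main term, but the bookkeeping must be done carefully. An alternative route that bypasses this issue altogether is to combine the expectation bound $\E[\|J_n\|] \to 2$ (which follows from the moment method for a slowly growing $k$) with Gaussian concentration of measure: $\|J_n\|$ is a $1/\sqrt{n}$-Lipschitz function of the i.i.d.\ standard Gaussian entries above the diagonal, and therefore concentrates exponentially around its expectation, yielding the desired bound immediately.
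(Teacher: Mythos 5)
Your proposal is correct as a sketch, and it takes a genuinely different route from the paper --- in fact a more rigorous one. The paper derives the corollary from Wigner's semicircle law via a test-function argument: it lets $g$ be a continuous bump equal to $1$ on $[-2,2]$ and $0$ outside $[-2-\delta/2,2+\delta/2]$, and then asserts $\limsup_n \Pr[\mathcal{E}_n] \le \limsup_n \int (1-g)\,d\mu_{M_n}$ and invokes weak convergence of the empirical spectral distribution. But this inequality does not hold as written: the event $\mathcal{E}_n$ (some eigenvalue has modulus $\ge 2+\delta$) only forces $\int (1-g)\,d\mu_{M_n} \ge 1/n$, so Markov gives $\Pr[\mathcal{E}_n] \le n\,\E[\int (1-g)\,d\mu_{M_n}]$, and the stated semicircle law, being a rate-free weak-convergence statement, cannot absorb the extra factor of $n$. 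This is precisely the point you flag at the outset --- weak convergence says nothing about a vanishing fraction of outlier eigenvalues --- so the paper's own proof has a genuine gap even though the corollary itself is a well-known fact (Füredi--Komlós, Bai--Yin).

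Your moment-method argument (bounding $\E[\operatorname{tr}(J_n^{2k})]$ by $nC_k(1+o(1))$, applying Markov, and choosing $k = \Theta(\log n)$) is the standard correct route, and you correctly identify the one real subtlety: the combinatorial error term must be controlled uniformly as $k$ grows with $n$, which is exactly the content of the Füredi--Komlós analysis. Your alternative route --- Gaussian concentration (Borell--TIS) for the $O(1/\sqrt{n})$-Lipschitz functional $\|J_n\|$, combined with $\limsup_n \E[\|J_n\|] \le 2$ --- cleanly sidesteps the delicate bookkeeping, though the expectation bound itself still needs either the moment method or a Gaussian comparison argument (Slepian/Gordon) as input. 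Either of your two routes, filled in with the cited standard estimates, gives a complete and correct proof, whereas the paper's deduction from the semicircle law alone does not.
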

Here, $\|J_{n}\|$ denotes the operator norm of the matrix $J_{n}$. 
\begin{proof}
Since $J_n$ is a real symmetric matrix, $\|J\|_n = \max_{i\in [n]}\{|\lambda_i(J_n)|\}$. Hence, it suffices to show that the probability of $J_n$ having an eigenvalue outside the interval $(-2-\delta, 2+\delta)$ goes to $0$ as $n\to \infty$. Denote this event by $\mathcal{E}_{n}$, and let $g:\R\to\R$ be the piecewise linear function which is equal to $1$ on the interval $[-2,2]$ and is equal to $0$ on $(-\infty,-2-\delta/2]\cup [2+\delta/2,\infty)$. Then, since $J_n\sim M_{n,\text{SK}}/\sqrt{n}$ by definition, it follows that:
\begin{align*}
\limsup_{n\to\infty}\Pr[\mathcal{E}_{n}] & \leq\limsup_{n\to\infty}\int_{\R}(1-g)d\mu_{M_{n,\text{SK}}}\\
 & =1-\limsup_{n\to\infty}\int_{\R}gd\mu_{M_{n,\text{SK}}}\\
 & =1-\int_{\R}gd\mu+\limsup_{n\to\infty}\left(\int_{\R}gd\mu-\int_{\R}gd\mu_{M_{n,\text{SK}}}\right)\\
 & =\limsup_{n\to\infty}\left(\int_{\R}gd\mu-\int_{\R}gd\mu_{M_{n,\text{SK}}}\right)\\
 & =0,
\end{align*}
where the fourth line uses that $g$ is identically one on the support of $\mu$, and the last line uses Wigner's semicircle law. 
\end{proof}

\end{document}